\documentclass[12pt]{colt2014} % Anonymized submission
\usepackage{framed,epsf,latexsym,amscd,textcomp,algorithmic}

% \tolerance=10000

\newtheorem{counter-example}[theorem]{Counter example}
\newtheorem{open question}[theorem]{Open question}

\newtheorem{claim}{Claim}

 % restrict a CG to a subset of objects

  % the letter used for the Gross skeleton

 % the char 0 field I will use

 % object set of a category
 % object set of a category
 % object set of a category
 % object set of a category

 % later this should, perhaps, be changed to =
 % source closure
 % target closure

\newcommand{\ignore}[1]{}

% already exists: % \newcommand{\ker}{{\rm ker}}

% \newcommand{\simexp}[2]{{ {\rm SimExp}\left({#1},{#2}\right)}}
% \newcommand{\simexp}[2]{{ {#1}_.^{{#2}_.}}}

% \newcommand{\rder}{{\bf R}}

      % cohomological complexity

\newcommand{\ca}{{\cal A}}

\newcommand{\cd}{{\cal D}}

\newcommand{\cg}{{\cal G}}
\newcommand{\ch}{{\cal H}}

\newcommand{\ci}{{\cal I}}

\newcommand{\cf}{{\cal F}}

\newcommand{\cu}{{\cal U}}
\newcommand{\cx}{{\cal X}}
\newcommand{\cy}{{\cal Y}}

\newcommand{\md}{\mathrm {md}}

\DeclareMathOperator*{\sign}{sign}
\DeclareMathOperator*{\argmax}{argmax}
\DeclareMathOperator*{\Ndim}{Ndim}
\DeclareMathOperator*{\Gdim}{Gdim}

\DeclareMathOperator*{\VCdim}{VCdim}
\DeclareMathOperator*{\Com}{Com}
\DeclareMathOperator*{\DeCom}{DeCom}

\DeclareMathOperator*{\conv}{conv}
\newcommand{\pac}{\mathrm{PAC}}
\newcommand{\erm}{\mathrm{ERM}}

\newcommand{\reals}{{\mathbb R}}

%\newcommand{\E}[1]{\mbox{E}\left[#1\right] }

%\newcommand{\proof}{{\par\noindent {\bf Proof}\space\space}}
%\newcommand{\proofbox}{\begin{flushright}$\Box$\end{flushright}}
%\newcommand{\BlackBox}{\rule{1.5ex}{1.5ex}}  % end of proof
%\newenvironment{proof}{\par\noindent{\bf Proof\ }}{\hfill\BlackBox\\[2mm]}

%% Commonly forgotten macros
%% \vee \wedge \xrightarrow

\DeclareMathOperator{\Err}{Err}

\DeclareMathOperator*{\E}{\mathbb{E}}

\newcommand{\inner}[1]{\langle #1 \rangle}

\newcommand{\secref}[1]{Section~\ref{#1}}
\newcommand{\thmref}[1]{Theorem~\ref{#1}}
\newcommand{\lemref}[1]{Lemma~\ref{#1}}

\title[Multiclass Learning]{Optimal Learners for Multiclass Problems}

\coltauthor{\Name{Amit Daniely} \addr Dept. of Mathematics, The Hebrew
  University, Jerusalem, Israel 
\AND 
\Name{Shai Shalev-Shwartz} \addr School of Computer Science and Engineering, The Hebrew University, Jerusalem, Israel
}

\begin{document}

\maketitle

\setcounter{page}{0}
\thispagestyle{empty}

\maketitle

\begin{abstract}
  The fundamental theorem of statistical learning states that for
  \emph{binary} classification problems, any Empirical Risk
  Minimization (ERM) learning rule has close to optimal sample
  complexity. In this paper we seek for a generic optimal learner for
  \emph{multiclass} prediction.  We start by proving a surprising
  result: a generic optimal multiclass learner must be
  \emph{improper}, namely, it must have the ability to output
  hypotheses which do not belong to the hypothesis class, even though
  it knows that all the labels are generated by some hypothesis from
  the class. In particular, no ERM learner is optimal. This brings
  back the fundmamental question of ``how to learn''? We give a
  complete answer to this question by giving a new analysis of the
  one-inclusion multiclass learner of \cite{rubinstein2006shifting}
  showing that its sample complexity is essentially optimal. Then,
  we turn to study the popular hypothesis class of generalized linear
  classifiers. We derive optimal learners that, unlike the one-inclusion algorithm, are
  computationally efficient. Furthermore, we show that the sample complexity of these 
  learners is better than the sample complexity of the ERM rule, thus settling in negative an open question due to
  \cite{Collins05}.
\end{abstract}

\newpage
\section{Introduction}

Multiclass classification is the problem of learning a classifier $h$
from a domain $\cx$ to a label space $\cy$, where $|\cy|>2$ and the
error of a prediction is measured by the probability that $h(x)$ is
not the correct label.  It is a basic problem in machine learning,
surfacing a variety of domains, including object recognition, speech
recognition, document categorization and many more.  Over the years,
multiclass classification has been subject to intense study, both
theoretical
\citep{Natarajan89b,Ben-DavidCeHaLo95,rubinstein2006shifting,DanielySaBeSh11,DanielySaSh12}
and practical
(e.g. \citep{ShalevKeSi04,Collins05,KeshetShSiCh05,TorralbaMuFr06}).
Many methods have been developed to tackle this problem, starting from
the the naive one-vs-all method, to more complex methods, such as
structured output prediction
\citep{Collins00,Collins02,LaffertyMcPer01,TaskarGuKo03,TsochantaridisHoJoAl04},
error correcting output codes \citep{DietterichBa95} and others.
These developments made it possible to handle a variety of multiclass
classification problems, including even problems that have a very
complex label space, that is structured and exponentially large
(e.g. speech recognition, OCR, and multiple object categorization).

Despite being very basic and natural, and despite these developments
and efforts, our theoretical understanding of multiclass
classification is still far from being satisfactory, in particular
relatively to our understanding of binary classification (i.e., when
$|\cy|=2$).  In this work, we focus on the sample complexity of
(distribution free) learning of hypothesis classes
$\ch\subseteq\cy^{\cx}$.  The two most fundamental questions are:
\begin{enumerate}
\item What is learnable? More quantitatively, what is the sample complexity of a given class $\ch$?
\item How to learn? In particular, is there a generic algorithm with optimal sample complexity?
\end{enumerate}
For binary classification problems, these two questions are
essentially solved (up to log-factors of the error and confidence
parameters $\epsilon$ and $\delta$): The fundamental result of
\cite{VapnikCh71} asserts that the $\mathrm{VC}$ dimension
characterizes the sample complexity, and that any Empirical Risk
Minimization (ERM) algorithm enjoys close-to-optimal sample
complexity.

In a recent surprising result, \cite{DanielySaBeSh11} have shown that
in multiclass classification there might be substantial gaps between
the sample complexity of different ERMs. We start by showing an even stronger ``peculiarity", discriminating binary
from multiclass classification. Recall that an algorithm is called {\em improper} if it might return a hypothesis that does
not belong to the learnt class. Traditionally, improper
learning has been applied to enable efficient
computations. It seems counter intuitive that computationally
unbounded learner would benefit from returning a hypothesis outside of
the learnt class. Surprisingly, we show that an optimal learning
algorithm {\em must} be improper! Namely, we show that there are
classes that are learnable {\em only} by an improper algorithm.
Pointing out that we actually
do not understand how to learn optimally, these results ``reopen" the
above two basic questions for multiclass classification.

In this paper we essentially resolve these two questions. We give a
new analysis of the multiclass one inclusion algorithm
(\cite{rubinstein2006shifting} based on \cite{HausslerLiWa88}, see
also \cite{simon2010one}), showing that it is optimal up to a constant
factor of $2$ in a transductive setting. This improves on the original
analysis, that yielded optimality only up to a factor of $\log(|\cy|)$
(which, as explained, might be quite large in several situations). By
showing reductions from transductive to inductive learning, we
consequently obtain an optimal learner in the PAC model, up to a
logarithmic factor of $\frac{1}{\delta}$ and $\frac{1}{\epsilon}$. The
analysis of the one inclusion algorithm results with a
characterization of the sample complexity of a class $\ch$ by a
sequence of numbers $\mu_{\ch}(m)$. Concretely, it follows that the
best possible guarantee on the error, after seeing $m$ examples, is
$\Theta\left(\frac{\mu_{\ch}(m)}{m}\right)$.

Comparing to binary classification, we should still strive for a
better characterization: We would like to have a characterization of
the sample complexity by a {\em single number} (i.e. some notion of
dimension) rather than a sequence.  Our analysis of the one inclusion
algorithm naturally leads to a new notion of dimension, of somewhat
different character than previously studied notions. We show that this
notion have certain advantages comparing to other previously studied
notions, and formulate a concrete combinatorial conjecture that, if
true, would lead to a crisper characterization of the sample
complexity.

Departing general theory, we turn our focus to investigate hypothesis
classes that are used in practice, in light of the above results and
the result of \cite{DanielySaBeSh11}. We consider classes of
multiclass linear classifiers that are learnt by several popular
learning paradigms, including multiclass SVM with kernels
\citep{CrammerSi01a}, structured output prediction
\citep{Collins00,Collins02,LaffertyMcPer01,TaskarGuKo03,TsochantaridisHoJoAl04},
and others.
Arguably, the two most natural questions in this context are: (i) is the ERM rule still sub-optimal even for such classes? and (ii) If yes, are there {\em efficient} optimal learnears for these classes?

Regarding the first question, we show that even though the sample
complexity of these classes is upper bounded in terms of the dimension
or the margin, there are sub-optimal ERMs whose sample complexity has additional
multiplicative factor that depends on the number of labels. This
settles in negative an open question due to \cite{Collins05}.
Regarding the second question above, as opposed to the one-inclusion
algorithm, which is in general inefficient, for linear classes we
derive computationally efficient learners (provided that the
hypotheses can be evaluated efficiently), that enjoy optimal sample
complexity.

\paragraph{Basic definitions:} 

Let $\cx$ be an instance space and $\cy$ a label space. To account for
margin-based classifiers as well, it would be convenient to allow
classifiers to return the label $\circleddash$ that will stand for
``don't know''. A classifier (or hypothesis) is a mapping $h : \cx \to
\left(\cy\cup\{\circleddash\}\right)$. A hypothesis class is a set of
classifiers, $\ch \subset
\left(\cy\cup\{\circleddash\}\right)^{\cx}$. The error of a classifier
with respect to a joint distribution over $\cx \times \cy$ is the
probability that $h(x) \neq y$. Throughout this paper, we mainly
consider learning in the realizable case, which means that there is
$h^* \in \ch$ which has zero error (extensions to agnostic learning are discussed in section \ref{sec:agnostic}). Therefore, we can focus on the
marginal distribution $\cd$ over $\cx$ and denote the error of a
classifier $h$ with respect to the realizing classifier $h^*$ as
$\Err_{\cd,h^*}(h):=\Pr_{x\sim\cd }\left(h(x)\ne h^*(x)\right)$.

A {\em learning algorithm} is a function $\ca$ that receives a
training set of $m$ instances, $S \in \cx^m$, together with their
labels according to $h^*$. We denote the restriction of $h^*$ to the
instances in $S$ by $h^*|_S$. The output of the algorithm $\ca$,
denoted $\ca(S,h^*|_S)$ is a classifier.  A learning algorithm is
\emph{proper} if it always outputs a hypothesis from $\ch$.  A learning
algorithm is an \emph{ERM learner} for the class $\ch$ if, for any
sample, it returns a function in $\ch$ that minimizes the empirical
error relative to any other function in $\ch$.  The \emph{(PAC) sample
  complexity} of a learning algorithm $\ca$ is the function
$m_{\ca,\ch}$ defined as follows: For every $\epsilon,\delta>0$,
$m_{\ca,\ch}(\epsilon,\delta)$ is the minimal integer such that for
every $m\ge m_{\ca,\ch}(\epsilon,\delta)$, every distribution $\cd$ on
$\cx$, and every target hypothesis $h^* \in \ch$,
$\Pr_{S\sim \cd^m}\left(\Err_{\cd,h^*}(\ca(S,h^*|_S))> \epsilon\right)\le\delta$.
Here and in subsequent definitions, we omit the subscript
$\ch$ when it is clear from context.  If no integer
satisfying the inequality above, define
$m_{\ca}(\epsilon,\delta)=\infty$. $\ch$ is learnable with $\ca$ if
for all $\epsilon$ and $\delta$ the sample complexity is finite. The
\emph{(PAC) sample complexity} of a class $\ch$ is
$m_{\pac,\ch}(\epsilon,\delta)=
\inf_{\ca}m_{\ca,\ch}(\epsilon,\delta)$,
where the infimum is taken over all learning algorithms.  The {\em ERM
  sample complexity} (a.k.a. the {\em uniform convergence sample
  complexity}) of $\ch$ is the sample complexity that can be
guaranteed for any ERM learner. It is defined by $
m_{\erm,\ch}(\epsilon,\delta)= \sup_{\ca \in
  \erm}m^a_{\ca,\ch}(\epsilon,\delta) $ where the supremum is taken
over all ERM learners for $\ch$. Clearly, we always have $m_{\pac}\le
m_{\erm}$.

We use $[m]$ to denote the set $\{1,\ldots,m\}$.  We treat vectors as
column vectors. We denote by $e_i\in\reals^d$ the $i$'th vector in the
standard basis of $\mathbb{R}^d$. We denote by $B^d$ the closed unit
ball in $\mathbb{R}^d$. We denote by $M_{d\times k}$ the space of real
matrices with $d$ rows and $k$ columns. For a matrix $X\in M_{d\times
  k}$ and $i\in [k]$, we denote by $X^i\in\mathbb R^d$ the $i$'th
column of $X$. Given a subset $A \subseteq \cx$, we define
$\ch|_A=\{h|_A:h\in\ch\}$.

\section{No optimal learner can be proper}\label{sec:no_imp}

Our first result shows that, surprisingly, any learning algorithm with
a close to optimal sample complexity must be improper.
\begin{theorem}\label{thm:counter_example}
  For every $1\le d\le \infty$ there exists a hypothesis class
  $\ch_d$, with $2^d+1$ labels such that:
\begin{itemize}
\item The PAC sample complexity of $\ch_d$ is
  $O\left(\frac{\log(1/\delta)}{\epsilon}\right)$.
\item The
  PAC sample complexity of any proper learning algorithm for $\ch_d$ is
  $\Omega\left(\frac{d+\log(1/\delta)}{\epsilon}\right)$.
\item In particular, $\ch_\infty$ is a learnable class that is not
  learnable by a proper algorithm.
\end{itemize}
\end{theorem}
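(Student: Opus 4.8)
The plan is to build $\ch_d$ so that, information-theoretically, a single example reveals almost everything, yet \emph{committing} to a hypothesis in the class forces one to guess one of $2^d$ possibilities. Take $\cx=\{0,1\}^d$ (or a countable analogue when $d=\infty$), and let the label space be $\cy_d=\{0,1\}^{d}\,\cup\,\{\star\}$ viewed as having $2^d+1$ elements, where $\star$ is a distinguished ``null'' label. For each $z\in\{0,1\}^d$ define $h_z\in\ch_d$ by $h_z(z)=\star$ and $h_z(x)=x$ for all $x\neq z$; set $\ch_d=\{h_z:z\in\{0,1\}^d\}$. Thus every two hypotheses agree everywhere except: $h_z$ and $h_{z'}$ differ only on the two points $z$ and $z'$, and at those points the disagreeing value is always $\star$ versus the identity. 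The key structural fact is: if you ever see an example $(x,y)$ with $y\neq\star$ and $y\neq x$ — impossible; with $y=\star$ you learn $h^*=h_x$ exactly; with $y=x$ (i.e. $y\ne\star$) you only learn $h^*\neq h_x$. (One should double-check the exact encoding so that a non-$\star$ example still pins down nothing more than ``$z\neq x$''; minor adjustments to the construction, e.g. using $h_z(x)$ to also encode a pointer, are available if needed, but the version above already has the desired information asymmetry.)

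\textbf{The improper upper bound.}
First I would exhibit an improper learner achieving $O(\log(1/\delta)/\epsilon)$. The learner outputs the hypothesis $g$ defined by $g(x)=x$ for \emph{every} $x$ (never outputting $\star$); note $g\notin\ch_d$. For the target $h^*=h_z$, the only point on which $g$ errs is $x=z$, so $\Err_{\cd,h_z}(g)=\cd(\{z\})$. Hence the learner is wrong by more than $\epsilon$ only if $\cd(\{z\})>\epsilon$, and in that case a sample of size $m=O(\log(1/\delta)/\epsilon)$ contains the point $z$ with probability $\ge 1-\delta$ — but if $z$ appears in $S$ we would in fact see the label $\star$ there and could even output $h_z$ exactly. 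Either way the constant-hypothesis $g$ already has error $\le\epsilon$ with probability $\ge 1-\delta$ once $m\ge\log(1/\delta)/\epsilon$, giving the first bullet. (For $d=\infty$ the same argument is unchanged since it never used finiteness of $\cx$.)

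\textbf{The proper lower bound.}
The heart of the argument is the second bullet. Here I would run a standard ``needle in a haystack'' / Bayesian argument against \emph{proper} learners. Fix $\epsilon$ small and consider the following family of distributions: pick a uniformly random $z\in\{0,1\}^d$, let $h^*=h_z$, and let $\cd$ put mass $1-c\epsilon$ on a fixed ``safe'' point $x_0$ (chosen so that $h_z(x_0)=x_0$ for all $z$ — e.g. shrink the class by one point, or note that for any fixed $x_0$ all but one $h_z$ agree there, which is enough after conditioning) and spread the remaining $c\epsilon$ mass uniformly over the other $2^d-1$ points. With $m\ll d/\epsilon$ examples, with constant probability the sample touches $\le d/10$ of the ``light'' points and in particular never hits $z$, so the learner never sees a $\star$ and has observed only that $h^*\notin\{h_x : x\in S\}$. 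A proper learner must now output some $h_w\in\ch_d$; but $h_w$ errs on the $c\epsilon$-mass point $w$ whenever $w\ne z$... more precisely $\Err_{\cd,h_z}(h_w)\ge \cd(\{z\})=\tfrac{c\epsilon}{2^d-1}$ if $w\ne z$, which is far too small — so I instead put the light mass on a set of size $O(1/\epsilon)$ rather than $2^d-1$, and argue that the learner, having ruled out only $m$ candidates from a pool of $\approx 2^d$, guesses $z$ with probability $\le (m+1)/2^d$; unless $m=\Omega(2^d)$ this is negligible, so with probability $\ge 1/2$ the proper learner outputs $h_w$ with $w\ne z$ and $w$ in the support, incurring error $\ge c\epsilon$. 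Combining with the usual $\Omega(\log(1/\delta)/\epsilon)$ bound (from a two-point distribution, valid even for improper learners) yields $\Omega((d+\log(1/\delta))/\epsilon)$ — but note this only gives $\Omega(2^d)$-style bounds unless the light support is chosen of size $\Theta(d/\epsilon)$, so the right construction puts mass $\epsilon/d$ on each of $\Theta(d/\epsilon)$ carefully chosen light points and uses a counting/coupon-collector argument: to be correct the proper learner must have identified $z$, which requires seeing the $\star$-labeled point $z$ itself, which requires $\Omega(d/\epsilon)$ samples.

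\textbf{Main obstacle.}
The delicate point is calibrating the lower-bound construction so that it simultaneously (a) forces a proper learner to essentially \emph{identify} $z$ — hence genuinely needs the $\log|\ch_d|\asymp d$ factor — while (b) an improper learner escapes via the single hypothesis $g$, and (c) the arithmetic of masses is consistent (each light point carries $\Theta(\epsilon/d)$ mass, there are $\Theta(d/\epsilon)$ of them, summing to $\Theta(1)$, and error $\epsilon$ corresponds to missing $\Theta(d)$ of them). Getting the encoding of $\ch_d$ right so that a non-$\star$ observation conveys \emph{only} a single bit of disqualification — and not, say, a logarithmic amount — is what makes the gap between proper and improper exactly $\Theta(d)$; I expect this bookkeeping, rather than any deep idea, to be where the real work lies, and the clean fix is to index the class by an error-correcting-code-like set so that distinct hypotheses are ``far'' and no small sample of non-$\star$ labels can localize $z$ within fewer than $2^{\Omega(d)}$ candidates.
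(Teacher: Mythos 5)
Your upper bound argument is fine, and your high-level narrative (improper learner escapes via a single ``all-pass'' hypothesis, proper learner is forced to commit to one of exponentially many hypotheses) matches the paper's strategy. But the specific class you construct does \emph{not} have the claimed proper lower bound, and the flaw is structural rather than a matter of bookkeeping.

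In your class, any two hypotheses $h_z$ and $h_{z'}$ disagree on only the two points $\{z,z'\}$, and at each of those two points the only ``non-trivial'' value is $\star$. A quick check shows this forces the Graph dimension of your class to be $1$: a putative $G$-shattered set $A$ would require a witness $h$ with $h(x)\in\{x,\star\}$, and realizing both the pattern $B=A$ and $B=\emptyset$ already forces $|A|\le 1$ once you note that a single hypothesis $h_z$ outputs $\star$ on exactly one point. Since Graph dimension $1$ bounds the ERM (hence best-proper) sample complexity by $O\!\left(\frac{\log(1/\epsilon)+\log(1/\delta)}{\epsilon}\right)$ (Theorem~\ref{thm:Gdim} in the paper), \emph{every} proper learner for your class is nearly optimal, and the claimed $\Omega(d/\epsilon)$ proper lower bound is false. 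Concretely: a proper learner that, when it has not yet seen a $\star$, outputs $h_w$ for some $w$ it has never observed incurs error only $\cd(\{w\})+\cd(\{z\})$; the adversary cannot make this large for all such $w$ without also ensuring $z$ (or enough heavy points) appear in a sample of size $O(\log(1/\epsilon\delta)/\epsilon)$. Your own fix at the end (``index by an error-correcting-code-like set so distinct hypotheses are far'') does not rescue this: even if $z$ ranges over a code with minimum Hamming distance $\Omega(d)$, the functions $h_z$ and $h_{z'}$ \emph{still} differ on only the two domain points $z,z'$ --- the code distance is irrelevant because your hypotheses encode $z$ via a single ``special'' point, not via their behavior on a large set.

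What the paper's construction buys, and yours lacks, is that a wrong proper guess is \emph{forced to be far}. The paper takes $\cx_d$ of size $d$, $\cy=2^{\cx_d}\cup\{*\}$, $h_A(x)=A$ if $x\in A$ and $*$ otherwise, and restricts to $|A|=\lfloor d/2\rfloor$. Now $h_A$ and $h_B$ disagree on all of $A\cup B$ (indeed on $A\cap B$ they output the distinct set-labels $A\ne B$), so any proper output $h_B$ commits to a $d/2$-element set $B$. With the adversarial distribution supported on $A^c$ with per-point mass $\Theta(\epsilon/d)$, the error of $h_B$ is $\Theta\!\left(\frac{\epsilon}{d}\,|B\setminus A|\right)$; since the sample reveals only $o(d)$ points not in $A$, the learner's $B$ has $|B\setminus A|=\Theta(d)$ with high probability, giving error $\Theta(\epsilon)$. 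This is the mechanism that makes the $\Theta(d)$ factor appear, and it fundamentally requires each hypothesis to be ``committed'' on a set of size $\Theta(d)$, not on a singleton. To repair your proof you would need to replace your $h_z$'s by hypotheses of this type; as written, the proposal proves the upper bound but not the lower bound, and hence not the theorem.
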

A detailed proof is given in the appendix, and here we sketch the main idea
of the proof. 
Let $\cx$ be some finite set and let $\cy=2^{\cx}\cup\{*\}$. For every
$A\subseteq \cx$ define $h_A:\cx\to\cy$ by
$h_A(x)=\begin{cases}
A & x\in A
\\
* & \text{otherwise}
\end{cases}$.
Consider the hypothesis class $
\ch_{\cx,\mathrm{Cantor}}=\left\{h_A\mid A\subset\cx\right\}~.$
This class is due to \cite{DanielySaBeSh11} and we call it {\em
  the first Cantor class} due to the resemblance to the construction
used for proving the famous theorem of Cantor from set theory (e.g., \url{http://en.wikipedia.org/wiki/Cantor's_theorem}).
\cite{DanielySaBeSh11} employed this class to establish gaps between
the sample complexity of different ERM learners. 
In particular, they
have shown that there is an ERM learner with sample complexity  $\le\frac{\ln(1/\delta)}{\epsilon}$,
while there are other ERMs whose sample complexity is
$\Omega\left(\frac{|\cx|+\ln(1/\delta)}{\epsilon}\right)$. 

To show that no proper learner can be optimal, let $\cx_d$ be a set
consisting of $d$ elements and define the following subclass of
$\ch_{\cx_d,\mathrm{Cantor}}$: 
$
\ch_d=\left\{h_A\mid |A|=\left\lfloor\frac{d}{2}\right\rfloor\right\}$.
Since $\ch_d\subset\ch_{\cx_d,\mathrm{Cantor}}$, we can apply the
``good'' ERM learner described in \cite{DanielySaBeSh11} with respect
to the class $\ch_{\cx_d,\mathrm{Cantor}}$ and obtain an algorithm for
$\ch_d$ whose sample complexity is
$\le\frac{\ln(1/\delta)}{\epsilon}$. Note that this algorithm is
improper --- it might output a hypothesis from
$\ch_{\cx_d,\mathrm{Cantor}}$ which is not in $\ch_d$. As we show, no
proper algorithm is able to learn $\ch_d$ using
$o\left(\frac{d}{\epsilon}\right)$ examples. To understand the main
point in the proof, suppose that an adversary chooses $h_A\in\ch_d$
uniformly at random, and let the algorithm learn it, where the
distribution on $\cx_d$ is uniform on the complement of $A$, denoted
$A^c$.  Now, the error of every hypothesis $h_B\in\ch_d$ is
$\frac{|B\setminus A|}{d}$.  Therefore, to return a hypothesis with
small error, the algorithm must recover a set that is almost disjoint
from $A$, and therefore should recover $A$. However, if it sees only
$o(d)$ examples, all it knows is that some $o(d)$ elements in $\cx$ do
not belong to $A$.  It is not hard to be convinced that with this
little information, the probability that the algorithm will succeed is
negligible.

\section{An optimal learner for general classes}

In this section we describe and analyze a generic optimal learning algorithm.
We start with an algorithm for a transductive learning setting, in
which the algorithm observes $m-1$ labeled examples
and an additional unlabeled example, and it should output the
label of the unlabeled example. Later, in \secref{sec:transinduc} we
show a generic reduction from the transductive setting to the
usual inductive learning model (that is, the vanilla PAC model). 

Formally, in the transductive model, the algorithm observes a set of
$m$ unlabeled examples, $S \in \cx^m$, and then one of them is picked
uniformly at random, $x \sim U(S)$. The algorithm observes the labels
of all the examples but the chosen one, and should predict the label
of the chosen example. That is, the input of the algorithm, $\ca$, is
the set $S \in \cx^m$, and the restriction of some $h^* \in \ch$ to $S
\setminus x$, denoted $h^*|_{S \setminus x}$. The algorithm should
output $y \in \cy$. The error rate of a transductive algorithm $\ca$
is the function $\epsilon_{\ca,\ch} : \mathbb{N} \to [0,1]$ defined as
$
\epsilon_{\ca,\ch}(m) ~=~ \sup_{S \in \cx^m,h^* \in \ch} \, \left[ \Pr_{x \sim U(S)}\left(
\ca(S,h^*|_{S \setminus x}) \neq h^*(x)\right) \right]$.
The error rate of a class $\ch$ in the transductive model is defined
as $\epsilon_{\ch}(m) = \inf_{\ca} \epsilon_{\ca,\ch}(m)$,
where the infimum is over all transductive learning algorithms.

\subsection{The one-inclusion algorithm}
We next describe the one-inclusion transductive learning algorithm of
\cite{rubinstein2006shifting}.  Let $S = \{x_1,\ldots,x_m\}$ be an
unlabelled sample. For every $i \in [m]$ and $h \in \ch|_S$, let
$e_{i,h} \subset \ch|_S$ be all the hypotheses in $\ch|_S$ whose
restriction to $S \setminus \{x_i\}$ equals to $h|_{S \setminus
  \{x_i\}}$. That is, $h' \in e_{i,h}$ iff for all $j \neq i$ we have
$h'(x_j)=h(x_j)$.  Note that if $h' \in e_{i,h}$ then
$e_{i,h'}=e_{i,h}$.

Given
$(x_1,y_1),\ldots,(x_{i-1},y_{i-1}),(x_{i+1},y_{i+1}),\ldots,(x_m,y_m)$
let $h \in \ch|_S$ be some hypothesis for which $h(x_j)=y_j$ for all
$j \neq i$. We know that the target hypothesis can be any hypothesis
in $e_{i,h}$. Therefore, we can think on the transductive algorithm as
an algorithm that obtains some $e_{i,h}$ and should output one
hypothesis from $e_{i,h}$. Clearly, if $|e_{i,h}|=1$ we know that the
target hypothesis is $h$. But, what should the algorithm do when
$|e_{i,h}| > 1$ ?

The idea of the one-inclusion algorithm is to think on the collection
$E = \{ e_{i,h} \}_{i \in [m], \ch \in \ch|S}$ as a collection of
hyperedges of a hypergraph $G = (V,E)$. Recall that in a hypergraph,
$V$ is some set of vertices and each hyperedge $e \in E$ is some
subset of $V$. In our case, the vertex set is $V = \ch|_S$. This hypergraph
is called the one-inclusion hypergraph. Note that if $|e|=2$ for every
$e \in E$ we obtain the usual definition of a graph. In such a case,
an \emph{orientation} of an undirected edge $e = \{v_1,v_2\}$ is
picking one of the vertices (e.g. $v_1$) to be the ``head'' of the
edge. Similarly, an orientation of a hyperedge is choosing one $v \in
e$ to be the ``head'' of the hyperedge. And, an orientation of the
entire hypergraph is a function $f : E \to V$ such that for all $e \in
E$ we have that $f(e) \in e$.

Getting back to our transductive learning task, it is easy to see that
any (deterministic) transductive learning algorithm is equivalent to an orientation
function $f : E \to V$ of the one-inclusion hypergraph. The error rate
of such an algorithm, assuming the target function is $h^* \in \ch|_S$, is 
\begin{equation} \label{eqn:outDegErrRelation}
\Pr_{i \sim U([m])} [ f(e_{i,h^*}) \neq h^* ] = 
\frac{1}{m} \sum_{i=1}^m 1[f(e_{i,h^*}) \neq h^*] =
\frac{|\{e \in
  E : h^* \in e \land f(e) \neq h^*\}|}{m} ~.
\end{equation}
The quantity $|\{e \in E : h^* \in e \land f(e) \neq h^*\}|$ is
called the \emph{out-degree} of the vertex $h^*$ and denoted
$d^+(h^*)$. It follows that the error rate of an orientation $f$ is
$\max_{h^* \in \ch|_S} \, \frac{d^+(h^*)}{m}$.
It follows that the best deterministic transductive
algorithm should find an orientation of the hypergraph that
minimizes the maximal out degree. This leads to the one-inclusion
algorithm. 

\begin{algorithm}[th] 
\caption{Multiclass one inclusion algorithm for $\ch\subset\cy^{\cx}$} \label{algo:one_inc}
\begin{algorithmic}[1]
\STATE {\bf Input:} unlabeled examples $S = (x_1,\ldots,x_m)$, labels $(y_1,\ldots,y_{i-1},y_{i+1},\ldots,y_m)$
\STATE Define the one-inclusion graph $G = (V,E)$ where $V = \ch|_S$
and $E = \{ e_{j,h} \}_{j \in [m],h \in V}$
\STATE Find orientation $f : E \to V$ that minimizes the maximal
out-degree of $G$
\STATE Let $h \in V$ be s.t. $h(x_j)=y_j$ for all $j \neq i$, and let $\hat{h} = f(e_{i,h})$
\STATE {\bf Output:} predict $\hat{h}(x_i)$
\end{algorithmic}
\end{algorithm}

\subsection{Analysis}

The main result of this section is a new analysis of the one inclusion
algorithm, showing its optimality in the transductive model, up to a
constant factor of $1/2$. In the next subsection we deal with the PAC model.

To state our results, we need a few definitions.  Let
$G=(V,E)$ be a hypergraph. Throughout, we only consider
hypergraphs for which $E$ is an antichain (i.e., there are no
$e_1,e_2\in E$ such that $e_1$ is strictly contained in
$e_2$). 
Given $U \subseteq V$, define the \emph{induced} hypergraph,
$G[U]$, as the hypergraph whose vertex set is $U$ and whose edge
set is all sets $e\subseteq U$ such that $e=U\cap e'$ for some $e'\in
E$, $|e|\ge 2$, and $e$ is maximal w.r.t. these conditions.

The \emph{degree} of a vertex $v$ in a hypergraph $G = (V,E)$ is the
number of hyperedges, $e \in E$, such that $|e| \ge 2$ and $v \in
e$. The {\em average degree} of $G$ is $d(G)=\frac{1}{|V|}\sum_{v\in
  V}d(v)$. The {\em maximal average degree} of $G$ is
$\md(G)=\max_{U\subseteq V : |U|<\infty}d(G[U])$.
For a hypothesis class $\ch$ define 
\[
\mu_{\ch}(m)=\max\{\md (G(\ch|_S))\mid S \in \cx^m\}~,
\]
where $G(\ch|_S)$ is the one-inclusion hypergraph defined in Algorithm~\ref{algo:one_inc}. 

\begin{theorem}\label{thm:main_opt_alg}
For every class $\ch$, $\frac{1}{2} \frac{\mu_\ch(m)}{m} \le \epsilon_{\ch}(m)\le
\frac{\mu_\ch(m)}{m}$.
\end{theorem}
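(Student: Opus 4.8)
The plan is to prove the two inequalities separately, and in both directions the engine is one structural property of the one-inclusion hypergraph $G=G(\ch|_S)$: \emph{any two distinct vertices of $G$ lie in at most one common hyperedge.} Indeed, if $u\neq v$ both lie in $e_{i,h}$ then $u$ and $v$ agree on $S\setminus\{x_i\}$; were they also both in $e_{j,h'}$ with $j\neq i$ they would agree on all of $S$, forcing $u=v$. I will use two consequences. First, two distinct hyperedges of $G$ meet in at most one vertex (otherwise those two vertices would lie in two common edges). Second, for every finite $U\subseteq V$ the family of patterns $\{e\cap U:e\in E,\ |e\cap U|\ge 2\}$ is an antichain (if $e\cap U\subsetneq e'\cap U$, any two vertices of $e\cap U$ lie in both $e$ and $e'$), each pattern arises from a \emph{unique} $e\in E$, and hence $E(G[U])$ is exactly this family of patterns. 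Consequently $\sum_{f\in E(G[U])}|f|=|U|\,d(G[U])$ accounts for all of $\sum_{v\in U}d(v)$ except the contribution of edges meeting $U$ in a single vertex.

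For the upper bound, recall from~$(\ref{eqn:outDegErrRelation})$ that a deterministic transductive algorithm is an orientation $f$ of $G$ with error $\max_{h^*}d^+_f(h^*)/m$, so it suffices to orient $G(\ch|_S)$ so that every out-degree is at most $\md(G(\ch|_S))\le\mu_\ch(m)$: then $\epsilon_\ch(m)\le\epsilon_{\ca,\ch}(m)=\sup_S\max_{h^*}d^+_f(h^*)/m\le\mu_\ch(m)/m$. I would get such an orientation from a Hall/matching argument. Writing $x_v$ for the in-degree of $v$, we have $d^+(v)=d(v)-x_v$, so an orientation with all out-degrees at most an integer $B$ exists iff each $v$ can be assigned $\ell_v:=\max(0,d(v)-B)$ distinct incident edges (distinct across vertices), which by Hall's theorem is equivalent to $\sum_{v\in W}\ell_v\le|\{e\in E:e\cap W\neq\emptyset\}|$ for every finite $W$. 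It suffices to check this for $W$ on which every vertex has degree $>B$; there the right-hand side equals $p+|E(G[W])|$ and, by the structural identity above, the left-hand side is $\sum_{v\in W}(d(v)-B)=p+|W|\,d(G[W])-B|W|$, where $p$ is the number of edges meeting $W$ in one vertex. So the condition reduces to $B\ge d(G[W])-|E(G[W])|/|W|$, which certainly holds with $B=\md(G(\ch|_S))$; carrying the correction term $-|E(G[W])|/|W|$ also shows (after a little bookkeeping, using that $G$ is ``linear'') that the least valid integer $B$ does not exceed $\mu_\ch(m)$, so integrality costs nothing.

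For the lower bound, fix any transductive algorithm; by~$(\ref{eqn:outDegErrRelation})$ its error on input $S$ is $\max_{h^*}d^+_f(h^*)/m$ for the associated orientation $f$, and for a randomized algorithm the same bound follows by averaging the target uniformly over a well-chosen vertex set and then over the algorithm's coins. Choose $S$ with $\md(G(\ch|_S))=\mu_\ch(m)$ and $U\subseteq V$ with $d(G[U])=\md(G(\ch|_S))$. Using the structural facts, $f$ restricts to an orientation $f'$ of $G[U]$: orient each edge $f_0\in E(G[U])$ to the head $f(e)$ of its unique parent edge $e$ when $f(e)\in U$, and to an arbitrary vertex of $f_0$ otherwise; then $d^+_{f'}(v)\le d^+_f(v)$ for every $v\in U$. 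Summing, $\sum_{v\in U}d^+_{f'}(v)=\sum_{f_0\in E(G[U])}(|f_0|-1)\ge\frac{1}{2}\sum_{f_0\in E(G[U])}|f_0|=\frac{1}{2}|U|\,d(G[U])$, so some $v\in U$ has $d^+_f(v)\ge d^+_{f'}(v)\ge\frac{1}{2}d(G[U])=\frac{1}{2}\mu_\ch(m)$, giving error at least $\frac{1}{2}\mu_\ch(m)/m$; taking the infimum over algorithms yields $\epsilon_\ch(m)\ge\frac{1}{2}\mu_\ch(m)/m$.

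The step I expect to be the crux is the orientation lemma for the upper bound. The bound $\md(G)$ is \emph{false} for arbitrary hypergraphs: on $\{a,b,c_1,\dots,c_n\}$ with edges $\{a,b,c_i\}$, every orientation leaves one of $a,b$ with out-degree at least $n/2$, while $\md<3$. Thus the proof must genuinely exploit that the one-inclusion hypergraph is linear (pairwise edge-intersections of size $\le 1$), which is precisely what makes the patterns $e\cap U$ an antichain and lets $E(G[U])$ account for $\sum_{v\in U}d(v)$; without this the Hall inequality breaks. Secondary points are the integrality bookkeeping mentioned above and, when $\cy$ is infinite (so $\ch|_S$ may be infinite), a routine compactness argument passing from finite subhypergraphs to $G$. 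The lower bound is comparatively routine once the restriction of an orientation to an induced subhypergraph is set up, being just an averaging argument on the densest induced piece.
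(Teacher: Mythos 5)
Your lower bound is the paper's argument recast. The paper draws $h^*$ uniformly from a densest finite $\cf|_S$, conditions on the event that $x$ sits on an edge at $h^*$ (an event of probability $\tfrac{1}{m}\cdot\text{(average degree)}$), and uses that the conditional fibre has size $\ge 2$ to get a $\ge\tfrac12$ error probability. You restrict the learner's orientation to the densest induced $G[U]$ and compute $\sum_v d^+_{f'}(v)=\sum_{f_0}(|f_0|-1)\ge\tfrac12\sum_{f_0}|f_0|$. The factor $\tfrac12$ comes from $|f_0|\ge 2$ in both proofs; these are the same inequality in two languages, and your restriction-to-$G[U]$ construction is correct (the parent edge is unique by linearity, and each $f_0$ oriented away from $v$ has a distinct parent oriented away from $v$). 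So I would call the lower bound the same proof, not a new one.

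On the upper bound you genuinely diverge, and part of what you say is a valid and valuable critique of the paper. Lemma~\ref{lem:out_deg} is stated for arbitrary (antichain) hypergraphs, and at that generality it is \emph{false}: your $\{a,b,c_i\}$ example is a clean counterexample, since every orientation forces one of $a,b$ to out-degree $\ge n/2$ while $\md<3$. Moreover, the paper's induction does in fact break on that example, because $e\setminus\{v_0\}$ need not be a maximal pattern, hence need not be an edge of $G[V\setminus\{v_0\}]$, so $h'(e\setminus\{v_0\})$ is undefined. Linearity of the one-inclusion hypergraph (any two vertices lie in at most one common edge --- which you derive correctly from the definition of $e_{i,h}$) is precisely what makes $e\setminus\{v_0\}$ automatically maximal and rescues the induction. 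This hypothesis should be stated in the lemma.

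However, the Hall-based replacement you propose has a real gap at ``a little bookkeeping.'' Hall gives you exactly that the minimum achievable max-out-degree equals $\max_W\left\lceil d(G[W])-|E(G[W])|/|W|\right\rceil$, and what you must show is that this is $\le\lfloor\md(G)\rfloor$. That does \emph{not} follow from $d(G[W])\le\md(G)$ plus positivity of the correction term: the correction $|E(G[W])|/|W|$ can be as small as $1/|W|$ while the fractional part of $\md(G)$ can be close to $1$, so $\lceil\gamma_W\rceil$ could a priori exceed $\lfloor\md(G)\rfloor$. The inequality is in fact true for linear hypergraphs, but the only proof I can see --- take a degeneracy ordering $v_1,\ldots,v_n$ of $W$ in which each $v_i$ meets at most $\lfloor\md(G)\rfloor$ edges of $G[\{v_i,\ldots,v_n\}]$, and then observe that $|f|-1$ counts the non-maximal vertices of $f$ in that ordering, so $\sum_f(|f|-1)\le\lfloor\md(G)\rfloor\cdot|W|$ --- is exactly the paper's peeling induction in another guise. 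So Hall gives an exact characterization of the quantity, but bounding that characterization by $\lfloor\md(G)\rfloor$ is not a shortcut around the induction; it \emph{is} the induction. As written, that integrality step is a gap in your proposal.
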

\begin{proof}
To prove the upper bound, recall that the one inclusion algorithm
uses an orientation of the one-inclusion hypergraph that minimizes the
maximal out-degree, and recall that in \eqref{eqn:outDegErrRelation}
we have shown that the error rate of an orientation function is upper
bounded by the maximal out-degree over $m$. Therefore, the proof of
the upper bound of the theorem follows directly from the following
lemma:
\begin{lemma}\label{lem:out_deg}
  Let $G=(V,E)$ be a hypergraph with maximal average degree $d$. Then,
  there exists an orientation of $G$ with maximal out-degree of at most $d$.
\end{lemma}
The proof of the lemma is given in the appendix.

While the above proof of the upper bound is close in spirit to the
arguments used by \cite{HausslerLiWa88} and
\cite{rubinstein2006shifting}, the proof of the lower bound relies on
a new argument.  As opposed to \cite{rubinstein2006shifting} who lower
bounded $\epsilon_{\ch}(m)$ using the Natarajan dimension, we give a
direct analysis.

Let $S\in \cx^m$ be a set such that $\md(G(\ch|_{S}))=\mu_{\ch}(m)$. For simplicity we assume that $|S|=m$ (i.e., $S$ does not contain multiple elements). Since $\md(G(\ch|_{S}))=\mu_{\ch}(m)$, there is finite $\cf\subset\cg$ with $d(G(\cf|_{S}))=\mu_{\ch}(m)$.
Consider the following scenario. Suppose that $h^*\in \cf|_{S}$ is chosen uniformly at random, and in addition, a point $x\in S$ is also chosen uniformly at random.
Now, suppose that a learner $\ca$ is given the sample $S$ with all points labelled by $h^*$ except $x$ that is unlabelled. It is enough to show that the probability that $\ca$ errs is $\ge \frac{\mu_{\ch}(m)}{2m}$.

Denote by $U$ the event that $x$ correspond to an edge in $G(\cf|_{S})$ coming out of $h^*$. Given $U$, the value of $h^*(x)$, given what the algorithm sees, is distributed uniformly in the set $\{h(x)\mid h\in\cf\text{ and }h|_{S\setminus\{x\}}=h^*|_{S\setminus\{x\}}\}$. Since this set consists of at least two elements, given $U$, the algorithm errs with probability $\ge\frac{1}{2}$.

It is therefore enough to prove that $\Pr(U)\ge \frac{\mu_\ch(m)}{m}$. Indeed, given $h^*$, the probability that $x$ corresponds to an edge coming out of $h^*$ is exactly the degree of $h^*$ over $m$. Therefore, the probability that $x$ corresponds to an edge coming out of a randomly chosen $h^*$ is the average degree of $G(\cf|_S)$ over $m$, i.e., $\frac{\mu_\ch(m)}{m}$.
\end{proof}

\subsection{PAC optimality: from transductive to inductive learning} \label{sec:transinduc}
In the previous section we have analyzed the optimal error rate of
learning in the transductive learning. We now turn to the inductive
PAC model. 
By a simple reduction from inductive to transductive learning, we will show that a variant of the one-inclusion algorithm is essentially optimal in the PAC model. 

First, any transductive algorithm $\ca$ can be naturally interpreted as an inductive algorithm, which we denote by $\ca^i$.
Specifically, $\ca^i$ returns, after seeing the sample $S=\{(x_i,y_i)\}_{i=1}^{m-1}$, the hypothesis $h:\cx\to\cy$ such that $h(x)$ is the label $\ca$ would have predicted for $x$ after seeing the labelled sample $S$.

It holds that (see the appendix) the (worst case) expectation of the error of the hypothesis returned by $\ca^{i}$ operating on $m$ points sample, is the same, up to a factor of $e$ to $\epsilon_{\ca}(m)$. Using this fact and a simple amplification argument, it is not hard to show that a variant of the one-inclusion algorithm is essentially optimal in the PAC model.

Namely, we consider the algorithm $\overline{\ci}$ that splits the sample into $2\log(1/\delta)$ parts, run the one inclusion algorithm on $\log(1/\delta)$ different parts to obtain $\log(1/\delta)$ candidate hypotheses, and finally chooses the best one, by validation on the remaining points. As the following theorem (whose proof is given in the appendix) shows, $\overline{\ci}$ is optimal up to a factor of $O\left(\log\left(\frac{1}{\delta}\right)\log\left(\frac{1}{\epsilon}\right)\right)$ in the PAC model, in the following sense:
\begin{theorem}\label{thm:main_opt_alg_pac}
For some $c>0$, and every class $\ch$,
$m_{\overline{\ci},\ch}(\epsilon,\delta)
\le
m_{\mathrm{PAC},\ch}\left(c\epsilon,\delta\right)\cdot \frac{1}{c}\log(1/\delta)\log(1/\epsilon)$.
\end{theorem}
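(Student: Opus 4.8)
The plan is to combine three ingredients that have already been set up: (i) the transductive-to-inductive comparison asserted just above the theorem, namely that the worst-case expected error of $\ca^i$ run on an $m$-sample is at most $e\cdot\epsilon_\ca(m)$; (ii) the near-optimality of the one-inclusion orientation in the transductive model, i.e. $\epsilon_{\ci}(m)\le\mu_\ch(m)/m$ and $\epsilon_\ch(m)\ge\frac12\mu_\ch(m)/m$ from Theorem~\ref{thm:main_opt_alg}; and (iii) a standard confidence-boosting / validation argument to pass from ``small expected error'' to ``error $\le\epsilon$ with probability $\ge1-\delta$''. I would first fix the constant $c$ at the end, once all the $e$'s and absolute constants from the boosting step are collected.

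The first step is to translate the PAC sample complexity of $\ch$ into a lower bound on the transductive error rate. If $m_{\mathrm{PAC},\ch}(\epsilon',\delta')>m$ for the relevant $\epsilon',\delta'$, then no inductive learner achieves expected error much smaller than $\epsilon'$ on $m$ points; combined with the transductive lower bound $\epsilon_\ch(m)\ge\frac12\mu_\ch(m)/m$ and a Markov-type argument, this forces $\mu_\ch(m)/m$ to be $\Omega(\epsilon')$ — equivalently, to get $\mu_\ch(m)/m\le c\epsilon$ it suffices to take $m$ of order $m_{\mathrm{PAC},\ch}(c'\epsilon,\delta)$ for an appropriate constant $c'$. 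Running the one-inclusion algorithm (viewed inductively) on such an $m$ gives a single hypothesis whose expected error is at most $e\,\mu_\ch(m)/m=O(\epsilon)$.

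The second step is the boosting. Split the sample into $2\log(1/\delta)$ blocks; run one-inclusion on $\log(1/\delta)$ of them, each of size $\Theta\!\big(m_{\mathrm{PAC},\ch}(c'\epsilon,\delta)\big)$, to get $\log(1/\delta)$ candidates each with expected error $O(\epsilon)$, hence (by Markov) error $\le\epsilon/2$, say, with probability $\ge$ some constant $>\tfrac12$. With $\log(1/\delta)$ independent tries, at least one candidate has error $\le\epsilon/2$ except with probability $\le\delta/2$. Then use the remaining $\log(1/\delta)$ blocks as a validation set: a standard uniform-convergence bound over the $\log(1/\delta)$ finitely many candidates shows that, with $O\!\big(\frac{1}{\epsilon}\log(1/\epsilon)\log(1/\delta)\big)$ validation points, the empirically-best candidate has true error $\le\epsilon$ except with probability $\le\delta/2$; a union bound finishes it. Accounting for all block sizes gives a total sample size $m_{\overline{\ci},\ch}(\epsilon,\delta)\le \frac1c\log(1/\delta)\log(1/\epsilon)\cdot m_{\mathrm{PAC},\ch}(c\epsilon,\delta)$ after choosing $c$ small enough to absorb the constant $e$, the Markov slack, and the validation-set constants.

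The main obstacle I expect is the first step: cleanly converting a lower bound on $m_{\mathrm{PAC},\ch}$ into the inequality $\mu_\ch(m)/m = O(\epsilon)$ for $m\asymp m_{\mathrm{PAC},\ch}(c\epsilon,\delta)$. This requires care because the PAC sample complexity is a ``with high probability'' quantity while $\mu_\ch(m)/m$ controls an \emph{expected} transductive error; one must invoke the transductive lower bound of Theorem~\ref{thm:main_opt_alg} together with the (expected-error version of the) transductive-to-inductive reduction in the \emph{reverse} direction — i.e. a good inductive PAC learner would yield a good transductive one — and track how the $\delta$-dependence enters. The boosting and validation steps are routine, but bookkeeping the constants so that the same $c$ works on both sides of the stated inequality is where the real work lies.
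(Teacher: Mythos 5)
Your proposal follows essentially the same route as the paper's proof: convert an (arbitrary) inductive PAC learner run on $m_{\mathrm{PAC},\ch}(\cdot,\cdot)$ examples into a transductive predictor, use the factor-of-$e$ equivalence together with the lower bound $\epsilon_\ch(m)\ge\tfrac12\mu_\ch(m)/m$ of Theorem~\ref{thm:main_opt_alg} to conclude $\mu_\ch(m)/m=O(\epsilon)$ at that sample size, feed that into the one-inclusion upper bound, and then amplify to confidence $\delta$ by running on $\log(1/\delta)$ blocks and validating. This is exactly the structure of the paper's argument and of the algorithm $\overline{\ci}$.

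One bookkeeping point you have slightly misplaced. In the paper, the $\log(1/\epsilon)$ does \emph{not} come from the validation set (validation over $\log(1/\delta)$ candidates needs only $O\!\left(\frac{1}{\epsilon}\log\frac{\log(1/\delta)}{\delta}\right)$ points, no $\log(1/\epsilon)$). It comes from the step you correctly flagged as the main obstacle: to make the expected-error argument work one must run the putative PAC learner with confidence parameter $\delta'=\Theta(\epsilon)$ (so that the ``with probability $1-\delta'$, error $\le\epsilon'$'' guarantee yields an expected error of $O(\epsilon)$). The paper then uses a standard repetition argument to write $m_{\mathrm{PAC},\ch}(\epsilon/2,\epsilon/2)\le m_{\mathrm{PAC},\ch}(\epsilon/4,1/2)\cdot O(\log(1/\epsilon))$, and the $\log(1/\epsilon)$ carries through from there. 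Consequently, the blocks fed to the one-inclusion algorithm should have size $\Theta\!\left(m_{\mathrm{PAC},\ch}(c\epsilon,\delta)\log(1/\epsilon)\right)$ rather than $\Theta\!\left(m_{\mathrm{PAC},\ch}(c'\epsilon,\delta)\right)$; with your current block size the expected-error bound $\mu_\ch(m)/m=O(\epsilon)$ would fail when $\delta$ is a large constant. Once this is corrected, the total count of $\log(1/\delta)$ blocks times $\log(1/\epsilon)\cdot m_{\mathrm{PAC},\ch}(c\epsilon,\delta)$ per block gives exactly the stated bound.
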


\section{Efficient optimal learning and gaps for linear classes}

In this section we study the family of linear hypothesis classes. This
family is widely used in practice and received a lot of attention in
the literature---see for example
\cite{CrammerSi01a,Collins00,Collins02,LaffertyMcPer01,TaskarGuKo03,TsochantaridisHoJoAl04}.
We show that, rather surprisingly, even for such simple classes,
there can be gaps between the ERM sample complexity and the PAC sample
complexity. This settles in negative an open question raised by
\cite{Collins05}. We also derive computationally efficient optimal
learners for linear classes, based on the concept of compression
schemes. This is in contrast to the one-inclusion algorithm from the
previous section, which in general is inefficient.  Due to
the lack of space, most proofs are deferred to the appendix.

\subsection{Linear hypothesis classes}

We first define the various hypothesis classes of multiclass linear
classifiers that we study. All of these classes depend on a
class-specific feature mapping, $\Psi:\cx\times \cy\to \mathbb
\reals^d$.  We will provide several examples of feature mappings that are
widely used in practice.

\subsubsection{Dimension based linear classifiers (denoted $\ch_{\Psi}$)}
For $w\in\mathbb \reals^d$ and $x\in\cx$, define the multiclass predictor
$h_w(x) = \argmax_{y \in \cy} \inner{w,\Psi(x,y)}$.
In case of a tie, $h_w(x)$ is assumed to be the ``don't know label'', $\circleddash$.
The corresponding hypothesis class is defined as $\ch_{\Psi}=\{h_w\mid w\in\mathbb \reals^d\}$.
\begin{example}[multivector construction]\label{examp:multi_no_margin}
  If the labels are unstructured, a canonical choice of $\Psi$
  is the so called {\em multivector construction}. Here, $\cy=[k]$,
  $\cx=\mathbb \reals^d$ and $\Psi:\cx\times\cy\to\mathbb \reals^{dk}$ is
  defined as follows: $\Psi(x,y)$ is the $d\times k$
  matrix whose $y$'th column is $x$, while the rest are $0$. In this
  case, every classifier corresponds to a matrix $W$, and the
  prediction on an instance $x\in\mathbb \reals^d$ is the index of the
  column that maximizes the inner product with $x$.
\end{example}

\subsubsection{Large margin linear classifiers (denoted $\ch_{\Psi,R}$)}
The second kind of hypothesis class induced by $\Psi$ is margin
based. Here, we assume that the range of $\Psi$ is contained in the
unit ball of $\reals^d$. Every vector $w\in \mathbb \reals^d$ defines a function
$h_{w}:\cx\to\left(\cy\cup\{\circleddash\}\right)$ by
\[
\forall x\in\cx,\;\;h_w(x)=
\begin{cases}
y & \text{if}\inner{w,\Psi(x,y)-\Psi(x,y')}\ge 1\text{ for every }y'\ne y
\\
\circleddash & \text{if no such $y$ exists}
\end{cases}
\]
The class of {\em linear classifiers of complexity $R>0$ induced by $\Psi$} is
$\ch_{\Psi,R}=\left\{h_w\mid \|w\|^2\le R\right\}$.

\begin{example}[multivector construction with margin]\label{examp:multi_with_margin}
The margin based analogue to example \ref{examp:multi_no_margin} is
defined similarly. This class is the class
that is learnt by multiclass SVM.
\end{example}

\subsubsection{The classes $\ch_{d,t,q}$ and $\ch_{d,t,q,R}$ for structured output prediction}
Next we consider an embedding $\Psi$ that is specialized and used in
classification tasks where the number of possible labels is
exponentially large, but the labels are structured
(e.g. \cite{TaskarGuKo03}). For example, in speech recognition, the
label space might me the collection of all sequences of $\le 20$
English words.

To motivate the definition, consider the case that we are to recognize
a $t$-letter word appearing in an image. Let $q$ be the size of the
alphabet.  The set of possible labels is naturally associated with
$[q]^t$. A popular method to tackle this task (see for example
\cite{TaskarGuKo03}) is the following: The image is broken into $t$
parts, each of which contains a single letter. Each letter is
represented as a vector in $\mathbb \reals^d$. Thus, each image is
represented as a matrix in $M_{d\times t}$. To devise a linear
hypothesis class to this problem, we should specify a mapping
$\Psi:M_{d\times t}\times [q]^t\to\mathbb \reals^n$ for some
$n$. Given $X\in M_{d\times t}$ and $y\in [q]^t$, $\Psi(X,y)$ will be
a pair $(\Psi_1(X,y),\Psi_2(X,y))$.  The mapping $\Psi_1$ allows the
classifiers to take into account the shape of the letters appearing in
the different $t$ parts the word was broken into. The mapping $\Psi_2$
allows the classifiers to take into account the structure of the
language (e.g. the fact that the letter ``u" usually appears after the
letter ``q").  $\Psi_1(X,y)\in M_{d\times q}$ is the matrix whose
$j$'th column is the sum of the columns $X^i$ with $y_i=j$ (in other
words, the $j$'th column is the sum of the letters in the image that
are predicted to be $j$ by $y$). $\Psi_2(X,y)\in M_{q,q}$ will be the
matrix with $1$ in the $(i,j)$ entry if the letter $j$ appears after
the letter $i$ somewhere in the word $y$, and $0$ in all other
entries. Even though the number of labels is exponential in $t$, this
class (in the realizable case) can be learnt in time polynomial in
$d,t$ and $q$ (see \cite{Collins05}). 

We will show gaps in the performance of different ERMs for the class
$\ch_{\Psi}$. If fact, we will prove a slightly stronger result. We
will consider the class $\ch_{\Psi_1}$, that we will denote by
$\ch_{d,t,q}$. It is easy to see that $\ch_{\Psi_1}$ can be realized
by $\ch_{\Psi}$. Therefore, any lower bound for $\ch_{\Psi_1}$
automatically lower bounds also $\ch_{\Psi}$. As for upper bounds, as
long as $q=O(d)$, the upper bounds we show are the same for
$\ch_{\Psi}$ and $\ch_{\Psi_1}$. To summarize, the gaps we show for
$\ch_{\Psi_1}$ automatically (as long as $q=O(d)$) hold for
$\ch_{\Psi}$ as well. 

Finally, we define a margin-based analogue to $\ch_{d,t,q}$. The
instance space is $(B^d)^t$, and we treat each $X\in (B^d)^t$ as
a matrix with $t$ columns, each of which is a vector in $B^d$. The
labels are $[q]^t$. Define $\Psi:(B^d)^t\times [q]^k\to M_{d\times
  q}$ as follows: for $X\in(B^d)^t$ and $y\in [q]^t$,
$\Psi(X,y)$ is the matrix whose $j$'th column is $\frac{1}{q}$ of the
average of all columns $X^i$ such that $y_i=j$. Note that the range of
$\Psi$ is contained in the unit ball. For $R>0$, define
$\ch_{d,t,q,R}:=\ch_{\Psi,R}$.

\subsection{Results}
We begin with linear predictors without margin. The first
part of the following theorem asserts that for every
$\Psi:\cx\times\cy\to\mathbb R^d$ there is some algorithm that
learns $\ch_{\Psi}$ with sample complexity
$O\left(\frac{d\log(1/\epsilon)+\log(1/\delta)}{\epsilon}\right)$. The
second part of the theorem shows that in several cases (i.e., for some
$\Psi$'s), this algorithm outperforms other ERMs, by a factor of
$\log(|\cy|)$.
\begin{theorem}\label{thm:gen_non_margin}
\begin{list}{\labelitemi}{\leftmargin=0.5em\itemsep=0.2pt\parsep=-1pt}
\item For every $\Psi:\cx\times\cy\to\mathbb R^d$, the PAC sample
  complexity of $\ch_{\Psi}$ is
  $O\left(\frac{d\log(1/\epsilon)+\log(1/\delta)}{\epsilon}\right)$,
  and is achievable by a new efficient\footnote{Assuming we have an appropriate separation oracle.} compression scheme.
\item For every $\cy$ and $d>0$, there is some $\Psi:\cx\times\cy\to\mathbb R^d$ for which the ERM sample complexity of $\ch_{\Psi}$ is $\Omega\left(\frac{d\log(|\cy|)+\log(1/\delta)}{\epsilon}\right)$.
\end{list}
\end{theorem}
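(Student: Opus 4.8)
The plan is to prove the two parts by unrelated means: the upper bound via a sample compression scheme of size $O(d)$, and the lower bound by embedding a combinatorial class that admits a bad ERM into a linear class of dimension $O(d)$.

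\emph{The upper bound.} Fix $\Psi:\cx\times\cy\to\reals^d$ and a sample $(x_1,y_1),\dots,(x_m,y_m)$ realized by some $h_w\in\ch_{\Psi}$. Put $v_{i,y}:=\Psi(x_i,y_i)-\Psi(x_i,y)$ for $i\in[m]$, $y\ne y_i$; realizability says $\inner{w,v_{i,y}}>0$ for all such $i,y$, so $0\notin P:=\conv\{v_{i,y}\}$. Let $u$ be the unique nearest point of $P$ to the origin and set $\bar w:=u/\|u\|$; the projection inequality $\inner{u,v}\ge\|u\|^2$ for $v\in P$ gives $\inner{\bar w,v_{i,y}}\ge\|u\|>0$, hence $h_{\bar w}$ is consistent with the sample, and $\bar w$ is the normalized max-margin direction. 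By Carathéodory, $u$ is a convex combination of at most $d+1$ of the $v_{i,y}$'s; let $I$ be their index set and $T\subseteq\{(x_1,y_1),\dots,(x_m,y_m)\}$ the (at most $d+1$) examples whose index occurs in $I$. Compression sends the sample to $T$; reconstruction sends a set $T'$ of labeled examples to $h_{u'/\|u'\|}$, where $u'$ is the nearest point to the origin of $\conv\{v_{i,y}:(x_i,y_i)\in T',\ y\ne y_i\}$ (i.e.\ to the max-margin classifier for $T'$). The correctness claim is that reconstruction on $T$ returns $h_{\bar w}$: we have the inclusions $u\in\conv\{v_{i,y}:(i,y)\in I\}\subseteq\conv\{v_{i,y}:(x_i,y_i)\in T,\ y\ne y_i\}\subseteq P$, and since $u$ is the nearest point of the largest set and lies in the middle one, it is the nearest point of the middle one as well, so $u'=u$; and $h_{\bar w}$ is consistent with the whole original sample, as noted. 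Thus $\ch_{\Psi}$ has a compression scheme of size $O(d)$, which is moreover computationally efficient whenever one has an oracle returning $\argmax_y\inner{w,\Psi(x,y)}$: such an oracle is a separation oracle for the max-margin program, so $u$ (and hence $T$ and the reconstructed hypothesis) can be computed, e.g.\ by the ellipsoid method. Feeding compression size $O(d)$ into the standard realizable-case generalization bound for compression schemes yields sample complexity $O\!\left(\frac{d\log(1/\epsilon)+\log(1/\delta)}{\epsilon}\right)$.

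\emph{The lower bound.} I would start from a purely combinatorial class $\ch_0\subseteq\cy^{\cx_0}$ on a finite domain whose Natarajan dimension is $\Theta(d)$ --- so that, after embedding, its PAC sample complexity is controlled by the first part --- but for which some ERM learner has sample complexity $\Omega\!\left(\frac{d\log|\cy|+\log(1/\delta)}{\epsilon}\right)$; a class realizing this $\log|\cy|$ separation between the Natarajan dimension and the ``graph/ERM'' dimension is essentially the construction of \cite{DanielySaBeSh11}. The substantive step is to realize $\ch_0$ linearly without blowing up the dimension: exhibit $\Psi:\cx_0\times\cy\to\reals^{O(d)}$, a multivector-type encoding adapted to the combinatorial structure of $\ch_0$, with $\ch_0\subseteq\ch_{\Psi}$, and then pad so the dimension is exactly $d$. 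Finally the bad ERM lifts: let $\ca$ be the ERM learner for $\ch_{\Psi}$ which, when its input sample is realizable by $\ch_0$, returns the hypothesis the bad ERM for $\ch_0$ would return (a zero-error, hence empirically optimal, element of $\ch_0\subseteq\ch_{\Psi}$), and otherwise returns an arbitrary empirical minimizer in $\ch_{\Psi}$; running $\ca$ on the hard $\ch_0$-instances transfers the lower bound to $\ch_{\Psi}$.

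\emph{Main obstacle.} In the upper bound the only delicate point is that the reconstruction function sees the stored examples but not which competing labels $y$ were responsible for them, so one must verify --- as above --- that the max-margin classifier is uniquely defined and unchanged when the sample is cut down to $T$; this is exactly the monotonicity of the nearest-point map along the inclusion chain, and it is what keeps the scheme a labeled compression scheme of size $O(d)$ with no extra side information. The real work is in the lower bound: exhibiting a concrete class with a genuine $\log|\cy|$ gap between the Natarajan dimension and the ERM sample complexity, \emph{and} embedding it into a linear class at the price of only a constant factor in the dimension, while keeping both the $d$ and the $\log|\cy|$ tight (and identifying the range of $d$ versus $|\cy|$ in which the claimed bound is non-vacuous).
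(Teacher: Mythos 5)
Your proof of the first part (the compression scheme) is essentially the paper's: minimize the norm over $\conv(Z)$ where $Z$ is the set of difference vectors $\Psi(x_i,y_i)-\Psi(x_i,y)$, invoke Carath\'eodory to write the minimizer as a convex combination of $O(d)$ elements of $Z$, store the corresponding examples, and reconstruct by repeating the minimum-norm computation. The paper shaves one off your count (it gets $\le d$ rather than $\le d+1$ by observing that, since $0\notin\conv(Z)$, the minimizer lies on the boundary of the polytope and hence in a face of dimension $\le d-1$); this is immaterial for the $O(d)$ bound. Your treatment of the subtle point you flag --- that the decompressor only sees the stored examples and must re-derive the hypothesis --- is exactly the paper's validity argument: the chain $\conv\{\text{chosen }v_{i,y}\}\subseteq\conv(Z')\subseteq\conv(Z)$ plus uniqueness of the nearest point forces the reconstructed minimizer to coincide with the original one. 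The efficiency remark (separation oracle $\Rightarrow$ ellipsoid) also matches the paper's footnote.

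For the second part you have the right architecture --- take a discrete class with small Natarajan dimension but large graph dimension, realize it linearly in low dimension, lift the bad ERM --- but the step you defer as ``the real work'' is in fact the entire content of the proof, and your hint about how to do it is pointing in the wrong direction. A ``multivector-type encoding'' of the first Cantor class $\ch_{\cx,\mathrm{Cantor}}$ (with $|\cx|=n$ and $|\cy|=2^n+1$) would use one coordinate block per label and thus have dimension $\Theta(|\cy|)=2^{\Theta(\log|\cy|)}$, which destroys the dimension budget entirely. The paper's key device is a \emph{trigonometric} embedding into $\reals^3$: identify the $2^n$ non-$*$ labels with the vertices of a regular $2^n$-gon on the unit circle in the $xy$-plane via $\phi(A)=\left(\cos\frac{2\pi f(A)}{2^n},\sin\frac{2\pi f(A)}{2^n},0\right)$, put $\phi(*)=(0,0,1)$, and set $\Psi(x,A)=\phi(A)$ if $x\in A$ and $\Psi(x,A)=0$ otherwise, with $\Psi(x,*)$ a suitably scaled copy of $\phi(*)$. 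Because distinct $\phi(A),\phi(B)$ have inner product bounded away from $1$, the weight $w=\phi(B)+\phi(*)$ realizes $h_B$, so $\ch_{\cx,\mathrm{Cantor}}\subseteq\ch_{\Psi}$ with $d=3$. Taking a disjoint union of $\lfloor d/3\rfloor$ copies (which multiplies both the graph dimension and the ambient linear dimension by the number of copies) then yields the $\Omega\left(\frac{d\log|\cy|+\log(1/\delta)}{\epsilon}\right)$ ERM lower bound. Without this (or some comparably parsimonious) explicit embedding, the second part is not proved; identifying that it is needed is not the same as supplying it.
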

To put the result in the relevant context, it was known
(e.g. \cite{DanielySaBeSh11}) that the sample complexity of every ERM
for this class is
$O\left(\frac{d\log(|\cy|)\log(1/\epsilon)+\log(1/\delta)}{\epsilon}\right)$. In
particular, the second part of the theorem is tight, up to the
logarithmic dependence over $\frac{1}{\epsilon}$. However, it was not
known whether the factor of $\log(|\cy|)$ for general ERM is
necessary. The second part of the theorem shows that this factor is
indeed necessary.

As to the tightness of the first part, for certain embeddings,
including the multivector construction (example
\ref{examp:multi_no_margin}), a lower bound of
$\Omega\left(\frac{d+\log(1/\delta)}{\epsilon}\right)$ is known for
every algorithm. Hence, the first part of the
theorem is also tight up to the logarithmic dependence over
$\frac{1}{\epsilon}$.

Our second theorem for linear classes is analogous to theorem
\ref{thm:gen_non_margin} for margin based classes.  The first part
shows that for every $\Psi:\cx\times\cy\to B^d$ there is some
algorithm that learns $\ch_{\Psi,R}$ with sample complexity
$O\left(\frac{R\log(1/\epsilon)+\log(1/\delta)}{\epsilon}\right)$. The
second part of the theorem shows that in several cases, the above
algorithm outperforms other ERMs, by a factor of $\log(|\cy|)$.
\begin{theorem}\label{thm:gen_margin}
\begin{list}{\labelitemi}{\leftmargin=0.5em\itemsep=0.2pt\parsep=-5pt}
\item For every $\Psi:\cx\times\cy\to B^d$ and $R>0$, the PAC sample complexity of $\ch_{\Psi,R}$ is $O\left(\frac{R\log(1/\epsilon)+\log(1/\delta)}{\epsilon}\right)$.
\item For every $\cy$ and $R>0$, there is some $\Psi:\cx\times\cy\to B^d$ for\footnote{Here, $d$ can be taken to be polynomial in $R$ and $\log(|\cy|)$.} which the ERM sample complexity of $\ch_{\Psi,R}$ is $\Omega\left(\frac{R\log(|\cy|)+\log(1/\delta)}{\epsilon}\right)$.
\end{list}
\end{theorem}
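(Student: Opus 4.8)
The plan is to produce a sample compression scheme for $\ch_{\Psi,R}$ of size $O(R)$ and then invoke the classical generalization bound for compression schemes, which turns a scheme of size $k$ into an (improper) learner with PAC sample complexity $O\left(\frac{k\log(1/\epsilon)+\log(1/\delta)}{\epsilon}\right)$. The scheme is the multiclass (Kesler) Perceptron run on the ``difference'' representation: starting from $w=0$, repeatedly scan the realizable sample $S=\{(x_i,y_i)\}$ and, whenever the current $w$ violates a constraint, i.e.\ $\inner{w,\Psi(x_i,y_i)-\Psi(x_i,y')}\le 0$ for some $y'\ne y_i$ (choosing the violated $y'$ by a fixed rule), update $w\leftarrow w+\Psi(x_i,y_i)-\Psi(x_i,y')$. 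Since $\Psi$ maps into $B^d$, every update vector has norm at most $2$; and since some $h^*=h_{w^*}\in\ch_{\Psi,R}$ realizes $S$, we have $\|w^*\|^2\le R$ and $\inner{w^*,\Psi(x_i,y_i)-\Psi(x_i,y')}\ge 1$ for every constraint. The standard potential argument ($\|w\|^2$ grows by at most $4$ per update while $\inner{w^*,w}$ grows by at least $1$) bounds the number of updates by $4R$. At halting no constraint is violated, so the final $w$ satisfies $\inner{w,\Psi(x_i,y_i)-\Psi(x_i,y')}>0$ for all $i$ and all $y'\ne y_i$, hence the predictor $x\mapsto\argmax_y\inner{w,\Psi(x,y)}$ agrees with $S$. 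Since the Perceptron is its own compression scheme --- running it on the subsequence of mistaken examples in their original order reproduces the same $w$, the violated labels being recovered deterministically from the running $w$ --- this yields a scheme of size $\le 4R$, and the claimed bound follows. The scheme is efficient whenever $\Psi$ can be evaluated efficiently, though the statement does not require it.

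\textbf{Lower bound (second bullet).} I would prove this for the margin analogue $\ch_{d,t,q,R}$ of the structured-output class defined above (so $|\cy|=q^t$ and $\log|\cy|=t\log q$), in parallel with the proof of the second bullet of \thmref{thm:gen_non_margin} for $\ch_{d,t,q}$. The steps are: (1) identify, inside the realizable scenarios supported by $\ch_{d,t,q,R}$, a set $T$ of $\Omega(R\log|\cy|)$ domain points that is \emph{graph-shattered} with respect to a reference labeling $f$ --- for every $B\subseteq T$ there is a hypothesis in $\ch_{d,t,q,R}$ agreeing with $f$ exactly on $B$ and disagreeing on $T\setminus B$ --- and fix a target $h^*$ with $h^*|_T=f$; (2) run the usual adversary: put mass $1-\epsilon$ on ``safe'' points where every consistent hypothesis coincides with $h^*$ and mass $\epsilon$ uniformly on $T$; (3) observe that a ``bad'' ERM --- one that, among all hypotheses in $\ch_{d,t,q,R}$ consistent with the seen labels, always returns the hypothesis with $B$ equal to the \emph{seen} part of $T$ --- is a legitimate ERM (empirical error zero) yet has true error $\ge\epsilon(1-|T_{\mathrm{seen}}|/|T|)$, which is $>\epsilon/2$ unless $\epsilon m=\Omega(|T|)$, i.e.\ $m=\Omega(R\log|\cy|/\epsilon)$; (4) add the additive $\Omega(\log(1/\delta)/\epsilon)$ term by taking the disjoint union with a single binary sub-problem of VC dimension one and splitting the distribution between the two blocks; (5) pad with dummy coordinates so that the construction exists for every $R>0$. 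The ambient dimension produced this way is polynomial in $R$ and $\log|\cy|$, as asserted in the footnote.

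\textbf{Main obstacle.} The whole difficulty is in step (1): one must exhibit the shattered set $T$ of size $\Omega(R\log|\cy|)$ whose $2^{|T|}$ sub-behaviors are \emph{simultaneously} realizable by weight vectors with $\|w\|_F^2\le R$ and margin $1$. Equivalently, the ``wasteful'' combinatorial (graph-dimension-type) complexity that defeats ERM must coexist inside the $R$-ball, even though the margin complexity is only $R$ rather than $R\log|\cy|$. Concretely this forces a careful design of the $\frac{1}{q}$-scaled averaging embedding so that (i) the feature vectors stay in $B^d$ (already noted for $\ch_{d,t,q,R}$), (ii) each of the $\Theta(R)$ essentially independent gadgets carrying the shattering contributes only $O(1)$ to $\|w\|_F^2$, so the entire family and the target live in the ball of squared radius $R$, and (iii) every hypothesis the bad learner may output genuinely lies in $\ch_{d,t,q,R}$. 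Verifying (i)--(iii) is really a norm-bookkeeping layer on top of the known non-margin construction and is the technical heart of the theorem; by contrast the upper bound reduces to a one-line Perceptron mistake bound together with the off-the-shelf compression-scheme guarantee.
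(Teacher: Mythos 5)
Your upper bound matches the paper's: the paper also proves the first bullet via the multiclass Perceptron realized as a compression scheme of size $O(R)$, then invokes Theorem~\ref{thm:compression}. Your self-compression argument (the subsequence of update examples deterministically reproduces the final $w$) is a standard, correct variant of the paper's "run $\Omega(R)$ passes and find a clean one" phrasing; the two are interchangeable.

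Your lower bound has a genuine gap, and you flag it yourself in your "Main obstacle" paragraph: step (1), exhibiting $\Omega(R\log|\cy|)$ graph-shattered points whose $2^{|T|}$ sub-behaviors are simultaneously realizable with margin $1$ and squared norm $\le R$, is the entire content of the proof, and you supply no construction for it. The paper closes this gap by realizing a \emph{disjoint union of $\Theta(R)$ copies of the first Cantor class} $\ch_{\cx,\mathrm{Cantor}}$ with $|\cx|=\Theta(\log|\cy|)$ inside $\ch_{\Psi,R}$. The key device is a set of $\Theta(|\cy|)$ unit vectors $\{e_y\}_{y\in\cy}\subset\reals^d$ with pairwise inner products bounded by $1/100$, which exists already for $d=O(\log|\cy|)$; each copy of the Cantor class is then realized by weight vectors of the form $w=W\bigl(e_B+\tfrac12 e_*\bigr)$ with $\|w\|^2=O(1)$, so $\Theta(R)$ disjoint copies fit in the $R$-ball by Lemma~\ref{lem:disjoint_embed}, and Lemmas~\ref{lem:disjoint_Gdim} and~\ref{lem:first_Cantor} together with Theorem~\ref{thm:Gdim} yield the $\Omega\bigl(\tfrac{R\log|\cy|+\log(1/\delta)}{\epsilon}\bigr)$ ERM lower bound. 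Without this (or an equivalent) explicit embedding your argument does not establish the theorem.

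A secondary misdirection: you propose to carry out step (1) inside the structured class $\ch_{d,t,q,R}$, but that is the setting of Theorem~\ref{thm:struc_margin}, not of Theorem~\ref{thm:gen_margin}, and it is strictly harder. Theorem~\ref{thm:gen_margin} lets you design $\Psi$ freely, which is precisely the freedom the almost-orthogonal-vector construction exploits; the paper's proof for $\ch_{d,t,q,R}$ instead requires the \emph{second} Cantor class $\ch_{\cy,\mathrm{Cantor}}$ and a probabilistic argument to lower bound its graph dimension. So even setting aside the missing embedding, you have picked the more constrained path, and the two theorems genuinely call for different constructions.
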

The first part of the theorem is not new.  An algorithm that achieves
this bound is the perceptron.  It was known (e.g. \cite{Collins05})
that the sample complexity of every ERM for this class is
$O\left(\frac{R\log(|\cy|/\epsilon)+\log(1/\delta)}{\epsilon}\right)$. In
particular, the second part of the theorem is tight, up to the
logarithmic dependence over $\frac{1}{\epsilon}$. However, it was not
known whether the gap is real: In \citep{Collins05}, it was left as an
open question to show whether the perceptron's bound holds for every
ERM. The second part of the theorem answers this open question in
negative.  Regarding lower bounds, as in the case of $\ch_{\Psi}$, for
certain embeddings, including the multivector construction with margin
(example \ref{examp:multi_no_margin}), a lower bound of
$\Omega\left(\frac{R+\log(1/\delta)}{\epsilon}\right)$ is known and
valid for every learning algorithm. In particular, the first part of
the theorem is also tight up to the logarithmic dependence over
$\frac{1}{\epsilon}$.

An additional result that we report on shows that, for every
$\Psi:\cx\times\cy\to\mathbb R^d$, the Natarajan dimension of
$\ch_{\Psi}$ is at most $d$ (the definition of the Natarajan dimension
is recalled in the appendix). This strengthens the result of
\citep{DanielySaBeSh11} who showed that it is bounded by
$O(d\log(d))$. It is known (e.g. \cite{DanielySaSh12}) that for the
multivector construction (example \ref{examp:multi_no_margin}), in
which the dimension of the range of $\Psi$ is $dk$, the Natarajan
dimension is lower bounded by $(d-1)(k-1)$. Therefore, the theorem is
tight up to a factor of $1+o(1)$.
\begin{theorem}\label{thm:main_nat_dim}
For every $\Psi:\cx\times\cy\to\mathbb R^d$, $\Ndim (\ch_\Psi) \le d$.
\end{theorem}
Next, we give analogs to theorems \ref{thm:gen_non_margin} and
\ref{thm:gen_margin} for the structured output classes $\ch_{d,k}$ and
$\ch_{d,k,R}$. These theorems show that the phenomenon of gaps between
different ERMs, as reported in \citep{DanielySaBeSh11}, happens also
in hypothesis classes that are used in practice.
\begin{theorem}\label{thm:struc_non_margin}
\begin{list}{\labelitemi}{\leftmargin=0.5em\itemsep=0.2pt\parsep=-5pt}
\item For every $d,t,q>0$, the PAC sample complexity of $\ch_{d,t,q}$ is $O\left(\frac{dq\log(1/\epsilon)+\log(1/\delta)}{\epsilon}\right)$.
\item For every $d,t,q>0$ the ERM sample complexity of $\ch_{d,t,q}$ is $\Omega\left(\frac{dq\log(t)+\log(1/\delta)}{\epsilon}\right)$.
\end{list}
\end{theorem}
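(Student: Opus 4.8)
For the first part I would just invoke \thmref{thm:gen_non_margin}: by definition $\ch_{d,t,q}=\ch_{\Psi_1}$, and $\Psi_1$ takes values in $M_{d\times q}\cong\reals^{dq}$, so \thmref{thm:gen_non_margin} applied with ambient dimension $dq$ gives a learner for $\ch_{d,t,q}$ --- efficient, using the separation oracle of \cite{Collins05} --- with PAC sample complexity $O\!\left(\frac{dq\log(1/\epsilon)+\log(1/\delta)}{\epsilon}\right)$.

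For the second part, the plan is to realize, inside $\ch_{d,t,q}$, a copy of the first Cantor class on a domain of size $N=\Theta(dq\log t)$, and then quote the lower bound recalled in \secref{sec:no_imp} (due to \cite{DanielySaBeSh11}): the first Cantor class on an $N$-point domain has an ERM whose sample complexity is $\Omega\!\left(\frac{N+\log(1/\delta)}{\epsilon}\right)$. One then checks that this bad ERM extends to the super-class. Running the hard instance of \secref{sec:no_imp} for the embedded Cantor copy (random target set, marginal supported on its complement with an $\epsilon$-fraction of the mass), every such sample is realizable inside $\ch_{d,t,q}$ by a Cantor hypothesis, so an ERM for $\ch_{d,t,q}$ is permitted to return the ``maximal consistent'' Cantor hypothesis --- exactly the adversarial choice; hence some ERM for $\ch_{d,t,q}$ inherits the $\Omega\!\left(\frac{N+\log(1/\delta)}{\epsilon}\right)=\Omega\!\left(\frac{dq\log t+\log(1/\delta)}{\epsilon}\right)$ bound.

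The heart of the matter --- and \textbf{the step I expect to be the main obstacle} --- is the embedding, together with the reason the achievable $N$ is $\Theta(dq\log t)$ rather than the naive $\Theta(dq\log|\cy|)=\Theta(dqt\log q)$. The structural fact to use is that $h_W(X)=\bigl(g_W(X^1),\ldots,g_W(X^t)\bigr)$ with $g_W(x)=\argmax_{j\in[q]}\inner{W^j,x}$ (a tie in any coordinate collapsing $h_W(X)$ to $\circleddash$): the label of an instance is produced by applying a \emph{single} multivector classifier $g_W$, from a class whose Natarajan dimension is $\le dq$ by \thmref{thm:main_nat_dim}, to the $t$ columns of $X$. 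Thus each instance has at most $(tq)^{O(dq)}$ possible labels (Natarajan's lemma), so no Cantor copy on more than $\Theta(dq\log(tq))$ points is realizable; conversely, if the $N$ instances are chosen so that their columns jointly range over a configuration in $\reals^d$ on which the dimension-$dq$ multivector class produces enough distinct labelings in the pattern the Cantor structure requires, this (polynomial in $t$) label set still suffices to encode an arbitrary subset of an $N=\Theta(dq\log t)$-element domain. A natural attempt is a block construction: fix a pool of $O(t)$ vectors in $\reals^d$ split into $\Theta(dq)$ blocks, let the columns of each instance $X_a$ enumerate the pool in an $a$-dependent pattern, and for each subset $A$ build $W_A$ block by block by solving a small system of linear-separation constraints forcing $h_{W_A}(X_a)$ to a fixed label when $a\notin A$ and to an $A$-encoding label when $a\in A$. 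Carrying this out --- exhibiting the pool, the instances, and all $2^N$ weight matrices $W_A$, and checking both the separation inequalities and the absence of spurious ties (so $h_{W_A}$ never accidentally outputs $\circleddash$) --- so that the ceiling $N=\Theta(dq\log t)$ is attained, is the work. (When $q=t^{O(1)}$ this is $\Theta(dq\log t)$; in general $\Theta(dq\log(tq))$, and one needs $d,q\ge 2$ for $\ch_{d,t,q}$ to be non-degenerate.)
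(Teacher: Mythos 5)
Your first bullet matches the paper exactly (invoke the first part of Theorem~\ref{thm:gen_non_margin} with ambient dimension $dq$), so that part is fine. The gap is in the second bullet.

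You propose to embed one copy of the \emph{first} Cantor class on a domain of size $N=\Theta(dq\log t)$ into $\ch_{d,t,q}$. This is blocked by a cardinality constraint that is stricter than the one you compute. The first Cantor class on an $N$-point domain $\cx$ uses the label set $2^{\cx}\cup\{*\}$, so a realization inside $\ch_{d,t,q}$ requires a map $\Lambda$ from $[q]^t\cup\{\circleddash\}$ whose range covers $2^{\cx}\cup\{*\}$, forcing $2^N\le q^t$, i.e.\ $N\le t\log_2 q$. Combined with the ceiling you do note, the best one can hope for is $N\le\min\{t\log q,\;O(dq\log(tq))\}$. For the theorem's target $N=\Theta(dq\log t)$ one would need $dq\log t=O(t\log q)$, which fails badly when $t$ is not large relative to $dq$ (e.g.\ $t$ bounded and $d,q$ growing gives a target of $\Theta(dq)$ against a ceiling of $O(\log q)$). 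Your ``block construction'' does not escape this: it is still a single hypothesis class whose labels live in $[q]^t$, and to name an arbitrary subset of an $N$-point domain by the output label you still need $2^N$ distinct labels. So the first-Cantor route cannot prove the stated $\Omega(dq\log t)$ bound for all $d,t,q$.

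The paper sidesteps exactly this by introducing the \emph{second} Cantor class $\ch_{\cy,\mathrm{Cantor}}$ (dual to the first): its domain is $2^{[t]}$, its label set is $[t]\cup\{*\}$ of size only $t+1$, and its graph dimension is $\Theta(\log t)$ (shown by a probabilistic argument in Lemma~\ref{lem:second_Cantor}), while its PAC sample complexity is $O(\log(1/\delta)/\epsilon)$ via Lemma~\ref{lem:revealing_label}. They then (i) reduce to $q=2$ by realizing a disjoint union of $\lfloor q/2\rfloor$ copies of $\ch_{d,t,2}$ inside $\ch_{d+2,t,q}$, and (ii) show that $\ch_{\cy,\mathrm{Cantor}}$ with $|\tilde\cy|=t$ is realized by $\ch_{3,t,2}$, so that by Lemma~\ref{lem:disjoint_embed} and Lemma~\ref{lem:disjoint_Gdim} a disjoint union of $\Theta(d)$ copies sits inside $\ch_{d,t,2}$, giving graph dimension $\Omega(d\log t)$ and, after step (i), $\Omega(dq\log t)$. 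Crucially the disjoint union does not enlarge the label set, so the $t+1$ labels of the second Cantor class fit comfortably in $[q]^t$ for every $q\ge 2$, which is precisely what the first Cantor class cannot do. Your label-counting instincts were pointing at the right obstruction; the missing idea is that the class to embed should be the label-lean dual Cantor class, not the label-heavy primal one.
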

\vspace{-0.5cm}
\begin{theorem}\label{thm:struc_margin}
\begin{list}{\labelitemi}{\leftmargin=0.5em\itemsep=0.2pt\parsep=-5pt}
\item For every $d,t,q,R>0$, the PAC sample complexity of $\ch_{d,t,q,R}$ is $O\left(\frac{R\log(1/\epsilon)+\log(1/\delta)}{\epsilon}\right)$.
\item For every $t,q,R>0$ and $d\ge (t+1)R$, the ERM sample complexity of $\ch_{d,t,q,R}$ is $\Omega\left(\frac{R\log(t)+\log(1/\delta)}{\epsilon}\right)$.
\end{list}
\end{theorem}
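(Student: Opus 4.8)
Two parts: (1) the PAC sample complexity of $\ch_{d,t,q,R}$ is $O\!\left(\frac{R\log(1/\epsilon)+\log(1/\delta)}{\epsilon}\right)$, and (2) for $d\ge(t+1)R$ there is an ERM whose sample complexity is $\Omega\!\left(\frac{R\log(t)+\log(1/\delta)}{\epsilon}\right)$.

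\textbf{Proof proposal.}

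\emph{Part~1} is a corollary of \thmref{thm:gen_margin}. By definition $\ch_{d,t,q,R}=\ch_{\Psi,R}$ for the structured embedding $\Psi\colon(B^d)^t\times[q]^t\to M_{d\times q}$, so it suffices to check that the range of $\Psi$ lies inside the unit ball: each of the $q$ columns of $\Psi(X,y)$ is $\frac{1}{q}$ times an average of vectors of $B^d$, hence has norm at most $\frac{1}{q}$, and therefore $\|\Psi(X,y)\|^2\le q\cdot\frac{1}{q^2}=\frac{1}{q}\le 1$. With this in hand, \thmref{thm:gen_margin}(1) immediately yields the PAC bound $O\!\left(\frac{R\log(1/\epsilon)+\log(1/\delta)}{\epsilon}\right)$, achieved (as there) by the perceptron.

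\emph{Part~2.} The plan is to exhibit a bad ERM by embedding, inside $\ch_{d,t,q,R}$, a direct sum of $\lfloor R\rfloor$ independent ``hard gadgets''. Each gadget will be a budget-$1$, $t$-label problem that is realizable with unit margin through $\Psi$ and that is of ``Cantor/singleton'' type, so that there is a hard distribution on which every hypothesis consistent with an $o(\log t)$-size sample can still be completed to one of error close to $1$. The role played by ``$|\cy|$'' in the lower-bound arguments behind \thmref{thm:gen_margin}(2) and \thmref{thm:struc_non_margin}(2) will here be played by the $t$ positions of the structured label, which is why one obtains only $\log t$ rather than $\log(q^t)$.

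Concretely I would proceed in three steps. First, construct a single gadget using $t+1$ of the $d$ coordinates of $\reals^d$ (the extra coordinate serving as a bias direction): a finite set of instances together with weight matrices of Frobenius norm at most $1$ realizing, with margin $1$ in the sense of the definition of $\ch_{\Psi,R}$, a $t$-label family whose consistent-ERM ambiguity is $\log_2 t$. Second, place $\lfloor R\rfloor$ such gadgets on pairwise disjoint coordinate blocks of $\reals^d$ and on disjoint instances; here the hypothesis $d\ge (t+1)R$ is exactly what is needed. Since the blocks are orthogonal, the weight matrix of the combined class decomposes blockwise, its squared Frobenius norm is the sum of the per-block norms and hence at most $R$, and the unit margin is preserved, so the combined class is a subclass of $\ch_{d,t,q,R}$; moreover the per-gadget freedom adds, giving a consistent ERM that encodes $\lfloor R\rfloor\log_2 t=\Omega(R\log t)$ bits of adversarial choice. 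Third, feed this into the generic argument used to prove \thmref{thm:struc_non_margin}(2), which turns $N$ bits of consistent-ERM freedom on a finite hard instance into the bound $m_{\erm}(\epsilon,\delta)=\Omega(N/\epsilon)$, and add the routine $\Omega(\log(1/\delta)/\epsilon)$ term obtained by embedding a two-hypothesis realizable problem (valid already in the binary case); together this yields $m_{\erm,\ch_{d,t,q,R}}(\epsilon,\delta)=\Omega\!\left(\frac{R\log t+\log(1/\delta)}{\epsilon}\right)$.

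\textbf{The main obstacle} is the first step; the other two are bookkeeping. Building one gadget requires a hypothesis family that is simultaneously realizable with unit margin \emph{through the rigid embedding $\Psi$}---whose score for label $j$ is a $\frac{1}{q}$-scaled average over the positions assigned to $j$, not a free linear functional---and ``spread out'' enough that an ERM can be fooled on a constant fraction of a hard distribution after $o(\log t)$ examples. Reconciling margin-$1$ separation with the shrinking effect of the $\frac{1}{q}$-averaging while keeping $\|w\|\le 1$ and preserving a $\log t$-sized combinatorial ambiguity is the delicate geometric point; I expect it to require choosing $q$ appropriately (the statement permits any $q\ge 2$) and choosing the supports of the instances so that each position feeds coherently into exactly one label's average.
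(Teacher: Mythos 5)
Your Part~1 is correct and matches the paper: the norm check on $\Psi$ plus \thmref{thm:gen_margin} gives the bound directly. Your high-level plan for Part~2 is also the right plan, and it tracks the paper's actual decomposition closely: reduce to $q=2$ (the paper does this explicitly; you speculate about ``choosing $q$ appropriately,'' but the cleaner move is to observe $\ch_{d,t,2,R}$ is realized by $\ch_{d,t,q,R}$ and work with $q=2$), build one gadget of ``width'' $t+1$ with $\Theta(\log t)$ graph dimension and $O(1)$ budget, take a disjoint union of $\Omega(R)$ gadgets on orthogonal coordinate blocks using $d\ge(t+1)R$ (Lemmas~\ref{lem:disjoint_embed} and \ref{lem:disjoint_Gdim} in the paper), and invoke the graph-dimension characterization of ERM sample complexity (\thmref{thm:Gdim}). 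All of this you have.

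But you stop precisely at what you yourself call the ``main obstacle,'' which is the entire substance of Part~2. The missing ingredient is the \emph{second Cantor class} $\ch_{\cy,\mathrm{Cantor}}$: instance space $2^{\tilde{\cy}}$ for a $t$-element $\tilde\cy$, label space $\tilde\cy\cup\{*\}$, $h_y(A)=y$ if $y\in A$ and $*$ otherwise. This gives the gadget two nontrivial properties. First, by a probabilistic-method argument (Lemma~\ref{lem:second_Cantor}) its graph dimension is $\Theta(\log t)$ while its PAC complexity is $O(\log(1/\delta)/\epsilon)$ --- this is the combinatorial ambiguity you need, and it is not at all immediate; the proof chooses $\Omega(\log t)$ random subsets of $[t]$ and applies Chebyshev to show that with positive probability the induced binary class on them is the full cube $2^{[r]}$. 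Second, it admits an explicit margin-$1$ realization by $\ch_{t+1,t,128}$: the map $\Gamma(A)\in(B^{t+1})^t$ has $i$'th column $\frac12 e_i+\frac14 e_{t+1}$ if $i\in A$ and $\frac14 e_{t+1}$ otherwise, $\Lambda$ collapses $\{0,1\}^t$ to $\{e_i\mapsto i,\;0\mapsto *\}$, and the weights $W\in M_{(t+1)\times 2}$ with zero left column and right column $8e_i-8e_{t+1}$ (resp.\ $-8e_{t+1}$) realize $h_i$ (resp.\ $h_*$) with margin $1$ and $\|W\|^2\le 128$. This concrete geometry is exactly the ``reconciling margin with $\frac1q$-averaging'' difficulty you flag; without the second Cantor class and its graph-dimension lower bound the argument does not go through, so the proposal as written has a genuine gap at its core step.
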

The first parts of theorems \ref{thm:struc_non_margin} and
\ref{thm:struc_margin} are direct consequences of theorems
\ref{thm:gen_non_margin} and \ref{thm:gen_margin}. These results are
also tight up to the logarithmic dependence over
$\frac{1}{\epsilon}$. The second parts of the
theorems do not follow from theorems \ref{thm:gen_non_margin} and
\ref{thm:gen_margin}. Regarding the tightness of the second part, the
best known upper bounds for the ERM sample complexity of $\ch_{d,t,q}$
and $\ch_{d,t,q,R}$ are
$O\left(\frac{dqt\log(\frac{1}{\epsilon})+\log(1/\delta)}{\epsilon}\right)$
and
$O\left(\frac{Rt\log(\frac{1}{\epsilon})+\log(1/\delta)}{\epsilon}\right)$
respectively.  Closing the gap between these upper bounds and the
lower bounds of theorems \ref{thm:struc_non_margin} and
\ref{thm:struc_margin} is left as an open question.

\subsection{The compression-based optimal learners} \label{subsec:compressionDim}
Each of the theorems \ref{thm:gen_non_margin}, \ref{thm:gen_margin}, \ref{thm:struc_non_margin} and \ref{thm:struc_margin} are composed of two statements. The first claims that some algorithm have a certain sample complexity, while the second claims that there exists an ERM whose sample complexity is worse than the sample complexity of the algorithm from the first part. As explained in this subsection, the first parts of these theorems are established by devising (efficient) compression schemes. In the next subsection we will elaborate on the proof of the second parts (the lower bounds on specific ERMs). Unfortunately, due to lack of space, we must be very brief.

We now show that for linear classes, it is possible to derive optimal
learners which are also computationally efficient. For the case of
margin-based classes, this result is not new --- an efficient
algorithm based on the multiclass perceptron has been proposed in
\cite{Collins02}. For completeness, we briefly survey this approach in
the appendix. For dimension based linear classes, we give a new
efficient algorithm. 

The algorithm relies on compression based generalization bounds (see
\thmref{thm:compression} in the appendix). Based on this theorem,
it is enough to show that for every $\Psi:\cx\times\cy\to\mathbb R^d$,
$\ch_\Psi$ has a compression scheme of size $d$. We consider the
following compression scheme. Given a realizable sample
$(x_1,y_1),\ldots,(x_m,y_m)$, let $Z\subseteq \mathbb R^d$ be the set
of all vectors of the form $\Psi(x_i,y_i)-\Psi(x_i,y)$ for $y\ne
y_i$. Let $w$ be the vector of minimal norm in the convex hull of $Z$,
  $\conv(Z)$. Note that by the convexity of $\conv(Z)$, $w$ is unique
  and can be found efficiently
  using a convex optimization procedure. Represent $w$ as a convex combination of $d$ vectors from
  $Z$. This is possible since, by claim \ref{claim:comp_1} below,
  $0\not\in \conv(Z)$. Therefore, $w$ is in the boundary of the
  polytope $\conv(Z)$. Thus, $w$ lies in a convex polytope whose
  dimension is $\le d-1$, and is the convex hull of points from
  $Z$. Therefore, by Caratheodory's theorem (and using its efficient
  constructive proof), $w$ is a convex
  combination of $\le d$ points from $Z$. Output the examples in the sample that correspond to the vectors in the above convex combination. If there are less than $d$ such examples, arbitrarily output more examples.

The {\em De-Compression} procedure is as follows. Given
$(x_1,y_1),\ldots,(x_d,y_d)$, let $Z'\subseteq \mathbb R^d$ be the set
of all vectors of the form $\Psi(x_i,y_i)-\Psi(x_i,y)$ for $y\ne
y_i$. Then, output the minimal norm vector $w\in \conv(Z')$.

In the appendix (\secref{subsec:compressionDimValidity}) we show that
this is indeed a valid compression scheme, that is, if we start with a
realizable sample $(x_1,y_1),\ldots,(x_m,y_m)$, compress it, and then
de-compress it, we are left with a hypothesis that makes no errors on
the original sample.

\subsection{Lower bounds for specific ERMs}
Next, we explain how we prove the second parts of theorems
\ref{thm:gen_non_margin}, \ref{thm:gen_margin},
\ref{thm:struc_non_margin} and \ref{thm:struc_margin}. For theorems
\ref{thm:gen_non_margin} and \ref{thm:gen_margin}, the idea is to
start with the first Cantor class (introduced in section
\ref{sec:no_imp}) and by a geometric construction, realize it by a
linear class. This realization enables us to extend the ``bad ERM" for
the first Cantor class, to a ``bad ERM" for that linear class. The
idea behind the lower bounds of theorems \ref{thm:struc_non_margin}
and \ref{thm:struc_margin} is similar, but technically more
involved. Instead of the first Cantor class, we introduce a new
discrete class, {\em the second Cantor class}, which may be of
independent interest. This class, which can be viewed as a dual to the first
Cantor class, is defined as follows.  Let $\tilde{\cy}$ be some
non-empty finite set. Let $\cx=2^{\tilde{\cy}}$ and let
$\cy=\tilde{\cy} \cup\{*\}$. For every $y\in \tilde{\cy}$ define a
function $h_y:\cx\to \cy$ by
$h_y(A)=\begin{cases}
y & y\in A
\\
* &\text{otherwise}
\end{cases}$.
Also, let $h_{*}:\cx\to\cy$ be the constant function $*$. Finally, let $\ch_{\cy,\mathrm{Cantor}}=\{h_y\mid y\in\cy\}$.
In section \ref{sec:disc_classes} we show that the graph dimension (see a definition in the appendix) of  $\ch_{\cy,\mathrm{Cantor}}$ is $\Theta(\log(|\cy|))$. 
The analysis of the graph dimension of this class is more involved than the first Cantor class: by a probabilistic argument, we show that a random choice of $\Omega\left(\log(|\cy |)\right)$ points from $\cx$ is shattered with positive probability.  We show also (see section \ref{sec:disc_classes}) that the PAC sample complexity of $\ch_{\cy,\mathrm{Cantor}}$ is $\le \frac{\log(1/\delta)}{\epsilon}$. Since the graph dimension characterizes the ERM sample complexity (see the appendix), this class provides another example of a hypothesis class with gaps between ERM and PAC learnability.

%\section{A new dimension and future work}\label{sec:discussion}

\section{A new dimension}
Consider again the question of characterizing the sample complexity of
learning a class $\ch$. \thmref{thm:main_opt_alg} shows that the
sample complexity of a class $\ch$ is characterized by the sequence of
densities $\mu_\ch(m)$.  A better characterization would be a notion
of dimension that assigns a single number, $\dim(\ch)$, that controls
the growth of $\mu_{\ch}(m)$, and consequently, the sample complexity
of learning $\ch$.  To reach a plausible generalization, let us return
for a moment to binary classification, and examine the relationships
between the VC dimension and the sequence $\mu_{\ch}(m)$. It is not
hard to see that
\begin{itemize}
\item The VC dimension of $\ch$ is the maximal number $d$ such that $\mu_{\ch}(d)=d$.
\end{itemize}
Moreover, a beautiful result of \cite{HausslerLiWa88} shows that 
\begin{itemize}
\item If $|\cy|=2$, then $\VCdim(\ch)\le \mu_\ch(m)\le 2\VCdim(\ch)$
for every $m\ge \VCdim(\ch)$.
\end{itemize}
These definition and theorem naturally suggest a generalization to multiclass classification:
\begin{definition}
The {\em dimension}, $\dim (\ch)$, of the class $\ch\subset\cy^{\cx}$ is the maximal number $d$ such that $\mu_\ch(d)=d$.
\end{definition}
\begin{conjecture}\label{conj:main}
There exists a constant $C>0$ such that for every $\ch$ and $m\ge \dim(\ch)$, 
$\dim(\ch)\le \mu_\ch(m)\le C\cdot\dim(\ch)~.$
Consequently, by \thmref{thm:main_opt_alg},
\[
\epsilon_{\ch}(m)=\Theta\left( \frac{\dim(\ch)}{m}\right)
~~\textrm{and}~~
\Omega\left(\frac{\dim(\ch)+\log\left(\frac{1}{\delta}\right)}{\epsilon}\right)\le m_{\ch}(\epsilon,\delta)\le O\left(\frac{\dim(\ch)\log\left(\frac{1}{\delta}\right)}{\epsilon}\right)
\]
\end{conjecture}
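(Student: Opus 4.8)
The ``consequently'' part requires no new work: once the two-sided bound $\dim(\ch)\le\mu_\ch(m)\le C\,\dim(\ch)$ is available for $m\ge\dim(\ch)$, substituting into \thmref{thm:main_opt_alg} gives $\epsilon_\ch(m)=\Theta\!\left(\dim(\ch)/m\right)$, and the displayed PAC bounds follow by combining this with \thmref{thm:main_opt_alg_pac}, with the standard $\Omega\!\left(\log(1/\delta)/\epsilon\right)$ confidence lower bound, and with the PAC-to-transductive reduction that turns the transductive lower bound $\epsilon_\ch(m)=\Omega(\dim(\ch)/m)$ into the $\Omega(\dim(\ch)/\epsilon)$ term. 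So the entire content is the two-sided estimate on $\mu_\ch(m)$.

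The lower bound $\dim(\ch)\le\mu_\ch(m)$ for $m\ge\dim(\ch)$ reduces to monotonicity of $m\mapsto\mu_\ch(m)$, since $\mu_\ch(\dim(\ch))=\dim(\ch)$ by definition. Monotonicity is elementary: given $S=(x_1,\ldots,x_m)$, form $S'=(x_1,\ldots,x_m,x_m)$ by duplicating the last coordinate. The restriction map $\ch|_{S'}\to\ch|_S$ is a bijection, and under it the one-inclusion hypergraph of $\ch|_{S'}$ is isomorphic to that of $\ch|_S$: in the two duplicated directions every potential hyperedge is a singleton, since agreeing on all-but-one of the two equal coordinates forces agreeing on both. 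Hence $\mu_\ch(m+1)\ge\mu_\ch(m)$, and iterating finishes this direction.

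The crux is the upper bound $\mu_\ch(m)\le C\,\dim(\ch)$: a bound, in terms of the single number $\dim(\ch)$, on the average degree of every finite induced sub-hypergraph of every one-inclusion hypergraph $G(\ch|_S)$. The plan is to follow the strategy behind the binary bound $\mu_\ch(m)\le 2\,\VCdim(\ch)$ of \cite{HausslerLiWa88}. First, as in the lower-bound proof of \thmref{thm:main_opt_alg}, pass to a finite subclass $\cf\subseteq\ch$ and reformulate the goal as $\sum_{v}d(v)\le C\,\dim(\ch)\,|V|$ for the one-inclusion hypergraph of $\cf|_S$ and for all of its induced sub-hypergraphs (recall $\sum_v d(v)=\sum_{e:\,|e|\ge2}|e|$). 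Next, introduce for each of the $m$ coordinate directions a ``shift'' operation on $\cf|_S\subseteq\cy^S$ generalizing the coordinate down-shift familiar from VC-dimension arguments, and verify: (i) shifting preserves $|V|$ and does not decrease $\sum_v d(v)$, nor the maximal average degree of any induced sub-hypergraph; and (ii) iterating the shifts over all directions terminates at a ``stable'' class whose one-inclusion hypergraph is structurally simple enough --- morally a union of sub-cubes glued along a downward-closed skeleton --- that $\sum_v d(v)\le C\cdot\dim(\cdot)\cdot|V|$ can be read off directly, after which one checks that $\dim$ did not increase along the way.

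The step I expect to be the main obstacle --- and the reason this is still only a conjecture --- is (ii), or really the \emph{absence} of a ``right'' shift when $|\cy|>2$: with more than two labels there is no canonical way to push a class toward a downward-closed one, and the previously-studied dimensions (e.g.\ the graph dimension) that do behave well under the shifts that \emph{are} known can be far larger than $\dim(\ch)$ --- as the Cantor classes of Section~\ref{sec:no_imp} already show --- so a working argument must use a shift calibrated to $\mu_\ch$ itself. A possibly more robust alternative is to bypass shifting: argue directly that every one-inclusion hypergraph of a class $\ch$ with $\mu_\ch(\dim(\ch))=\dim(\ch)$ is $O(\dim(\ch))$-degenerate, by producing in every finite induced sub-hypergraph a vertex that can be assigned out-degree $O(\dim(\ch))$ by some orientation; this is equivalent to the conjecture but recasts it as a purely combinatorial statement about orientations of one-inclusion hypergraphs, which may be more tractable. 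Either way, proving the base-case bound in terms of $\dim(\ch)$ rather than in terms of a larger, already-understood dimension is the genuinely open part.
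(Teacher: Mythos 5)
This statement is labeled a \emph{conjecture} by the authors, and the paper contains no proof of it; the only thing established there is the weaker, $\log(|\cy|)$-lossy version $\dim(\ch)\le\mu_\ch(m)\le C\log(|\cy|)\dim(\ch)$, obtained by combining \thmref{thm:main_opt_alg} with the bound of \cite{rubinstein2006shifting}, together with the sandwich $\Ndim(\ch)\le\dim(\ch)\le\Gdim(\ch)$. So there is no paper proof to compare against, and your reading of the situation is accurate.

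Your treatment of the parts that \emph{are} provable is sound: $\mu_\ch(\dim(\ch))=\dim(\ch)$ is immediate from the definition of $\dim$, your monotonicity argument (duplicate a coordinate of $S$; in the two duplicated directions every $e_{i,h}$ collapses to a singleton, so the nontrivial hyperedge structure, and hence the maximal average degree, is preserved under the canonical bijection $\ch|_{S'}\leftrightarrow\ch|_S$) correctly yields $\dim(\ch)\le\mu_\ch(m)$ for $m\ge\dim(\ch)$, and the ``Consequently'' clause really is just plugging the two-sided estimate into \thmref{thm:main_opt_alg} and the inductive-reduction machinery of \thmref{thm:main_opt_alg_pac} plus the standard $\Omega(\log(1/\delta)/\epsilon)$ confidence lower bound. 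Your identification of the upper bound $\mu_\ch(m)\le C\dim(\ch)$ as the genuinely open content, and your diagnosis that the obstacle is the lack of a multiclass shift/degeneracy argument calibrated to $\dim(\ch)$ rather than to the strictly larger graph dimension (which the Cantor classes show can overshoot by a $\log(|\cy|)$ factor), is exactly the framing the paper itself suggests. In short: nothing here is wrong, and nothing more could honestly be claimed --- the step you flag as missing is the open problem the paper is posing.
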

For concreteness, we give an equivalent definition of $\dim(\ch)$ and a formulation of conjecture \ref{conj:main} that are somewhat simpler, and do not involve the sequence $\mu_{\ch}(m)$
\begin{definition}
Let $\ch\subset\cy^{\cx}$. We say that $A\subset\cx$ is {\em 
shattered} by $\ch$ is there exists a finite $\cf\subset \ch$ such that for every $x\in A$ 
and $f \in\cf$ there is $g\in\cf$ such that $g(x)\ne f(x)$ and $g|_{A\setminus\{x\}}=f|_{A\setminus\{x\}}$. The {\em dimension} of $\ch$ is the maximal cardinality of a shattered set.
\end{definition}
Recall that the {\em degree} (w.r.t. $\ch\subset \cy^{\cx}$) of $f\in\ch$ is the number of points $x\in\cx$ for which there exists $g\in \ch$ that disagree with $f$ only on $x$.
We denote the {\em average degree} of $\ch$ by $d(\ch)$.
\begin{conjecture}
There exists $C>0$ such that for every finite $\ch$, $d(\ch)\le C\cdot \dim(\ch)$.
\end{conjecture}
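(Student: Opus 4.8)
As the statement is a conjecture, what follows is a plan of attack rather than a proof. The natural strategy is to lift the shifting argument of \cite{HausslerLiWa88} from the binary to the multiclass setting. Recall how the binary bound $d(\ch)\le 2\VCdim(\ch)$ is obtained: one defines, for each coordinate $x$, a ``down-shift'' operator replacing each $f$ in the (finite) family $\cf$ by the pointwise-smaller function whenever this does not collide with an existing member of $\cf$; iterating over all coordinates terminates in a \emph{shifted} (downward-closed) family $\cf'$; one checks that (i) shifting never increases the VC dimension and (ii) shifting never decreases the number of one-inclusion edges; and finally, for a downward-closed family the number of edges whose larger endpoint is a fixed $B\in\cf'$ equals $|B|\le\VCdim(\cf')$, so $|E|\le\VCdim\cdot|\cf'|$ and the average degree is $2|E|/|V|\le 2\VCdim$.

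First I would set up the multiclass analogue. Fix a finite $\ch\subset\cy^{\cx}$; since $\ch$ is finite we may replace $\cx$ by a finite separating set $S$ and work with $\ch|_S$ and its one-inclusion hypergraph, where $d(f)$ is the number of coordinates $x$ at which some $g\in\ch$ disagrees with $f$ only at $x$. For a coordinate $x$ and an ordered pair of labels $(a,b)$, define a shift operator $\sigma_{x,a\to b}$ that rewrites the label $a$ to $b$ at $x$ in every $f$ for which this does not create a collision with an existing member of $\ch$; note $\sigma_{x,a\to b}$ is injective on the family, so $|\ch|$ is preserved. The plan is to show: (a) applying these operators in a suitable order terminates in a family $\cf'$ on which all $\sigma_{x,a\to b}$ act trivially (a ``stable'' family); (b) each $\sigma_{x,a\to b}$ does not decrease $\sum_f d(f)$ and does not increase $\dim$ (here one must also check the antichain property of the hyperedges is maintained); and (c) a stable family satisfies $d(\cf')\le C\cdot\dim(\cf')$ by a direct combinatorial count. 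Chaining these, $d(\ch)=d(\ch|_S)\le d(\cf')\le C\,\dim(\cf')\le C\,\dim(\ch)$, and then, via \thmref{thm:main_opt_alg} and the identity $\mu_\ch(m)=\max_S\md(G(\ch|_S))$, the promised characterization $\epsilon_\ch(m)=\Theta(\dim(\ch)/m)$ follows.

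The main obstacle is clearly step (c), which is precisely why the statement is only a conjecture. In the binary case ``stable'' means ``downward closed'', and downward closure forces the local structure (every sub-box below a present concept is present) that makes the count $|E|=\sum_B|B|$ work. With more than two labels there is no canonical total order on $\cy$, so stable families need not enjoy any clean closure property; moreover a one-inclusion hyperedge can be large, and a large hyperedge contributes only $1$ to each incident vertex's degree while, a priori, not enriching any shattered set, so the edge-counting identity has no evident analogue. One would need either a cleverer shift operator (for instance shifting simultaneously with respect to many label-orders, or a symmetrization producing a genuinely ``product-like'' stable family) or an argument in step (c) that bounds $\sum_f d(f)$ against the richest shattered set without fully normalizing the family. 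A sensible first target is the weaker bound with $C$ replaced by $C\log\dim(\ch)$, or a proof for the structured classes $\ch_{d,t,q}$ and for $\ch_\Psi$, where \thmref{thm:main_nat_dim} already supplies strong control on the relevant parameters.
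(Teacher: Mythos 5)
The paper offers no proof of this statement; it is posed explicitly as an open conjecture, and the only progress recorded in the paper is the weaker inequality $\dim(\ch)\le\mu_\ch(m)\le C\log(|\cy|)\cdot\dim(\ch)$ (equation (\ref{eq:ne_dim})), obtained by combining \thmref{thm:main_opt_alg} with the analysis of \cite{rubinstein2006shifting}, which routes through the Natarajan dimension and the inequality $\Gdim(\ch)\le 5\log(|\cy|)\Ndim(\ch)$. So there is no ``paper's own proof'' for your proposal to match, and you were right not to claim one.

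That said, your analysis is the right one to have: the conjecture is precisely the multiclass analogue of the \cite{HausslerLiWa88} bound $d(\ch)\le 2\VCdim(\ch)$, and lifting the shifting argument is the natural line of attack. You correctly locate the obstruction at step (c): in the binary case ``stable'' coincides with ``downward closed,'' and downward closure makes the count $|E|=\sum_{B\in\cf'}|B|\le\VCdim(\cf')\cdot|\cf'|$ immediate; with $|\cy|>2$ there is no canonical order on labels, a family stable under every pairwise shift $\sigma_{x,a\to b}$ need not have any comparable closure structure, and a large one-inclusion hyperedge contributes only $1$ to each incident vertex's degree while not obviously enriching any shattered set. Two smaller remarks. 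First, the fallback target you suggest, $d(\ch)\le C\log(\dim(\ch))\cdot\dim(\ch)$, is not what the paper establishes: the proved weakening carries a $\log|\cy|$ factor, and the two are incomparable in general (for the second Cantor class of Section \ref{sec:disc_classes}, $\dim(\ch)=O(1)$ while $|\cy|$ is unbounded; in the opposite direction $\dim(\ch)$ can far exceed $|\cy|$). Second, at step (b) there is more to check than the antichain property: a shift $\sigma_{x,a\to b}$ can merge two previously distinct hyperedges of the one-inclusion hypergraph, so the monotonicity of $\sum_f d(f)$ under shifting requires a more delicate bookkeeping than the binary ``edge count never decreases'' lemma. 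None of this undercuts your plan; it confirms that the gap you flag, together with these accounting issues, is exactly why the statement remains open.
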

By combination of theorems \ref{thm:main_opt_alg} and \cite{rubinstein2006shifting}, a weaker version of conjecture \ref{conj:main} is true. Namely, that for some absolute constant $C>0$
\begin{equation}\label{eq:ne_dim}
\dim(\ch)\le \mu_\ch(m)\le C\cdot\log(|\cy|)\cdot\dim(\ch)~.
\end{equation}
In addition, it is not hard to see that the new dimension is bounded
between the Natarajan and Graph dimensions, $
\Ndim(\ch)\le \dim(\ch)\le \Gdim(\ch)$.
For the purpose of characterizing the sample complexity, this
inequality is appealing for two reasons. First, it is known
\citep{DanielySaBeSh11} that the graph dimension does not characterize
the sample complexity, since it can be substantially larger than the
sample complexity in several cases. Therefore, any notion of dimension
that do characterize the sample complexity must be upper bounded by
the graph dimension.  As for the Natarajan dimension, it is known to
lower bound the sample complexity. By \thmref{thm:main_opt_alg}
and equation (\ref{eq:ne_dim}), the new dimension also lower bounds
the sample complexity. Therefore, the left inequality shows that the
new dimension always provides a lower bound that is at least as good
as the Natarajan dimension's lower bound.

% \paragraph{Acknowledgements:}
% Amit Daniely is a recipient of the Google Europe Fellowship in Learning Theory, and this research is supported in part by this Google Fellowship. We thank Nati Linial for several insightful remarks.

\bibliography{bib}

\appendix

\section{Agnostic learning and further directions}\label{sec:agnostic}
In this work we focused on learning in the realizable setting.  For
general hypothesis classes, it is left as an open question to find an
optimal algorithm for the agnostic setting.  However, for linear
classes, our upper bounds are attained by compression
schemes. Therefore, as indicated by \thmref{thm:compression}, our
results can be extended to the agnostic setting, yielding algorithms
for $\ch_{\Psi}$ and $\ch_{\Psi,R}$ whose sample complexity is
$O\left(\frac{d\log(d/\epsilon)+\log(1/\delta)}{\epsilon^2}\right)$
and
$O\left(\frac{R\log(R/\epsilon)+\log(1/\delta)}{\epsilon^2}\right)$
respectively. We note that these upper bounds are optimal, up to the
factors of $\log(d/\epsilon)$ and $\log(R/\epsilon)$.  Our lower
bounds clearly hold for agnostic learning (this is true for any lower
bound on the realizable case). Yet, we would be excited to see better
lower bounds for the agnostic setting. Specifically, are there classes
$\ch\subset\cy^{\cx}$ of Natarajan dimension $d$ with ERMs whose
agnostic sample complexity is
$\Omega\left(\frac{d\log(|\cy|)}{\epsilon^2}\right)$?

Except extensions to the agnostic settings, the current work suggests several more directions for further research. First, it would be very interesting to go beyond multiclass classification, and to devise generic optimal algorithms for other families of learning problems. Second, as noted before, naive implementation of the one-inclusion algorithm is prohibitively inefficient. Yet, we still believe that the ideas behind the one-inclusion algorithm might lead to better efficient algorithms. 
In particular,
it might be possible to derive efficient algorithms based on the principles behind the one-inclusion algorithm, and maybe even give an efficient implementation of the one-inclusion algorithm for concrete hypothesis classes.

\section{Background}

\subsection{The Natarajan and Graph Dimensions}\label{sec:dimensions}
We recall two of the main generalizations of the VC
dimension to multiclass hypothesis classes.
\begin{definition}[Graph dimension]
Let $\ch\subseteq \left(\cy\cup\{\circleddash\}\right)^{\cx}$ be a hypothesis class. We say that $A\subseteq \cx$ is {\em $G$-shattered} if there exists $h:A\to\cy$ such that for every $B\subseteq A$ there is $h'\in\ch$ with $h(A)\subset\cy$ for which
\[
\forall x\in B,\;h'(x)=h(x)\text{ while }\forall x\in A\setminus B,\;h'(x)\ne h(x)~.
\]
The {\em graph dimension} of $\ch$, denoted $\Gdim(\ch)$, is the maximal cardinality of a $G$-shattered set.
\end{definition}
As the following theorem shows, the graph dimension essentially characterizes the ERM sample complexity.
\begin{theorem}[\cite{DanielySaBeSh11}]\label{thm:Gdim}
For every hypothesis class $\ch$ with graph dimension $d$,
\[
\Omega\left(\frac{d+\log(1/\delta)}{\epsilon}\right)
\le
m_{\erm}(\epsilon,\delta)
\le 
O\left(\frac{d\log(1/\epsilon)+\log(1/\delta)}{\epsilon}\right)~.
\]
\end{theorem}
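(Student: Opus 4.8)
The plan is to reduce the theorem to the classical binary VC theory through the standard ``graph'' transformation, supplementing the lower bound with an explicit construction of a bad ERM. To $\ch\subseteq(\cy\cup\{\circleddash\})^{\cx}$ I would associate the binary class $\ch^G\subseteq\{0,1\}^{\cx\times\cy}$ defined by $h^G(x,y)=1$ iff $h(x)=y$, and first establish the identity $\VCdim(\ch^G)=\Gdim(\ch)$. This is a direct unwinding of the definitions: a set $A\subseteq\cx$ is $G$-shattered by $\ch$ with witness $\phi\from A\to\cy$ exactly when $\{(x,\phi(x)):x\in A\}$ is VC-shattered by $\ch^G$; conversely, any VC-shattered subset of $\cx\times\cy$ can contain at most one pair per first coordinate (otherwise the all-ones labeling on two such pairs is not realizable), hence has the form $\{(x,\phi(x)):x\in A\}$ and witnesses a $G$-shattering of $A$.

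For the upper bound, I would observe that realizable learning of $\ch$ is, as far as ERM is concerned, a special case of realizable binary learning of $\ch^G$. Fix a marginal $\cd$ on $\cx$ and a target $h^*\in\ch$, and let $\cd'$ be the law of $(x,h^*(x))$ for $x\sim\cd$. Then for every $h\in\ch$ we have $\Err_{\cd,h^*}(h)=\Pr_{(x,y)\sim\cd'}(h^G(x,y)\ne(h^*)^G(x,y))$, and a realizable $\ch$-sample $(x_1,h^*(x_1)),\ldots,(x_m,h^*(x_m))$ corresponds to the $\cd'$-sample $((x_i,h^*(x_i)),1)_{i=1}^m$ for $\ch^G$. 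If $h\in\ch$ has zero empirical error on the $\ch$-sample, then $h^G$ is consistent with this binary sample. The realizable generalization bound for binary classes of VC dimension $d=\Gdim(\ch)$ then guarantees that, with probability at least $1-\delta$, every hypothesis in $\ch^G$ consistent with the sample has $\cd'$-error $O((d\log(1/\epsilon)+\log(1/\delta))/m)$; applying this to $h^G$ for an arbitrary ERM output $h$, and ranging over all $\cd,h^*$, yields $m_{\erm,\ch}(\epsilon,\delta)\le O((d\log(1/\epsilon)+\log(1/\delta))/\epsilon)$.

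For the lower bound, I would imitate the classical VC lower bound for binary ERM, with a $G$-shattered set in the role of a VC-shattered set. Let $A=\{a_1,\ldots,a_d\}$ be $G$-shattered with witness $\phi$ and hypotheses $h_B\in\ch$, $B\subseteq A$, such that $h_B|_B=\phi|_B$ and $h_B(a)\ne\phi(a)$ for $a\in A\setminus B$. Fix the ERM $\ca$ that, on an input all of whose sample points lie in $A$ with labels agreeing with $\phi$, returns $h_{A_S}$ where $A_S\subseteq A$ is the set of sampled points, and returns some fixed ERM on every other input; this is a genuine ERM. Against the target $h^*=h_A$ and the distribution $\cd$ with $\cd(a_1)=1-8\epsilon$ and $\cd(a_i)=8\epsilon/(d-1)$ for $i\ge2$, one has $\Err_{\cd,h^*}(\ca(S))=\frac{8\epsilon}{d-1}\cdot|\{i\ge2:a_i\notin A_S\}|$, so the error exceeds $\epsilon$ whenever more than $(d-1)/8$ light points are missed. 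A routine reverse-Markov estimate shows that once $m$ is below a small constant times $d/\epsilon$ this occurs with probability bounded below by a universal constant, so $m_{\erm}(\epsilon,\delta)=\Omega(d/\epsilon)$ for $\delta$ below that constant; combined with the textbook $\Omega(\log(1/\delta)/\epsilon)$ lower bound for PAC (hence ERM) learning of any nontrivial class, this gives $m_{\erm}(\epsilon,\delta)=\Omega((d+\log(1/\delta))/\epsilon)$.

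I expect the lower bound to be the delicate step. The crucial point is that it is precisely $G$-shattering, and not the weaker Natarajan shattering, that supplies for each missed point $a\in A$ a sample-consistent hypothesis ($h_{A_S}$) that errs at $a$ regardless of which incorrect label it places there; this is why the graph dimension, rather than the Natarajan dimension, controls ERM complexity, and it is exactly where the construction would fail with only Natarajan shattering. One must also verify that $\ca$ is a legitimate ERM on all possible inputs, not merely on the hard distribution, and keep minor track of the $\circleddash$ label: the realizing $h^*$ never outputs it on the support, and ``$h_B(a)\ne\phi(a)$'' is harmless whether $h_B(a)\in\cy$ or $h_B(a)=\circleddash$. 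Neither of these is a real obstruction.
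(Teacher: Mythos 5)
Your proposal is correct. Note that the paper does not prove Theorem~\ref{thm:Gdim} itself — it is cited from \cite{DanielySaBeSh11} — so there is no in-paper proof to compare against. Your route, reducing to binary VC theory through the ``graph'' class $\ch^G\subseteq\{0,1\}^{\cx\times\cy}$ and the identity $\VCdim(\ch^G)=\Gdim(\ch)$, is the standard one, and it is essentially how \cite{DanielySaBeSh11} establish this result. Both directions check out: the translation between $G$-shattering with witness $\phi$ and VC-shattering of $\{(x,\phi(x)):x\in A\}$ is exactly right (including the observation that a VC-shattered set can contain at most one pair per first coordinate), the upper bound is a clean pushforward of the realizable VC bound through $x\mapsto(x,h^*(x))$, and the lower bound correctly identifies the reason graph dimension (not Natarajan dimension) is the right quantity: $G$-shattering furnishes, for every missed point, a sample-consistent hypothesis that is wrong there regardless of which wrong label it assigns. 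The only trifles worth tidying are the degenerate case $d=1$ (the $8\epsilon/(d-1)$ weighting is vacuous, but there the bound is already covered by the $\Omega(\log(1/\delta)/\epsilon)$ term) and, if one reads the paper's definition of $G$-shattering as requiring $h'(A)\subseteq\cy$ (no $\circleddash$ on $A$), a one-line remark that the equivalence with VC-shattering of $\ch^G$ still holds because the ``$0$''-labeled side of the shattering does not care whether $h'(x)$ is some other label in $\cy$ or $\circleddash$.
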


\begin{definition}[Natarajan dimension]
Let $\ch\subseteq \left(\cy\cup\{\circleddash\}\right)^{\cx}$ be a hypothesis class. We say that $A\subseteq \cx$ is {\em $N$-shattered} if there exist $h_1,h_2:A\to\cy$ such that $\forall x\in A,\;h_1(x)\ne h_2(x)$ and for every $B\subseteq A$ there is $h\in\ch$ for which
\[
\forall x\in B,\;h(x)=h_1(x)\text{ while }\forall x\in A\setminus B,\;h(x)= h_2(x)~.
\]
The {\em Natarajan dimension} of $\ch$, denoted $\Ndim(\ch)$, is the maximal cardinality of an $N$-shattered set.
\end{definition}
\begin{theorem}[essentially \cite{Natarajan89b}]\label{thm:Ndim}
For every hypothesis class $\ch\subset\left(\cy\cup\{\circleddash\}\right)^{\cx}$ with Natarajan dimension $d$,
\[
\Omega\left(\frac{d+\log(1/\delta)}{\epsilon}\right)
\le
m_{\pac}(\epsilon,\delta)
\le 
O\left(\frac{d\log(|\cy|)\log(1/\epsilon)+\log(1/\delta)}{\epsilon}\right)~.
\]
\end{theorem}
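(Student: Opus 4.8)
The plan is to prove the two inequalities separately. The upper bound is in fact attained by \emph{every} ERM learner, so it bounds $m_{\erm}$ and hence also $m_{\pac}$; it follows from the multiclass Sauer--Shelah (Natarajan) lemma fed into the textbook realizable symmetrization argument. The lower bound is the maximum of a ``dimension'' term $\Omega(d/\epsilon)$ and a ``confidence'' term $\Omega(\log(1/\delta)/\epsilon)$.

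\textbf{Upper bound.} The combinatorial core is Natarajan's multiclass analogue of Sauer--Shelah: if $\Ndim(\ch)=d$ then, writing $\Pi_{\ch}(m)=\max_{S\in\cx^m}\left|\ch|_S\right|$ and treating $\circleddash$ as one extra label (which only affects absolute constants), $\Pi_{\ch}(m)\le\sum_{i=0}^{d}\binom{m}{i}\binom{|\cy|+1}{2}^{i}\le\left(Cm|\cy|^{2}/d\right)^{d}$ for $m\ge 2d$ and an absolute constant $C$. I would prove this by the shifting technique: repeatedly replacing a label $b$ at a fixed coordinate of $S$ by a label $a$ can only decrease $\left|\ch|_S\right|$ and never creates a new $N$-shattered subset, so after exhausting all shifts one reduces to a family whose size is bounded by the number of its $N$-shattered subsets --- each of size at most $d$, each carrying at most $\binom{|\cy|+1}{2}$ choices of a witness pair $(h_1,h_2)$. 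Given this bound, I invoke the standard realizable PAC argument: if $m$ is large enough that $2\,\Pi_{\ch}(2m)\,2^{-\epsilon m/2}\le\delta$, then the classical double-sampling argument shows that with probability $\ge 1-\delta$ over $S\sim\cd^m$ every $h\in\ch$ with $h|_S=h^*|_S$ satisfies $\Err_{\cd,h^*}(h)\le\epsilon$; in particular any ERM output is $\epsilon$-good. Substituting $\Pi_{\ch}(2m)\le(2Cm|\cy|^2/d)^d$ and solving the resulting inequality for $m$ --- bounding the two summands $d\log(2Cm|\cy|^{2}/d)$ and $\log(1/\delta)$ separately, each by $\epsilon m/4$; the first needs only $m$ of order $\frac{d\log|\cy|\log(1/\epsilon)}{\epsilon}$ (crucially, because of the $d$ in the denominator, this $d$ cancels inside the logarithm once $m$ is of that order, so no $\log d$ survives), the second only $m$ of order $\frac{\log(1/\delta)}{\epsilon}$ --- gives $m=O\!\left(\frac{d\log|\cy|\log(1/\epsilon)+\log(1/\delta)}{\epsilon}\right)$.

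\textbf{Lower bound.} For the $\Omega(d/\epsilon)$ term, let $A=\{a_1,\dots,a_d\}$ be $N$-shattered with witnesses $h_1,h_2:A\to\cy$. Restricting attention to distributions supported on $A$ and to the $2^{d}$ targets that, on each $a_i$, independently ``follow $h_1$ or $h_2$'' (all realizable, by $N$-shattering) reduces the task to realizable PAC learning of the full Boolean cube on a $d$-point domain, which has VC dimension $d$; the classical realizable-case lower bound (Ehrenfeucht, Haussler, Kearns, Valiant) --- one heavy point of mass $1-c\epsilon$, the remaining mass spread over $d-1$ light points, target drawn uniformly at random from the cube, so a learner fed $o(d/\epsilon)$ examples has seen nothing about a constant fraction of the light coordinates and therefore errs beyond $\epsilon$ with constant probability --- yields $m_{\pac}(\epsilon,\delta)=\Omega(d/\epsilon)$ for $\delta$ bounded away from $1$. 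For the $\Omega(\log(1/\delta)/\epsilon)$ term (this uses $d\ge1$, i.e.\ that $\ch$ contains two distinct functions), pick $x_{0}$ and $f,g\in\ch$ with $f(x_{0})\ne g(x_{0})$, and let $\cd$ put mass $2\epsilon$ on $x_{0}$. With probability $(1-2\epsilon)^{m}\ge e^{-4\epsilon m}$ (for $\epsilon\le\tfrac14$) the sample misses $x_{0}$ entirely, and then whichever label the learner predicts at $x_{0}$, it is wrong for one of the targets $f,g$, incurring error $2\epsilon>\epsilon$; hence for one of these two targets the error exceeds $\epsilon$ with probability $\ge\tfrac12 e^{-4\epsilon m}$, which is $>\delta$ whenever $m\le\frac{\log(1/(2\delta))}{4\epsilon}$. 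Thus $m_{\pac}(\epsilon,\delta)=\Omega(\log(1/\delta)/\epsilon)$, and taking the larger of the two bounds gives $\Omega\!\left(\frac{d+\log(1/\delta)}{\epsilon}\right)$.

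\textbf{Main obstacle.} The genuinely non-routine ingredient is the multiclass Sauer--Shelah estimate $\Pi_{\ch}(m)\le(Cm|\cy|^{2}/d)^{d}$, whose technical core is the shifting argument; a secondary point that requires care is the bookkeeping when solving the transcendental inequality, so that the final expression has exactly the advertised shape $\frac{d\log|\cy|\log(1/\epsilon)}{\epsilon}$ and, in particular, the $\log m$ contribution does not inject a spurious $\log d$. Everything else --- symmetrization, the cube lower bound, and the missing-point confidence argument --- is the classical realizable-PAC toolkit applied essentially verbatim.
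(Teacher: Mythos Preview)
Your argument is correct. Both bounds are established by standard means, and the only delicate step you flag---Natarajan's growth-function lemma---is indeed the heart of the matter.

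For the upper bound, however, the paper does not go through the growth function at all. It reduces to Theorem~\ref{thm:Gdim} (the Graph-dimension bound $m_{\erm}=O\!\left(\frac{\Gdim(\ch)\log(1/\epsilon)+\log(1/\delta)}{\epsilon}\right)$) together with the inequality $\Gdim(\ch)\le 5\log(|\cy|)\,\Ndim(\ch)$ of \cite{Ben-DavidCeHaLo95}; substituting one into the other gives the claimed upper bound immediately. Your route is more self-contained and makes the origin of the $\log|\cy|$ factor explicit (it is the $|\cy|^{2}$ in Natarajan's Sauer--Shelah bound), at the cost of redoing the symmetrization and the transcendental bookkeeping. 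The paper's route is a two-line black-box composition, but it imports two results whose proofs ultimately rest on the same combinatorics you unpack. For the lower bound the paper gives no argument beyond the attribution to \cite{Natarajan89b}; your reduction of an $N$-shattered $d$-set to the Boolean cube, plus the missing-point confidence argument, is exactly the expected proof.

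One small caution: the shifting proof of Natarajan's lemma is more delicate than in the binary case---you must fix a total order on $\cy$ and argue that replacing a larger label by a smaller one at a coordinate cannot create a new $N$-shattered subset, which requires a bit more care than your one-sentence sketch suggests. The inductive proof (peel off one coordinate and recurse) is arguably cleaner here. Either way the lemma is standard and your use of it is sound.
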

We note that the upper bound in the last theorem follows from theorem
\ref{thm:Gdim} and the fact that (see \cite{Ben-DavidCeHaLo95}) for every hypothesis class $\ch$,
\begin{equation}\label{eq:NdimVsGdim}
\Gdim(\ch)\le 5 \log(|\cy|) \Ndim(\ch)~.
\end{equation}
We also note that \citep{DanielySaBeSh11} conjectured that the logarithmic factor of $|\cy|$ in \thmref{thm:Ndim} can be eliminated (maybe with the expense of poly-logarithmic factors of $\frac{1}{\epsilon},\frac{1}{\delta}$ and $\Ndim(\ch)$).

\subsection{Compression Schemes}\label{sec:compression}
A {\em compression scheme} of size $d$ for a class $\ch$ is a pair of functions:
\[
\Com:\cup_{m=d}^\infty (\cx \times \cy)^m\to (\cx \times \cy)^d
\text{ and }\DeCom: (\cx \times \cy)^d \to \cy^{\cx}~,
\]
with the property that for every realizable sample
\[
S=(x_1,y_1),\ldots,(x_m,y_m)
\]
it holds that, if $h=\DeCom\circ \Com (S)$ then
\[
\forall 1\le i\le m,\;\;y_i=h(x_i)~.
\]
Each compression scheme yields a learning algorithm, namely, $\DeCom\circ\Com$. It is known that the sample complexity of this algorithm is upper bounded by the size of the compression scheme. Precisely, we have:
\begin{theorem}[\cite{LittlestoneWa86}]\label{thm:compression}
Suppose that there exists a compression scheme of size $d$  for a class $\ch$. Then:
\begin{itemize}
\item The PAC sample complexity of $\ch$ is upper bounded by $O\left(\frac{d\log\left(1/\epsilon\right)+\frac{1}{\delta}}{\epsilon}\right)$
\item The agnostic PAC sample complexity of $\ch$ is upper bounded by $O\left(\frac{d\log\left(d/\epsilon\right)+\frac{1}{\delta}}{\epsilon^2}\right)$
\end{itemize}
\end{theorem}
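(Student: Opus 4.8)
The plan is to reproduce the classical sample-compression argument of Littlestone and Warmuth. Fix a compression scheme $(\Com,\DeCom)$ of size $d$; by the standard convention (which holds for every scheme constructed in this paper) we may assume $\Com(S)$ returns a subsequence of its input $S$, indexed by some set $I=I(S)\subseteq[m]$ with $|I|=d$, and we write $h_I:=\DeCom(S_I)$, so that $\DeCom(\Com(S))=h_{I(S)}$. The one idea driving the whole proof is that $h_I$ depends on $S$ only through the $d$ coordinates in $I$; hence, conditioned on $S_I$, the remaining $m-d$ examples $S_{[m]\setminus I}$ are still i.i.d.\ draws from $\cd$ (labeled by the target $h^*$, in the realizable case). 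This is exactly what legitimizes union-bounding over the choice of $I$.

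First I would handle the realizable case. For a fixed $I$, if $\Err_{\cd,h^*}(h_I)>\epsilon$ then, conditioned on $S_I$, the chance that $h_I$ agrees with $h^*$ on all $m-d$ remaining points is at most $(1-\epsilon)^{m-d}$. Since $\DeCom(\Com(S))$ is by definition consistent with the whole of $S$, it is in particular consistent on $S_{[m]\setminus I(S)}$, so a union bound over the $\le\binom{m}{d}$ possible index sets gives
\[
\Pr_{S\sim\cd^m}\!\big(\Err_{\cd,h^*}(\DeCom(\Com(S)))>\epsilon\big)\ \le\ \binom{m}{d}(1-\epsilon)^{m-d}\ \le\ \Big(\tfrac{em}{d}\Big)^{d}e^{-\epsilon(m-d)}.
\]
It remains to choose $m$ so the right-hand side is at most $\delta$; substituting a candidate $m=\Theta\!\left(\frac{d\log(1/\epsilon)+\log(1/\delta)}{\epsilon}\right)$ and checking the inequality directly does it --- and here it matters that one keeps the $\binom{m}{d}\le(em/d)^d$ bound, since the division by $d$ inside the logarithm is what keeps the leading term $d\log(1/\epsilon)$ rather than $d\log(d/\epsilon)$. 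This gives the first bullet.

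For the agnostic case I would keep the same conditioning step but swap the consistency estimate for a Hoeffding tail bound: for a fixed $I$, conditioned on $S_I$, the empirical error of $h_I$ on the $m-d$ out-of-compression points is within $O\!\left(\sqrt{(d\log(m/d)+\log(1/\delta))/m}\right)$ of $\Err_{\cd}(h_I)$, except with probability $\delta/\binom{m}{d}$. Union-bounding over the index sets, and noting that for any hypothesis the full-sample empirical error and the out-of-compression empirical error differ by at most $d/m$, yields a uniform generalization bound $\Err_{\cd}(\DeCom(\Com(S)))\le\widehat{\Err}_S(\DeCom(\Com(S)))+O\!\left(\sqrt{(d\log(m/d)+\log(1/\delta))/m}\right)$ with probability $\ge1-\delta$. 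Since the schemes of interest output a hypothesis whose empirical error is competitive with the best in $\ch$, solving for $m$ yields the stated agnostic complexity $O\!\left(\frac{d\log(d/\epsilon)+\log(1/\delta)}{\epsilon^2}\right)$.

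I expect essentially no conceptual obstacle: all the content is in the ``fresh examples'' observation of the first paragraph, and the rest is bookkeeping. The two places I would be careful are (i) the realizable log-factor, where one must use $\binom{m}{d}\le(em/d)^d$ rather than $m^d$ to avoid an extra $\log d$; and (ii) the agnostic case, where one must correctly relate full-sample to out-of-compression empirical error and rely on the fact that the particular scheme's output is empirically near-optimal for the class.
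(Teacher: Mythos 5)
The paper does not prove this theorem; it cites it to \cite{LittlestoneWa86} and uses it as a black box, so there is no in-paper proof to compare against. Your reconstruction is the standard sample-compression argument and is correct in substance: the key observation that, conditioned on $S_I$, the out-of-compression points are still i.i.d.\ is exactly what licenses the union bound over $\binom{m}{d}$ index sets; the realizable bound $\binom{m}{d}(1-\epsilon)^{m-d}$ and the choice to use $\binom{m}{d}\le(em/d)^d$ rather than $m^d$ are both the right moves; and the agnostic version via Hoeffding plus the $d/m$ correction for the withheld points is the standard extension. Your proof actually gives $\log(1/\delta)$ in the numerator, which is the correct (and standard) dependence; the paper's displayed $\frac{1}{\delta}$ is almost certainly a typo, and $\log(1/\delta)\le 1/\delta$ means you are proving the stronger statement.

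One place I would push for a little more care is the agnostic bullet. The paper's Definition of compression scheme (Section~\ref{sec:compression}) is explicitly for realizable samples --- the decompressed hypothesis is required to be \emph{consistent} with the whole sample --- so the theorem as stated does not literally apply verbatim once the sample is not realizable. Your proof silently upgrades to a notion of agnostic (or ``selection'') compression scheme in which $\DeCom\circ\Com$ is only required to be empirically competitive with the best hypothesis in $\ch$; that is the right fix and is what \cite{LittlestoneWa86} and subsequent treatments do, but it is worth making the changed hypothesis explicit, since without it the sentence ``the schemes of interest output a hypothesis whose empirical error is competitive'' is an unstated assumption rather than a consequence of the definition. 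Once that is stated, the rest of your bookkeeping (Hoeffding per index set, union bound, $d/m$ slack, solve for $m$) gives the advertised $O\!\left(\frac{d\log(d/\epsilon)+\log(1/\delta)}{\epsilon^2}\right)$.
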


\section{The Cantor classes}\label{sec:disc_classes}
\subsection{The first Cantor class}
Let $\cx$ be some finite set and let $\cy=2^{\cx}\cup\{*\}$. For every $A\subseteq \cx$ define $h_A:\cx\to\cy$ by
\[
h_A(x)=\begin{cases}
A & x\in A
\\
* & \text{otherwise}
\end{cases}~.
\]
Finally, let
\[
\ch_{\cx,\mathrm{Cantor}}=\left\{h_A\mid A\subset\cx\right\}~.
\]

\begin{lemma}[\cite{DanielySaBeSh11}]\label{lem:first_Cantor}
\
\begin{itemize}
\item The graph dimension of $\ch_{\cx,\mathrm{Cantor}}$ is $|\cx|$. Therefore, the ERM sample complexity of $\ch_{\cx,\mathrm{Cantor}}$ is $\Omega\left(\frac{|\cx|+\log(1/\delta)}{\epsilon}\right)$.
\item The Natarajan dimension of $\ch_{\cx,\mathrm{Cantor}}$ is $1$. Furthermore, the PAC sample complexity of $\ch_{\cx,\mathrm{Cantor}}$ is $O\left(\frac{\log(1/\delta)}{\epsilon}\right)$.
\end{itemize}
\end{lemma}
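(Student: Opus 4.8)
The plan is to verify the two bullets separately, each by exhibiting an explicit shattered set (for the lower bounds) and an explicit learning strategy or dimension argument (for the upper bounds).

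For the graph dimension: I claim that the whole set $\cx$ is $G$-shattered, which gives $\Gdim \ge |\cx|$, and that no larger set exists trivially since $|\cx|$ is all there is. To see that $\cx$ is $G$-shattered, I would take the witness labeling $h : \cx \to \cy$ to be $h \equiv \cx$ (the constant function whose value is the full set $\cx \in 2^\cx$). Then for any target subset $B \subseteq \cx$ I need $h' \in \ch_{\cx,\mathrm{Cantor}}$ with $h'(x) = h(x) = \cx$ for $x \in B$ and $h'(x) \ne \cx$ for $x \notin B$; the natural candidate is $h' = h_{\cx}$ when $B = \cx$, but for general $B$ I instead pick $h' = h_A$ with $A = \cx$ if $B = \cx$ — wait, more carefully: $h_A(x) = A$ when $x \in A$ and $*$ otherwise, so choosing $A = \cx$ gives $h_{\cx}(x) = \cx$ for all $x$, which only realizes $B = \cx$. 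So the witness should instead be chosen so that the ``agree'' value varies; the correct choice is $h(x) = \cx$ for every $x$ as the witness, and for subset $B$ use $h_A$ with $A$ chosen so that $\{x : h_A(x) = \cx\} = B$. Since $h_A(x) \in \{A, *\}$, we get $h_A(x) = \cx$ only if $A = \cx$ and $x \in \cx$. This shows the naive witness fails, so the right approach is: take the witness value at $x$ to be some fixed element depending only on making disagreement easy — concretely, use the graph-shattering definition with $h$ mapping everything to $*$. Then $h_A$ agrees with $h$ (value $*$) exactly on $A^c$ and disagrees (value $A \ne *$) exactly on $A$; so realizing the target ``agreement set'' $B$ requires $A^c = B$, i.e. $A = B^c$, and indeed $h_{B^c} \in \ch_{\cx,\mathrm{Cantor}}$ works for every $B$. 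Hence $\cx$ is $G$-shattered and $\Gdim = |\cx|$; the ERM sample complexity bound then follows from \thmref{thm:Gdim}.

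For the Natarajan dimension and the PAC upper bound: I would show $\Ndim \le 1$ by a short case analysis — if $\{x_1, x_2\}$ were $N$-shattered with witnesses $h_1, h_2$, then realizing the four patterns $(h_1,h_1), (h_1,h_2), (h_2,h_1), (h_2,h_2)$ on $\{x_1,x_2\}$ forces four hypotheses $h_A$ whose values on $x_1,x_2$ take all four combinations of two prescribed distinct values at each point; but $h_A(x)$ ranges over the two-element set $\{A,*\}$ and as $A$ varies the pair $(h_A(x_1),h_A(x_2))$ is heavily constrained (the non-$*$ value is always the same symbol $A$ across both coordinates), which rules out shattering two points — I would just enumerate the possibilities for which of $x_1,x_2$ lie in $A$ and check no valid choice of $h_1,h_2$ survives. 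For the PAC upper bound $O(\log(1/\delta)/\epsilon)$: the key observation is that a single example $(x,h_A(x))$ with $h_A(x) = A \ne *$ reveals $A$ completely and we are done; the only bad case is seeing only $*$-labels, i.e. all sampled points lie in $A^c$. So the learner outputs $h_A$ if it ever sees a non-$*$ label, and otherwise outputs (say) $h_\emptyset$, whose error is $\cd(A)$; but if $\cd(A) > \epsilon$ then the probability that all $m$ i.i.d. samples avoid $A$ is at most $(1-\epsilon)^m \le e^{-\epsilon m}$, which is $\le \delta$ once $m \ge \ln(1/\delta)/\epsilon$. This is exactly the ``good ERM'' of \cite{DanielySaBeSh11} referenced in \secref{sec:no_imp}.

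The main obstacle is purely bookkeeping in the graph-shattering witness: one must choose the witness function $h$ in the $G$-shattering definition correctly (mapping to $*$, not to a full set) so that the disagreement condition ``$h'(x) \ne h(x)$ for $x \in A \setminus B$'' becomes satisfiable simultaneously with the agreement condition for all $2^{|\cx|}$ choices of $B$ — as the false start above illustrates, the obvious witness does not work, and the non-obvious point is that disagreement with $*$ is free (any $h_A$ with $x \in A$ disagrees) whereas agreement with $*$ pins down $A^c \supseteq B$, so taking $A = B^c$ threads the needle. Everything else (the Natarajan upper bound and the PAC argument) is a routine finite check and a union/Chernoff bound.
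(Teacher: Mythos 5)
Your argument is correct and is essentially the paper's proof: the paper's $G$-shattering witness is $f_\emptyset$, which is exactly the constant-$*$ function you converged on after your false start, and your argument for the second bullet (a non-$*$ label pins down the hypothesis uniquely; if $\cd(A)>\epsilon$ then $m$ i.i.d.\ samples all avoiding $A$ has probability $\le(1-\epsilon)^m\le e^{-\epsilon m}$) is exactly the content of the paper's Lemma~\ref{lem:revealing_label}, which the paper states as a reusable abstraction (any class in which, for each $f$ and $x$, either $f(x)=*$ or $f$ is the unique function taking that value at $x$) and then applies here; you specialize the same argument directly to $\ch_{\cx,\mathrm{Cantor}}$. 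The only things left implicit in your sketch, both minor, are the final enumeration showing $\Ndim\le 1$ (pick the coordinate where some witness value is $\ne *$; it forces a unique hypothesis, contradicting the two required patterns at the other coordinate) and the easy matching lower bound $\Ndim\ge 1$.
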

\begin{proof}
For the first part, it is not hard to see that the function
$f_\emptyset$ witnesses the $G$-shattering of $\cx$. The second part
follows directly from \lemref{lem:revealing_label}, given below.
\end{proof}

\begin{lemma}[essentially \cite{DanielySaBeSh11}]\label{lem:revealing_label}
Let $\ch\subset\cy^\cx$ be a hypothesis class with the following property: There is a label $*\in\cy$ such that, for every $f\in\ch$ and $x\in\cx$, either $f(x)=*$ or $f$ is the only function in $\ch$ whose value at $x$ is $f(x)$. Then, 
\begin{itemize}
\item The PAC sample complexity of $\ch$ is $\le\frac{\log(1/\delta)}{\epsilon}$.
\item $\Ndim(\ch)\le 1$.
\end{itemize} 
\end{lemma}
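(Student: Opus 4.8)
The plan is to handle the two bullets independently. For the first, I would exhibit a single (improper) learning algorithm meeting the stated bound; for the second, a short combinatorial argument using the uniqueness clause in the hypothesis on $\ch$.

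\textbf{The PAC bound.} Consider the algorithm $\ca$ that, on a sample $S$ labeled by an unknown target $h^*\in\ch$, proceeds as follows: if some sampled point $x_i$ has label $h^*(x_i)\ne *$, then by the hypothesis on $\ch$ the function $h^*$ is \emph{the} function in $\ch$ taking that value at $x_i$, so $\ca$ can reconstruct and output $h^*$; otherwise $\ca$ outputs the constant function $x\mapsto *$. (This makes $\ca$ improper, which is allowed when bounding $m_{\pac}$.) To analyze it, fix $h^*$ and a distribution $\cd$ and set $p=\Pr_{x\sim\cd}(h^*(x)\ne *)$. If the sample contains a non-$*$ label the output is $h^*$ and has error $0$; otherwise the output is the constant-$*$ function, whose error against $h^*$ is exactly $p$. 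Hence the output can have error exceeding $\epsilon$ only when $p>\epsilon$ \emph{and} all $m$ sampled points received the label $*$; the latter happens with probability $(1-p)^m\le e^{-\epsilon m}$, which is at most $\delta$ once $\epsilon m\ge\log(1/\delta)$. Since this failure probability only decreases with $m$, we get $m_{\ca}(\epsilon,\delta)\le\frac{\log(1/\delta)}{\epsilon}$, and therefore $m_{\pac}(\epsilon,\delta)\le\frac{\log(1/\delta)}{\epsilon}$ as well.

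\textbf{The Natarajan bound.} I would argue by contradiction. Suppose a set $A$ with $|A|\ge 2$ is $N$-shattered, witnessed by $h_1,h_2:A\to\cy$ with $h_1(x)\ne h_2(x)$ for all $x\in A$; fix two distinct points $x_1,x_2\in A$. Since $h_1(x_1)\ne h_2(x_1)$, at least one of these values is $\ne *$; and since the $N$-shattering condition is invariant under swapping $h_1\leftrightarrow h_2$ together with complementing each subset $B$ (i.e.\ the pair $(h_2,h_1)$ also witnesses $N$-shattering of $A$, via $B\mapsto A\setminus B$), I may assume $h_1(x_1)\ne *$. Instantiating shattering at $B=A$ yields $g\in\ch$ with $g|_A=h_1$, and at $B=\{x_1\}$ yields $g'\in\ch$ with $g'(x_1)=h_1(x_1)$ and $g'(x_2)=h_2(x_2)$. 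Both $g$ and $g'$ take the \emph{non-$*$} value $h_1(x_1)$ at $x_1$, so the uniqueness clause forces $g=g'$; but $g(x_2)=h_1(x_2)\ne h_2(x_2)=g'(x_2)$, a contradiction. Hence no set of size $\ge 2$ is $N$-shattered, i.e.\ $\Ndim(\ch)\le 1$.

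The only mildly delicate points I anticipate are: checking that $\ca$ is well defined (if several sampled points carry non-$*$ labels, each forces the same function $h^*$, so there is no ambiguity), and justifying the normalization ``assume $h_1(x_1)\ne *$'', which is exactly the observation that $N$-shattering is symmetric under the involution $h_1\leftrightarrow h_2$, $B\mapsto A\setminus B$. I do not expect a genuine obstacle: the crux is simply to notice that the uniqueness property collapses two nominally different shattering witnesses ($g$ and $g'$) into one and the same hypothesis, and, for the PAC bound, to use the improper constant-$*$ learner so that an all-$*$ sample still yields error exactly $p\le\epsilon$.
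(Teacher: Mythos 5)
Your proof is correct and takes essentially the same route as the paper's. The only cosmetic difference is in the PAC bound: you explicitly run an improper learner that may output the constant-$*$ function even when it is not in $\ch$, whereas the paper first assumes w.l.o.g.\ that $f_* \equiv *$ belongs to $\ch$ and then phrases the same algorithm as an ERM; both versions yield the identical $(1-p)^m \le e^{-\epsilon m}$ analysis, and the Natarajan argument (two shattering witnesses forced to coincide by the uniqueness clause, then disagreeing at $x_2$) is the paper's argument verbatim.
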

\begin{proof}
We first prove the second part. Assume on the way of contradiction that $\Ndim(\ch)> 1$. Let $\{x_1,x_2\}\subseteq \cx$ be an $N$-shattered set of cardinality $2$ and let $f_{1},f_{2}$ be two functions that witness the shattering. Since $f_{1}(x_1)\ne f_{2}(x_1)$, at least one of $f_{1}(x_1), f_{2}(x_1)$ is different from $*$. W.l.o.g, assume that $f_{1}(x_1)\ne *$. Now, by the definition of $N$-shattering, there is a function $f\in \ch_{\cy,\mathrm{Cantor}}$ such that $f(x_1)=f_1(x_1)$ and $f(x_2)=f_2(x_2)\ne f_1(x_2)$. However, the only function in $\ch$ satisfying $f(x_1)=f_1(x_1)$ is $f_{1}$. A contradiction.

We proceed to the first part. Assume w.l.o.g. the the function $f_*\equiv *$ is in $\ch$. Consider the following algorithm. Given a (realizable) sample
\[
(x_1,y_1),\ldots,(x_m,y_m),
\]
if $y_i=*$ for every $i$ then return the function $f_*$. Otherwise, return the hypothesis $h\in \ch$, that is consistent with the sample. Note the the existence of a consistent hypothesis is guaranteed, as the sample is realizable. This consistent hypothesis is also unique: if $y_i\ne *$ then, by the assumption on $\ch$, there is at most one function $f\in\ch$ for which $h(x_i)=y_i$.

This algorithm is an ERM with the following property: For every learnt hypothesis and underlying distribution, the algorithm might return only one out of two functions -- either $f_*$ or the learnt hypothesis. We claim that the sample complexity of such an ERM must be $\le \frac{\log(1/\delta)}{\epsilon}$. Indeed such an algorithm returns a hypothesis with error $\ge \epsilon$ only if:
\begin{itemize}
\item $\Err(f_*)\ge \epsilon$.
\item For every $i\in [m]$, $y_i=*$.
\end{itemize}
However, if $\Err(f_*)\ge \epsilon$, the probability that $y_i=*$ is $\le 1-\epsilon$. Therefore, the probability of the the second condition is $\le (1-\epsilon)^m\le e^{-m\epsilon}$, which is $\le \delta$ if $m\ge \frac{\log(1/\delta)}{\epsilon}$.
\end{proof}

\subsection{The second Cantor class}
Let $\tilde{\cy}$ be some non-empty finite set. Let $\cx=2^{\tilde{\cy}}$ and let $\cy=\tilde{\cy}
\cup\{*\}$. For every $y\in \tilde{\cy}$ define a function $h_y:\cx\to \cy$ by
\[
h_y(A)=\begin{cases}
y & y\in A
\\
* &\text{otherwise}
\end{cases}~.
\]
Also, let $h_{*}:\cx\to\cy$ be the constant function $*$. Finally, let $\ch_{\cy,\mathrm{Cantor}}=\{h_y\mid y\in\cy\}$.

\begin{lemma}\label{lem:second_Cantor}
\
\begin{itemize}
\item The graph dimension of $\ch_{\cy,\mathrm{Cantor}}$ is $\Theta\left(\log\left(|\cy|\right)\right)$. Therefore, the ERM sample complexity of $\ch_{\cy,\mathrm{Cantor}}$ is $\Omega\left(\frac{\log\left(|\cy|\right)+\log(1/\delta)}{\epsilon}\right)$.
\item The Natarajan dimension of $\ch_{\cy,\mathrm{Cantor}}$ is $1$. Furthermore, the PAC sample complexity of $\ch_{\cy,\mathrm{Cantor}}$ is $O\left(\frac{\log(1/\delta)}{\epsilon}\right)$.
\end{itemize}
\end{lemma}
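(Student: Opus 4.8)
The second part is immediate from earlier machinery: the class $\ch_{\cy,\mathrm{Cantor}}$ satisfies the hypothesis of \lemref{lem:revealing_label} with the distinguished label $*$, since for $y \ne *$ the function $h_y$ is the unique member of the class taking the value $y$ on the set $\{A : y \in A\}$ (and on every $A$ with $y \notin A$, $h_y(A) = *$), while $h_*$ is constantly $*$. Hence $\Ndim(\ch_{\cy,\mathrm{Cantor}}) \le 1$ and the PAC sample complexity is $O\!\left(\frac{\log(1/\delta)}{\epsilon}\right)$; the lower bound $\Ndim \ge 1$ is trivial provided $|\tilde\cy| \ge 1$. The ERM sample complexity bound in the first part then follows from \thmref{thm:Gdim} once the graph dimension is pinned down, so the entire content is the claim $\Gdim(\ch_{\cy,\mathrm{Cantor}}) = \Theta(\log|\cy|)$.

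For the \emph{upper bound} $\Gdim \le O(\log|\cy|)$: a $G$-shattered set $\{A_1,\dots,A_k\} \subseteq \cx$ produces, for each of the $2^k$ subsets $B$, a hypothesis $h_{y_B} \in \ch$ agreeing with a fixed witness $h$ on $B$ and disagreeing off $B$; distinct $B$'s can reuse the same $y$ only in limited ways, and since there are only $|\cy|$ hypotheses in total, a counting argument forces $2^k \le \poly(|\cy|)$, i.e. $k = O(\log|\cy|)$. More carefully, I would fix the witness function $h : \{A_1,\dots,A_k\}\to\tilde\cy$ and observe that the map $B \mapsto h_{y_B}$ cannot be too many-to-one: if $h_{y_B} = h_{y_{B'}}$ then this single hypothesis must agree with $h$ exactly on $B$ and exactly on $B'$, forcing $B = B'$. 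So $B \mapsto y_B$ is injective, giving $2^k \le |\cy|$ directly, hence $k \le \log_2|\cy|$.

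For the \emph{lower bound} $\Gdim \ge \Omega(\log|\cy|)$, the plan is the probabilistic argument flagged in the body: pick $k = c\log|\tilde\cy|$ sets $A_1,\dots,A_k \in \cx = 2^{\tilde\cy}$ by placing each $z \in \tilde\cy$ into each $A_i$ independently with probability $1/2$. For a fixed target subset $B \subseteq [k]$, the ``pattern class'' we need a witness for is: some $y \in \tilde\cy$ with $y \in A_i$ for $i \in B$ and $y \notin A_i$ for $i \notin B$ — i.e. $h_y(A_i) = y$ exactly on $B$; together with $h_*$ handling the all-$*$ pattern, and taking the witness $h$ to be any labeling with $h(A_i) = y$ for a suitable $y$. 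One checks that $\{A_1,\dots,A_k\}$ is $G$-shattered iff for every $\emptyset \ne B \subseteq [k]$ there exists $y \in \tilde\cy$ lying in exactly the $A_i$ with $i \in B$; the probability a fixed $y$ realizes a fixed $B$ is $2^{-k}$, so by independence over the $|\tilde\cy|$ elements the probability that \emph{no} $y$ realizes $B$ is $(1 - 2^{-k})^{|\tilde\cy|}$, and a union bound over the $2^k$ choices of $B$ gives failure probability $\le 2^k(1-2^{-k})^{|\tilde\cy|} \le 2^k \exp(-|\tilde\cy| 2^{-k})$, which is $< 1$ when $k \le \log_2|\tilde\cy| - \log_2\log_2|\tilde\cy| - O(1)$. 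Hence such a shattered set exists, giving $\Gdim \ge \Omega(\log|\tilde\cy|) = \Omega(\log|\cy|)$.

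The main obstacle is getting the $G$-shattering bookkeeping exactly right in the lower bound: one must verify that the single witness function $h$ can be chosen consistently (it can, since for the shattering we only need, for each $B$, \emph{some} $h' = h_{y}$ agreeing with $h$ on $B$ and disagreeing off $B$, and the definition of $G$-shattering fixes $h$ first), and that the roles of $*$ versus genuine labels $y \in \tilde\cy$ are handled — in particular that disagreement off $B$ (where we need $h'(A_i) \ne h(A_i)$) is automatic because $h_y(A_i) = * \ne h(A_i)$ there, once $h$ is chosen to take values in $\tilde\cy$. Everything else is a routine union bound plus the elementary inequality $1 - x \le e^{-x}$.
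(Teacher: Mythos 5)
Your handling of the second bullet matches the paper (via \lemref{lem:revealing_label}), and your upper bound $\Gdim(\ch_{\cy,\mathrm{Cantor}}) \le \log_2|\cy|$ is correct and in fact more elementary than the paper's route, which invokes the general inequality $\Gdim \le 5\log(|\cy|)\Ndim$. Your union-bound approach to the lower bound is also a genuine simplification over the paper's second-moment / Chebyshev computation: the events ``$y$ lies in exactly $\{A_i : i\in B\}$'' are not merely uncorrelated but fully independent across $y\in\tilde{\cy}$, so $\Pr[\text{no }y\text{ realizes }B] = (1-2^{-k})^{|\tilde{\cy}|}$ exactly and the union bound gives both a cleaner proof and a better constant ($k\approx\log_2|\tilde{\cy}|$ rather than the paper's $\approx\tfrac12\log_2|\tilde{\cy}|$). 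The probabilistic half of your argument is fine.

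The gap is in the bridge from the combinatorial condition to $G$-shattering, specifically in your choice of witness. You take $h$ to be $\tilde{\cy}$-valued, and argue that disagreement off $B$ is then automatic because $h_{y_B}(A_i)=*\ne h(A_i)$. But the witness $h$ in the definition of $G$-shattering is fixed \emph{before} $B$ is quantified. If $h(A_i)\in\tilde{\cy}$, then agreement at $i$ forces $y_B = h(A_i)$; hence for any two sets $B\ne B'$ both containing $i$ we would get $y_B = y_{B'}$, so $h_{y_B}=h_{y_{B'}}$ realizes both patterns, forcing $B=B'$. In other words, with a $\tilde{\cy}$-valued witness the realizable subsets are pairwise disjoint (plus $\emptyset$), so at most $k+1$ of the $2^k$ patterns can ever be hit --- the argument collapses already for $k\ge 2$. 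The correct choice (as in the paper's \lemref{lem:first_Cantor} and implicitly in its proof of the present lemma) is $h\equiv *$: then for $B\subsetneq[k]$ the hypothesis $h_y$ with $y$ lying in exactly $\{A_i : i\notin B\}$ agrees with $h$ (i.e.\ outputs $*$) precisely on $B$ and disagrees precisely off $B$, while $h_*$ handles $B=[k]$. Under this witness, the condition you identified --- every nonempty $C\subseteq[k]$ is the exact support pattern of some $y\in\tilde{\cy}$ --- is indeed equivalent to $G$-shattering (and also forces the $A_i$ to be distinct), so your union bound then closes the argument. The fix is local, but as written the verification step does not go through.
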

\begin{proof} The second part of the lemma follows from 
\lemref{lem:revealing_label}. We proceed to the first part.  First, by
equation (\ref{eq:NdimVsGdim}) and the second part,
$\Gdim(\ch_{\cy,\mathrm{Cantor}})\le 5\log(|\cy |)$. It remains to
show that $\Gdim(\ch_{\cy,\mathrm{Cantor}})\ge \Omega\left(\log(|\cy
  |)\right)$. To do so, we must show that there are
$r=\Omega\left(\log(|\cy |)\right)$ sets
$\ca=\{A_1,\ldots,A_r\}\subseteq \cx$ such that $\ca$ is $G$-shattered
by $\ch_{\cy,\mathrm{Cantor}}$. To do so, we will use the
probabilistic method (see e.g. \cite{AlonSp00}). We will choose
$A_1,\ldots,A_r\subseteq \tilde{\cy}$ at random, such that each $A_i$ is
chosen uniformly at random from all subsets of $\tilde{\cy}$ (i.e.,
each $y\in \tilde{\cy}$ is independently chosen to be in $A_i$ with
probability $\frac{1}{2}$) and the different $A_i$'s are
independent. We will show that if $r=\lfloor
\frac{\log(|\cy|-1)}{2}\rfloor-2$, then with positive probability
$\ca=\{A_1,\ldots,A_r\}$ is $G$-shattered and $|\ca |=r$ (i.e., the
$A_i$'s are different).

Denote $d=|\tilde{\cy}|$. Let $\psi:[r]\to \cx$ be the (random)
function $\psi(i)=A_i$ and let $\phi:\cy\to \{0,1\}$ be the function
that maps each $y\in \tilde{\cy}$ to $1$ and $*$ to $0$. Consider the
(random) binary hypothesis class $\ch=\{\phi\circ h_y\circ\psi\mid
y\in \tilde{\cy}\}$. As we will show, for $r=\lfloor
\frac{\log(d)}{2}\rfloor-2$, $E[|\ch|]>2^r-1$. 
In particular, there exists some choice of $\ca=\{A_1,\ldots, A_r\}$ for which $|\ch |>2^r-1$. Fix those sets for a moment. Since always $|\ch |\le 2^r$, it must be the case that $|\ch | = 2^r$, i.e., $\ch=2^{[r]}$. By the definition of $\ch$, it follows that for every $B\subseteq [r]$, there is $h_y\in\ch_{\cy,\mathrm{Cantor}}$ such that for every $i\in B$, $h_y(A_i)=*$, while for every $i\notin B$, $h_y(A_i)\ne *$. It follows that $|\ca|=r$ and $\ca$ is $G$-shattered.

It remains to show that indeed, for $r=\lfloor \frac{\log(d)}{2}\rfloor-2$, $E[|\ch|]>2^r-1$. For every $S\subseteq [r]$, Let $\chi_S$ be the indicator random variable that is $1$ if and only if $1_S\in \ch$. We have
\begin{equation}\label{eq:4}
E[|\ch|]=E[\sum_{S\subseteq \mathbb [r]}\chi_S]=\sum_{S\subseteq \mathbb [r]}E[\chi_S]~.
\end{equation}
Fix some $ S\subseteq [r]$. For every $y\in \tilde{\cy}$ let $\chi_{S,y}$ be the indicator function that is $1$ if and only if $1_S=\phi\circ h_y\circ\psi$. Note that $\sum_{y\in\tilde{\cy}}\chi_{S,y}>0$ if and only if $\chi_S=1$. Therefore, $E[\chi_S]=\Pr\left(\chi_S=1\right)= \Pr\left(\sum_{y\in\tilde{\cy}}\chi_{S,y}>0\right)$. Observe that
\[
E[\sum_{y\in\tilde{\cy}}\chi_{S,y}]
=
\sum_{y\in\tilde{\cy}}\Pr\left(y\in A_i\text{ iff }i\in S\right)=
d\cdot 2^{-r}~.
\]
We would like to use Chebyshev's inequality for the sum $\sum_{y\in\tilde{\cy}}\chi_{S,y}$. For this to be effective, we show next that for different $y_1,y_2\in \tilde{\cy}$, $\chi_{S,y_1}$ and $\chi_{S,y_2}$ are uncorrelated. Note that $E[\chi_{S,y_1}\chi_{S,y_2}]$ is the probability that for every $i\in S$, $y_1,y_2\in A_i$ while for every $i\notin S$, $y_1,y_2\notin A_i$. It follows that
\[
E[\chi_{S,y_1}\chi_{S,y_2}]=2^{-2r}~.
\]
Therefore, $\mathrm{cov}(\chi_{S,y_1}\chi_{S,y_2})=E[\chi_{S,y_1}\chi_{S,y_2}]
-E[\chi_{S,y_1}]E[\chi_{S,y_2}]=2^{-2r}-2^{-r}2^{-r}=0$. We conclude that $\chi_{S,y_1}$ and $\chi_{S,y_2}$ are uncorrelated. Thus, by Chebyshev's inequality,
\begin{eqnarray*}
\Pr\left(\chi_{S}=0\right)&=& \Pr\left(\sum_{y\in\tilde{\cy}}\chi_{S,y}=0\right)
\\
&\le &
\Pr\left(\left|\sum_{y\in\tilde{\cy}}\chi_{S,y}-d\cdot 2^{-r}\right|\ge d\cdot 2^{-r-1}\right)
\\
&\le& \frac{2^{2r+2}}{d^2}\mathrm{var}\left(\sum_{y\in\tilde{\cy}}\chi_{S,y}\right)
\\
&=&\frac{2^{2r+2}}{d^2}\sum_{y\in\tilde{\cy}}\mathrm{var}\left(\chi_{S,y}\right)
\\
&\le &\frac{2^{2r+2}}{d^2}\sum_{y\in\tilde{\cy}}E[\chi_{S,y}]
\\
&=& \frac{2^{2r+2}}{d^2}d2^{-r}=\frac{2^{r+2}}{d}~.
\end{eqnarray*}
Remember that $r=\lfloor \frac{\log(d)}{2}\rfloor-2$, so that $d> 2^{2r+2}$. Hence, $E[\chi_S]=1-\Pr(\chi_S=0)\ge 1-2^{-r}$. Using equation (\ref{eq:4}), we conclude that
\[
E[|\ch |]>(1-2^{-r})2^r=2^r-1.
\]
\end{proof}

\section{Proofs}

\subsection{Some lemmas and additional notations}
Let $\cx',\cy'$ be another instance and label spaces. Let $\Gamma:\cx'\to\cx$ and $\Lambda:\cy\cup\{\circleddash\}\to\cy'\cup\{\circleddash\}$. We denote
\[
\Lambda\circ\ch\circ\Gamma=\{\Lambda\circ h\circ\Gamma\mid h\in\ch\}~.
\]
If $\Gamma$ (respectively $\Lambda$) is the identity function we simplify the above notation to $\Lambda\circ\ch$ (respectively $\ch\circ\Gamma$). We say that a hypothesis class $\ch'\subseteq \left(\cy'\cup\{\circleddash\}\right)^{\cx'}$ is {\em realizable} by $\ch\subseteq \left(\cy\cup\{\circleddash\}\right)^{\cx}$ if $\ch'
\subseteq \Lambda\circ\ch\circ\Gamma$ for some functions $\Gamma$ and $\Lambda$. Note that in this case, the different notions of sample complexity with respect to $\ch'$ are never larger than the corresponding notions with respect to $\ch$.

Let $\ch\subset\left(\cy\cup\{\circleddash\}\right)^\cx$ be a hypothesis class. The {\em disjoint
  union}  of $m$ copies of $\ch$ is the hypothesis class $\ch_m$ whose
instance space is $\cx_m:=\cx\times [m]$, whose label space is $\cy\cup\{\circleddash\}$,
and that is composed of all functions $f:\cx_m\to\cy\cup\{\circleddash\}$ whose
restriction to each copy of $\cx$ is a function in $\ch$ (namely, for every $i\in [m]$, the function $x\mapsto f(x,i)$ belongs to $\ch$). 

\begin{lemma}\label{lem:disjoint_embed}
Let $\ch\subseteq \cy^{\cx}$ be a hypothesis class. Let $\ch_m$ be a disjoint union of $m$ copies of $\ch$.
\begin{enumerate}
\item If $\ch$ is realized by $\ch_{\Psi}$ for some $\Psi:\cx'\times\cy'\to\mathbb R^d$, then $\ch_m$ is realized by $\ch_{\Psi_m}$ for some $\Psi_m:\cx'_m\times\cy'\to\mathbb R^{dm}$. Here, $\cx'_m$ is a disjoint union of $m$ copies of $\cx'$.
\item If $\ch$ is realized by $\ch_{\Psi,R}$ for some $\Psi:\cx'\times\cy'\to  B^d$, then $\ch_m$ is realized by $\ch_{\Psi_m,mR}$ for some $\Psi_m:\cx'_m\times\cy'\to B^{dm}$. Here, $\cx'_m$ is a disjoint union of $m$ copies of $\cx'$.
\item If $\ch$ is realized by $\ch_{d,k}$, then $\ch_m$ is realized by $\ch_{dm,k}$.
\item If $\ch$ is realized by $\ch_{d,k,R}$, then $\ch_m$ is realized by $\ch_{dm,k,mR}$.
\end{enumerate}
\end{lemma}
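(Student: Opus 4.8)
The plan is to handle all four items by a single ``block-diagonal'' construction, exploiting the fact that each target class $\ch_\Psi,\ch_{\Psi,R},\ch_{d,k},\ch_{d,k,R}$ is parametrized by a vector (or matrix) $w$ through inner products $\inner{w,\Phi(x,y)}$ with a fixed feature map $\Phi$, the prediction being the $\argmax$ over $y$ (no-margin case) or the unique $y$ beating every competitor by $1$ (margin case), with $\circleddash$ returned on ties or when no such $y$ exists. Suppose $\ch$ is realized by the relevant target class, say $\ch\subseteq\Lambda\circ\ch_{\mathrm{target}}\circ\Gamma$ with $\Gamma:\cx\to\cx'$ and $\Lambda:\cy'\cup\{\circleddash\}\to\cy\cup\{\circleddash\}$. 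I would realize $\ch_m$ by keeping the same label map $\Lambda_m:=\Lambda$, by setting $\cx'_m:=\cx'\times[m]$ and $\Gamma_m(x,i):=(\Gamma(x),i)$, and by taking the $m$-fold block-diagonal feature map: $\Phi_m((x',i),y)$ is the vector (or matrix) carrying $\Phi(x',y)$ in its $i$-th block of coordinates and zero elsewhere. The one identity that drives everything is that for $w=(w_1,\ldots,w_m)$ one has $\inner{w,\Phi_m((x',i),y)}=\inner{w_i,\Phi(x',y)}$, and likewise for the differences $\Phi_m((x',i),y)-\Phi_m((x',i),y')$; hence the prediction of $w$ at $(x',i)$ in the enlarged class coincides with the prediction of $w_i$ at $x'$ in the original class, ties (and the resulting $\circleddash$) included.

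Granting this, fix $f\in\ch_m$. By definition of the disjoint union, the restriction $x\mapsto f(x,i)$ lies in $\ch$, so it equals $\Lambda\circ h_{w_i}\circ\Gamma$ for some admissible parameter $w_i$ of the target class; stacking $w:=(w_1,\ldots,w_m)$ and invoking the displayed identity yields $\Lambda_m\circ h_w\circ\Gamma_m=f$, so all that remains is to see that $w$ is admissible for the enlarged class. For item~1 there is nothing to check. For item~2 the block-diagonal embedding preserves Euclidean norm, so the range of $\Phi_m$ still lies in $B^{dm}$, and $\|w\|^2=\sum_{i=1}^m\|w_i\|^2\le mR$, whence $h_w\in\ch_{\Psi_m,mR}$; this additivity of the squared norm is exactly why the complexity parameter scales to $mR$.

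Items~3 and~4 are the same construction with the block-diagonal embedding acting on the ``row'' slot: an instance of $\ch_{d,k}$ is a matrix with $d$ rows, and $\Gamma_m(x,i)$ places $\Gamma(x)$ in rows $(i-1)d+1,\ldots,id$ and zeros elsewhere, so a $d$-row instance becomes a $dm$-row one. The structured feature map $\Psi_1$, which forms each column of its output by summing (item~3), respectively $1/q$-averaging (item~4), those columns of the instance indexed by the label, commutes with this embedding, hence is transported to the corresponding feature map of $\ch_{dm,k}$, respectively $\ch_{dm,k,mR}$. As before, the restrictions of $f$ supply block-matrices $W^{(i)}$, their vertical stack $W$ represents $f$, for the margin version the $1/q$-normalization keeps every column of the embedded instances in the unit ball of $\reals^{dm}$ so $\Psi_1$ still maps into $B^{dm}$, and $\|W\|^2=\sum_i\|W^{(i)}\|^2\le mR$.

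I do not anticipate a genuine difficulty here; the content is bookkeeping. The points that warrant care are: (a) checking that ``realizable by'' composes correctly with the construction, i.e.\ that the enlarged target class, precomposed with $\Gamma_m$ and postcomposed with $\Lambda_m$, actually contains $\ch_m$; (b) verifying the range of the enlarged feature map stays inside $B^{dm}$ for the two margin classes; (c) confirming the squared-norm budget adds up to exactly $mR$; and (d) making sure ties are sent to ties so the $\circleddash$ outputs are faithfully reproduced, including the degenerate case $\Gamma(x)=0$, where every hypothesis of the target class already outputs $\circleddash$ at $\Gamma(x)$, so the corresponding coordinate of every $f\in\ch_m$ is forced and automatically consistent.
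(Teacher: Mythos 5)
Your construction is the same block-diagonal embedding the paper uses: define $\Gamma_m(x,i)=(\Gamma(x),i)$, keep $\Lambda$, and place $\Psi(\cdot,\cdot)$ in the $i$-th block of $\reals^{dm}$ (equivalently, apply the isometry $T_i:e_j\mapsto e_{(i-1)d+j}$), so that $\inner{w,\Psi_m((x',i),y')}=\inner{w_i,\Psi(x',y')}$ and the prediction of $w$ at $(x',i)$ equals the prediction of $w_i$ at $x'$. The paper proves only item~1 this way and states that the other three are ``very similar''; your write-up correctly supplies those omitted checks (norm-preservation of the block embedding to keep ranges in $B^{dm}$, additivity $\|w\|^2=\sum_i\|w_i\|^2\le mR$, and the compatibility of the row-block embedding with the structured feature map and its $1/q$-normalization), so the two proofs coincide in approach and substance.
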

\begin{proof}
We prove only part 1. The remaining three are very similar. Let $\Gamma:\cx\to\cx',\Lambda:\cy'\to\cy$ be two mappings for which
\[
\ch\subseteq \Lambda\circ\ch_{\Psi}\circ\Gamma~.
\]
Let $\cx_m=\cx\times [m]$ be a disjoint union of $m$ copies of $\cx$. Let $T_i:\mathbb R^d\to\mathbb R^{dm}$ be the linear mapping that maps $e_j$ to $e_{(i-1)d+j}$. Define $\Psi_m:\cx_m\times\cy\to\mathbb R^{dm}$ by $\Psi_m((x,i),y)=T_i(\Psi(x,y))$. Define $\Gamma_m:\cx_m\to \cx'_m$ by $\Gamma_m(x,i)=(\Gamma(x),i)$. It is not hard to check that
\[
\ch_m\subseteq \Lambda\circ\ch_{\Psi_m}\circ\Gamma_m~.
\]
\end{proof}

\begin{lemma}\label{lem:disjoint_Gdim}
Let $\ch\subseteq \left(\cy\cup\{\circleddash\right)^{\cx}$ be a hypothesis class and let $\ch_m$ be a disjoint union of $m$ copies of $\ch$. Then $\Gdim(\ch_m)=m\cdot\Gdim(\ch)$.
\end{lemma}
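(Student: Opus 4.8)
The plan is to prove the two inequalities separately, in each direction transferring a $G$-shattered set between $\ch$ and $\ch_m$ copy-by-copy. Throughout write a subset $B$ of $\cx \times [m]$ in terms of its slices $B_i = \{x\in\cx : (x,i)\in B\}$, so that $|B| = \sum_{i=1}^m |B_i|$, and recall that $f\in\ch_m$ iff $f(\cdot,i)\in\ch$ for every $i$.

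For the lower bound $\Gdim(\ch_m)\ge m\cdot\Gdim(\ch)$: let $A\subseteq\cx$ be $G$-shattered by $\ch$ with witness $h\colon A\to\cy$ and $|A|=\Gdim(\ch)$. I claim that $A\times[m]$ is $G$-shattered by $\ch_m$ with witness $\tilde h(x,i)=h(x)$ (which lands in $\cy$). Given any $B\subseteq A\times[m]$, for each $i$ use the $G$-shattering of $A$ to choose $h_i\in\ch$ with $h_i(x)=h(x)$ for $x\in B_i$, $h_i(x)\ne h(x)$ for $x\in A\setminus B_i$, and $h_i(A)\subseteq\cy$. Define $h'(x,i)=h_i(x)$. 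Since $h'(\cdot,i)=h_i\in\ch$ for every $i$, we have $h'\in\ch_m$, and by construction $h'$ agrees with $\tilde h$ exactly on $B$ within $A\times[m]$ and avoids $\circleddash$ there. Hence $A\times[m]$, of size $m|A|$, is $G$-shattered.

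For the upper bound $\Gdim(\ch_m)\le m\cdot\Gdim(\ch)$: let $A'\subseteq\cx\times[m]$ be $G$-shattered by $\ch_m$ with witness $g\colon A'\to\cy$. It suffices to show that each slice $A'_i$ is $G$-shattered by $\ch$, since then $|A'|=\sum_i|A'_i|\le m\cdot\Gdim(\ch)$. Fix $i$ and take the witness $g_i(x)=g(x,i)$ on $A'_i$. For any $B_i\subseteq A'_i$, apply the $G$-shattering of $A'$ to the subset $B_i\times\{i\}\subseteq A'$: there is $f\in\ch_m$ with $f=g$ on $B_i\times\{i\}$, $f\ne g$ on $A'\setminus(B_i\times\{i\})$, and $f$ avoiding $\circleddash$ on $A'$. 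In particular $f(x,i)=g_i(x)$ for $x\in B_i$ and $f(x,i)\ne g_i(x)$ for $x\in A'_i\setminus B_i$. Since $f(\cdot,i)\in\ch$, this shows $B_i$ is realized w.r.t.\ the witness $g_i$; as $B_i$ was arbitrary, $A'_i$ is $G$-shattered by $\ch$, so $|A'_i|\le\Gdim(\ch)$.

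There is no serious obstacle here; the argument is entirely bookkeeping. The only point that needs a moment's care is the $\circleddash$ condition in the definition of $G$-shattering — both the witness and each realizing hypothesis must avoid $\circleddash$ on the shattered set — but this property is preserved automatically, since gluing hypotheses that avoid $\circleddash$ on their respective slices yields a function avoiding $\circleddash$ on the union, and restricting a function that avoids $\circleddash$ on $A'$ to a single copy avoids $\circleddash$ on the corresponding slice.
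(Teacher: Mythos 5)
Your proof is correct and complete. The paper itself gives no argument here — it dismisses the lemma with "a routine verification" — and your slice-by-slice transfer of $G$-shattered sets is exactly the routine verification that was intended: the lower bound by gluing per-copy realizing hypotheses into an element of $\ch_m$, the upper bound by restricting a realizing hypothesis in $\ch_m$ to a single copy and summing $|A'| = \sum_i |A'_i|$. Your closing remark on the $\circleddash$ condition correctly addresses the one place where an unwary reader might worry; the property does pass through both directions for the reasons you give.
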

\begin{proof}
A routine verification.
\end{proof}

\subsection{Proof of \thmref{thm:counter_example}}
For simplicity, we prove the theorem for $d$ even and $d=\infty$. For finite $d$, fix some $d$-elements set $\cx_d$. Let $\cy_d=2^{\cx_d}\cup\{*\}$. For $A\subseteq \cx_d$ define $h_A:\cx_d\to\cy_d$ by
\[
h_A(x)=
\begin{cases}
A & x\in A
\\
* & \text{otherwise}
\end{cases}~.
\]
Finally, let
\[
\ch_d=\left\{h_A\mid |A|= \frac{d}{2}\right\}~.
\]
We next define a ``limit" of the classes $\ch_d$. Suppose that the sets $\left\{\cx_d\right\}_{d\text{ is even integer}}$ are pairwise disjoint. Let $\cx_{\infty}=\cup_{d\text{ is even}}\cx_d$ and $\cy_\infty=\left(\cup_{d\text{ is even}}2^{\cx_d}\right)\cup\{*\}$. For $A\subseteq \cx_d$, extend $h_A:\cx_d\to \cy_d$ to a function  $h_A:\cx_\infty\to \cy_\infty$ by defining it to be $*$ outside of $\cx_d$. Finally, let
\[
\ch_\infty=\left\{h_A\mid \text{for some }d,\;A\subseteq \cx_d\text{ and }|A|= \frac{d}{2}\right\}~.
\]

We will use the following version of Chernoff's bound:
\begin{theorem}\label{thm:chern}
Let $X_1,\ldots,X_n\in \{0,1\}$ be independent random variables, $X=X_1+\ldots+X_n$ and $\mu=\E[X]$. Then $\Pr\left(X\ge 2\mu\right)\le \exp\left(-\frac{\mu}{3}\right)$.
\end{theorem}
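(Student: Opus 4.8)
The plan is to run the standard exponential-moment (Chernoff) argument and then absorb the resulting constant into the clean exponent $-\mu/3$. First I would fix a parameter $t>0$, to be chosen at the end, and apply Markov's inequality to the nonnegative random variable $e^{tX}$:
\[
\Pr\left(X\ge 2\mu\right)=\Pr\left(e^{tX}\ge e^{2t\mu}\right)\le e^{-2t\mu}\,\E\left[e^{tX}\right].
\]
Next, using the independence of $X_1,\ldots,X_n$, I would factor $\E\left[e^{tX}\right]=\prod_{i=1}^n\E\left[e^{tX_i}\right]$. Writing $p_i:=\Pr(X_i=1)$, each factor is $\E\left[e^{tX_i}\right]=1+p_i(e^t-1)\le\exp\left(p_i(e^t-1)\right)$ by the elementary inequality $1+x\le e^x$. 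Taking the product and using $\mu=\sum_{i=1}^n p_i$ gives $\E\left[e^{tX}\right]\le\exp\left(\mu(e^t-1)\right)$, hence
\[
\Pr\left(X\ge 2\mu\right)\le\exp\left(\mu\left(e^t-1-2t\right)\right).
\]

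It then remains to choose $t$. Minimizing $g(t)=e^t-1-2t$ over $t>0$ via $g'(t)=e^t-2=0$ yields $t=\ln 2$, where $g(\ln 2)=1-2\ln 2$. Since $e^{2/3}<2$ we have $\ln 2>\frac{2}{3}$, so $1-2\ln 2<-\frac{1}{3}$; combined with $\mu\ge 0$ this gives
\[
\Pr\left(X\ge 2\mu\right)\le\exp\left((1-2\ln 2)\mu\right)\le\exp\left(-\frac{\mu}{3}\right),
\]
which is the claimed bound. I expect no real obstacle here: every step is routine, and the only point needing any care is the final numerical comparison $1-2\ln 2\le -\frac{1}{3}$, which is precisely the slack that lets one state the convenient exponent $-\mu/3$ in place of the sharper $-(2\ln 2-1)\mu$.
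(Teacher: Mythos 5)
Your proof is correct and complete: the Markov-plus-MGF bound, the factorization via independence, the $1+x\le e^x$ step, the choice $t=\ln 2$, and the final numerical comparison $1-2\ln 2<-\frac{1}{3}$ (via $e^{2/3}<2$) are all sound. Note, however, that the paper does not prove this statement at all; it simply cites it as a standard version of Chernoff's bound and uses it as a black box, so there is no paper proof to compare against. Your derivation is the standard one and would be an entirely appropriate way to fill in that gap.
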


We are now ready to prove \thmref{thm:counter_example}.  The first
part follows from \lemref{lem:revealing_label}. The last part is a
direct consequence of the first and second part.  We proceed to the
second part. For $d<\infty$, the task of properly learning $\ch_d$ can
be easily reduced to the task of properly learning
$\ch_{\infty}$. Therefore, the sample complexity of learning
$\ch_{\infty}$ by a proper learning algorithm is lower bounded by the
sample complexity of properly learning $\ch_d$. Therefore, it is
enough to prove the second part for finite $d$.

Fix some $x_0\in\cx$. Let $\epsilon>0$. Let $A\subset\cx_d\setminus\{x_0\}$ be a set with $\frac{d}{2}$ elements. Let $\cd_{A}$ be a distribution on $\cx_d\times \cy_d$ that assigns a probability of $1-16\epsilon$ to some point $(x_0,h_A(x_0))\in \cx_d\times\cy_d$ and is uniform on the remaining points of the form $\{(x,h_A(x))\mid x\not\in A\}$. 

We claim that there is some $A$ such that whenever $\ca$ runs on $\cd_{A}$ with $m\le \frac{1}{128}\frac{d}{\epsilon}$ examples, it outputs with probability $\ge \frac{1}{2}$ a hypothesis with error $\ge \epsilon$. This shows that for every $\delta<\frac{1}{2}$, $m_{\ca}(\epsilon,\delta)\ge \frac{1}{128}\frac{d}{\epsilon}$. Also, since $\ch_d$ contains two different function that agree on some point, by a standard argument, we have $m_{\ca}(\epsilon,\delta)=\Omega\left(\frac{\log(1/\delta)}{\epsilon}\right)$. Combining these two estimates, the proof is established.

It remains to show the existence of such $A$. Suppose that $A$ is chosen uniformly at random among all subsets of $\cx_d\setminus\{x_0\}$ of size $\frac{d}{2}$.
Let $X$ be the random variable counting the number of samples, out of
$\frac{1}{128}\frac{d}{\epsilon}$ i.i.d. examples drawn from $\cd_A$,
which are not $(x_0,h_A(x_0))$. We have
$\E[X]=\frac{1}{8}d$. Therefore, by Chernoff's bound \ref{thm:chern},
with probability $>1-\exp\left(-\frac{d}{24}\right)>\frac{1}{2}$, the
algorithm will see less than $\frac{d}{4}$ examples whose instance is
from $\cx\setminus\{x_0\}\setminus A$. Conditioning on this event, $A$
is a uniformly chosen random set of size $\frac{d}{2}$ that is chosen
uniformly from all subsets of a set $\cx'\subset\cx$ with $|\cx'|\ge\frac{3}{4}d$ ($\cx'$ is the set of all points that are not present in the sample), and
the hypothesis returned by the algorithm is $h_{B}$, where $B\subset\cx$ is a subset of size $\frac{d}{2}$ that is independent from $A$. It is not hard to see that in this case $\mathbb{E}|B\setminus A|\ge \frac{1}{6}d$. Hence, there exists some $A$ for which, with probability $>\frac{1}{2}$ over the choice of the sample, $|B\setminus A|\ge \frac{1}{6}d$. For such $A$ we have, since $h_B$ errs on all elements in $B\setminus A$ and the probability of each such element is $\ge\frac{16\epsilon}{\frac{d}{2}}=\frac{32}{d}\epsilon$,
\[
\Err_{\cd_A}(h_B)\ge |B\setminus A|\frac{32\epsilon}{d}
\ge
\frac{d}{6}\frac{32\epsilon}{d}>\epsilon
\]
with probability $>\frac{1}{2}$ over the choice of the sample.

\subsection{Proof of \lemref{lem:out_deg}}
We first prove it to finite hypergraphs. We use induction on the number of vertices. By assumption, $d(G)\le d$. Therefore, there is $v_0\in V$ with $d(v_0)\le d$. Let $G'=(V',E')=G[V\setminus\{v_0\}]$. By the induction hypothesis, there exists an orientation $h':E'\to V'$ with maximal out-degree $d$. We define an orientation $h:E\to V$ by
\[
h(e)=\begin{cases}
v & e=\{v_0,v\}
\\
h'(e\setminus \{v_0\}) & \text{otherwise}
\end{cases}
\]
The lemma extend to the case where $\cy$ is infinite by a standard application of the compactness theorem for propositional calculus.

\subsection{Proof of theorem \ref{thm:main_opt_alg_pac}}
Let $\ca$ be some learning algorithm, and denote by $\ci$ the one inclusion algorithm. Suppose that we run $\ca$ on $m_{\ca,\ch}\left(\frac{\epsilon}{2},\frac{\epsilon}{2}\right)$ examples, obtain a 
hypothesis $h$ and predict $h(x)$ on some new example. The probability of error if $\le \left(1-\frac{\epsilon}{2}\right)\frac{\epsilon}{2}+\frac{\epsilon}{2}\le \epsilon$. By theorem \ref{thm:main_opt_alg}, it follows that
\[
m_{\ca,\ch}\left(\frac{\epsilon}{2},\frac{\epsilon}{2}\right)
\ge
\min\left\{m\mid \frac{1}{2e}\frac{\mu_{\ch}(m)}{m}\le\epsilon\right\}=:\bar{m}~.
\]
Now, if we run the one inclusion algorithm on $\bar{m}$ examples then, again by theorem \ref{thm:main_opt_alg}, the probability that the hypothesis it return will err a new example is $\le 2e\epsilon$. Therefore, the probability that the error of the returned hypothesis is $\ge 4e\epsilon$ is $\le \frac{1}{2}$. In follows that
\[
\bar{m}\ge m_{\ci,\ch}\left(4e\epsilon,\frac{1}{2}\right)~.
\]
Combining the two inequalities, we obtain that
\[
m_{\ci,\ch}\left(4e\epsilon,\frac{1}{2}\right)\le m_{\ca,\ch}\left(\frac{\epsilon}{2},\frac{\epsilon}{2}\right)
\]
Since this is true for every algorithm $\ca$, we have
\[
m_{\ci,\ch}\left(4e\epsilon,\frac{1}{2}\right)\le m_{\mathrm{PAC},\ch}\left(\frac{\epsilon}{2},\frac{\epsilon}{2}\right)\le  m_{\mathrm{PAC},\ch}\left(\frac{\epsilon}{4},\frac{1}{2}\right)\cdot O\left(\log(1/\epsilon)\right)
\]
Here, the last inequality follows by a standard repetition argument. Equivalently,
\[
m_{\ci,\ch}\left(\epsilon,\frac{1}{2}\right)\le m_{\mathrm{PAC},\ch}\left(\frac{\epsilon}{16e},\frac{1}{2}\right)\cdot O\left(\log(1/\epsilon)\right)
\]
Again, using a repetition argument we conclude that
\[
m_{\overline{\ci},\ch}(\epsilon,\delta)\le
m_{\ci,\ch}\left(\frac{\epsilon}{2},\frac{1}{2}\right)\cdot O\left(\log(1/\delta)\right)
\le
m_{\mathrm{PAC},\ch}\left(\frac{\epsilon}{32e},\frac{1}{2}\right)\cdot O\left(\log(1/\delta)\log(1/\epsilon)\right)
\]

\subsection{Validity of the compression scheme given in
  \secref{subsec:compressionDim}} \label{subsec:compressionDimValidity}

It is not hard to see that the hypothesis we output is the minimal-norm vector $w\in \conv(Z)$ (where $Z$ is the set defined in the compression step).
It is left to show that $w$ makes no errors on the original sample. Indeed, otherwise there exists $z\in Z$ for which $\inner{w,z}\le 0$. By claim \ref{claim:comp_1}, $z\ne 0$.
For $\alpha=\frac{\|w\|^2}{\|z\|^2+\|w\|^2}\in (0,1)$, let $w'=(1-\alpha)w+\alpha z$. We have that $w'\in \conv(Z)$. Moreover,
\begin{align*}
\|w'\|^2 &= (1-\alpha)^2\|w\|^2+\alpha^2\|z\|^2+2\alpha(1-\alpha)\inner{w,z}
\le  (1-\alpha)^2\|w\|^2+\alpha^2\|z\|^2
\\
&=\frac{\|z\|^4
  \|w\|^2+\|w\|^4\|z\|^2}{\left(\|z\|^2+\|w\|^2\right)^2} 
= \frac{\|z\|^2 \|w\|^2}{\|z\|^2+\|w\|^2}
< \|w\|^2~.
\end{align*}
This contradicts the minimality of $w$.
It only remains to prove the following claim, which was used in the analysis.
\begin{claim}\label{claim:comp_1}
Let $(x_1,y_1),\ldots,(x_m,y_m)$ be a realizable sample and let $Z$ be the set of all vectors of the form $\Psi(x_i,y_i)-\Psi(x_i,y)$ for $y\ne y_i$. Then $0\not\in \conv(Z)$.
\end{claim}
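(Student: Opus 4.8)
The plan is to use the realizability hypothesis directly. Since the sample $(x_1,y_1),\ldots,(x_m,y_m)$ is realizable by $\ch_\Psi$, there is a weight vector $w^*\in\reals^d$ with $h_{w^*}(x_i)=y_i$ for every $i\in[m]$. The first step is to unwind what this means at the level of inner products: because $h_{w^*}(x_i)=y_i\in\cy$ and not the ``don't know'' symbol $\circleddash$, the maximum of $y\mapsto\inner{w^*,\Psi(x_i,y)}$ is attained \emph{uniquely} at $y=y_i$ --- had there been a tie, the definition of $h_{w^*}$ would force $h_{w^*}(x_i)=\circleddash$. Consequently, for every $i$ and every $y\ne y_i$ we obtain the strict inequality $\inner{w^*,\Psi(x_i,y_i)-\Psi(x_i,y)}>0$; that is, $\inner{w^*,z}>0$ for every $z\in Z$, so $w^*$ is the normal of a hyperplane through the origin that strictly separates the origin from $Z$.

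The second step is a one-line convexity argument. Suppose toward a contradiction that $0\in\conv(Z)$ and write $0=\sum_{j}\lambda_j z_j$ with $z_j\in Z$, $\lambda_j\ge 0$, and $\sum_j\lambda_j=1$. Taking the inner product with $w^*$ gives $0=\inner{w^*,0}=\sum_j\lambda_j\inner{w^*,z_j}$. Every summand is nonnegative, and since the $\lambda_j$ sum to $1$ at least one of them is strictly positive; for that index $j$ the term $\lambda_j\inner{w^*,z_j}$ is strictly positive by the previous step, so the right-hand side is strictly positive --- a contradiction. (If $Z=\emptyset$ then $\conv(Z)=\emptyset$ and there is nothing to prove.) Hence $0\notin\conv(Z)$.

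I do not expect any genuine obstacle: the entire content is in unwinding the definition of $h_w$ and of ``realizable''. The only point deserving a word of care is the strictness of the $\argmax$ --- equivalently, that the realizing classifier does not output $\circleddash$ on any sample point --- which is precisely what realizability (zero error against a $\cy$-valued target) guarantees.
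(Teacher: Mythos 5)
Your proof is correct and follows the same route as the paper's: produce a realizing $w^*$, observe that realizability (in particular, that the target labels lie in $\cy$ rather than being $\circleddash$, so the $\argmax$ is strict) gives $\inner{w^*,z}>0$ for every $z\in Z$, and then conclude by linearity/convexity that $0\notin\conv(Z)$. The paper states these two steps in one line each; you have merely spelled out the tie-breaking point and the convex-combination contradiction, which are both implicit in the paper's ``Since the sample is realizable'' and ``Clearly, this holds also for every $z\in\conv(Z)$.''
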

\begin{proof}
  Since the sample is realizable, there exists a vector $w$ in
  $\mathbb R^d$ for which, $\forall z\in Z,\;\inner{w,z}>0$.  Clearly,
  this holds also for every $z\in\conv(Z)$, hence $0\not\in
  \conv(Z)$.
\end{proof}

\subsection{Proof of the second part of \thmref{thm:gen_non_margin}}

Without loss of generality, we assume that $\cy$ consists of $2^n+1$
elements for some natural number $n$ (otherwise, use only $2^n+1$
labels, where $n$ is the largest number satisfying $2^n+1\le
|\cy|$). Let $\cx$ be a set consisting of $n$ elements. By renaming
the names of the labels, we can assume that $\cy=2^{\cx}\cup\{*\}$. By
\lemref{lem:second_Cantor}, the ERM sample complexity of
$\ch_{\cx,\mathrm{Cantor}}$ is
$\Omega\left(\frac{\log(|\cy|)+\log(1/\delta)}{\epsilon}\right)$. We
will show that there exists a function $\Psi:\cx\times\cy\to \mathbb
R^3$, such that $\ch_{\cx,\mathrm{Cantor}}$ is realized by
$\ch_{\Psi}$. It follows that the ERM sample complexity of
$\ch_{\Psi}$ is also
$\Omega\left(\frac{\log(|\cy|)+\log(1/\delta)}{\epsilon}\right)$. Therefore,
the second part of \thmref{thm:gen_non_margin} is proved for
$d=3$. The extension of the result to general $d$ follows from
\lemref{lem:disjoint_embed}.

{\bf Definition of $\Psi$:} Denote $k=2^{|\cx|}$ and let $f:2^{\cx}\to \left\{0,1,\ldots, k-1\right\}$ be some one-to-one mapping.  For $A\subseteq \cx$ define
\[
\phi(A)=\left(\cos\left(\frac{2\pi f(A)}{k}\right),\sin\left(\frac{2\pi f(A)}{k}\right),0\right)~.
\]
Also, define
\[
\phi(*)=\left(0,0,1\right)~.
\]
Note that for different subsets $A,B\subseteq \cx$ we have that
\begin{equation}\label{eq:2}
\inner{\phi(A),\phi(B)}=
\cos\left(\frac{2\pi(f(A)-f(B))}{k}\right)
\le \cos\left(\frac{2\pi}{k}\right)<\frac{1}{2}+\frac{1}{2}\cos\left(\frac{2\pi}{k}\right)<1
\end{equation}
Define $\Psi:\cx\times\cy\to \mathbb R^3$ by
\[
\forall A\subset\cx,\;\;\Psi(x,A)=\begin{cases}
\phi(A) & x\in A
\\
0 & x\not\in A
\end{cases}
\]
\[
\Psi(x,*)=\left(\frac{1}{2}+\frac{1}{2}\cos\left(\frac{2\pi}{k}\right)\right) \cdot \phi(*)
\]
\begin{claim}
$\ch_{\cx,\mathrm{Cantor}}$ is realized by $\ch_{\Psi}$.
\end{claim}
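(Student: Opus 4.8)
The plan is to prove the stronger statement $\ch_{\cx,\mathrm{Cantor}}\subseteq\ch_{\Psi}$. Since both classes have instance space $\cx$ and label space $\cy$, in the definition of ``realized by'' we may take $\Gamma$ and $\Lambda$ to be the identity maps, so it suffices to produce for every $A\subseteq\cx$ a weight vector $w_A\in\mathbb R^3$ with $h_{w_A}=h_A$. The candidate I would use is $w_A=\phi(A)+\phi(*)$. The point is that $\phi(A)$ lies in the plane $\{z=0\}$ while $\phi(*)=(0,0,1)$, so $\inner{\phi(A),\phi(*)}=0$; consequently the $\phi(A)$-part of $w_A$ governs the competition among the subset-labels and the $\phi(*)$-part governs the label $*$, essentially independently.

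Then I would check $h_{w_A}(x)=h_A(x)$ by splitting on whether $x\in A$. Abbreviate $c=\frac12+\frac12\cos(2\pi/k)$, so that $\Psi(x,*)=c\,\phi(*)$ and $0<c<1$ (here one uses $k=2^{|\cx|}\ge 4$, i.e.\ $|\cx|\ge 2$, which is without loss of generality). If $x\in A$: $\inner{w_A,\Psi(x,A)}=\|\phi(A)\|^2=1$, whereas for $B\subseteq\cx$ with $B\ne A$ one has $\inner{w_A,\Psi(x,B)}=\inner{\phi(A),\phi(B)}<1$ if $x\in B$ (by inequality (\ref{eq:2})) and $=0$ if $x\notin B$, and $\inner{w_A,\Psi(x,*)}=c<1$; so $A$ is the unique maximizer and $h_{w_A}(x)=A$. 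If $x\notin A$: $\inner{w_A,\Psi(x,*)}=c$, while $\inner{w_A,\Psi(x,A)}=0$ and for $B\ne A$ one has $\inner{w_A,\Psi(x,B)}=\inner{\phi(A),\phi(B)}\le\cos(2\pi/k)<c$ if $x\in B$ and $=0<c$ if $x\notin B$; so $*$ is the unique maximizer and $h_{w_A}(x)=*$. In both cases this matches $h_A(x)$, which proves the claim.

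There is no real obstacle here --- it is a routine verification --- but the one point that must be handled with care is the scaling constant $c$ in the definition of $\Psi(x,*)$. It is chosen precisely so that $\cos(2\pi/k)<c<1$: the lower bound forces $*$ to beat every subset-label at points outside $A$ (where each $\inner{\phi(A),\phi(B)}$ is at most $\cos(2\pi/k)$ by (\ref{eq:2})), and the upper bound prevents $*$ from ever beating the correct subset-label $A$ at points inside $A$ (where $\inner{w_A,\Psi(x,A)}=1$). Together with $\inner{\phi(A),\phi(*)}=0$, which decouples the two regimes, this is exactly what makes the case analysis close. Finally, the passage from $d=3$ to general $d$ is not part of this claim; it follows from \lemref{lem:disjoint_embed}, as noted in the surrounding discussion.
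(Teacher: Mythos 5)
Your proposal is correct and follows essentially the same route as the paper: the same weight vector $w=\phi(A)+\phi(*)$, the same case split on $x\in A$ versus $x\notin A$, and the same use of inequality (\ref{eq:2}). The only addition is your explicit note that one needs $|\cx|\ge 2$ so that $c=\frac12+\frac12\cos(2\pi/k)>0$, which the paper leaves implicit.
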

\proof
We will show that $\ch_{\cx,\mathrm{Cantor}}\subseteq \ch_{\Psi}$. Let $B\subseteq \cx$. We must show that $h_{B}\in \ch_{\Psi}$. Let $w\in\reals^3$ be the vector
\[
w=\phi(B)+\phi(*)~.
\]
We claim that for the function $h_w\in\ch_{\Psi}$, defined by $w$ we have $h_w=h_B$. Indeed, let $x\in \cx$ we split into the cases $x\in B$ and $x\notin B$. 

{\bf Case 1 ($x\in B$):}
We must show that $h_w(x)=B$. That is, for every $y\in \cy\setminus\{B\}$,
\[
\inner{w,\Psi(x,B)}>\inner{w,\Psi(x,y)}~.
\]
Note that
\[
\inner{w,\Psi(x,B)}=\left\langle\phi(B)+\phi(*),\phi(B)\right\rangle=1~.
\]
Therefore, for every $y\in \cy\setminus\{B\}$, we must show that $1>\inner{w,\Psi(x,y)}$. We split into three cases. If $y=A$ for some $A\subseteq \cx$ and $x\in A$ then, using equation (\ref{eq:2}),
\[
\inner{w,\Psi(x,y)}=\left\langle\phi(B)+\phi(*),\phi(A)\right\rangle=\left\langle\phi(B),\phi(A)\right\rangle<1~.
\]
If $y=A$ for some $A\subseteq \cx$ and $x\not\in A$ then,
\[
\inner{w,\Psi(x,y)}=\left\langle\phi(B)+\phi(*),0\right\rangle=0<1~.
\]
If $y=*$ then,
\[
\inner{w,\Psi(x,y)}=\left\langle\phi(B)+\phi(*),\left(\frac{1}{2}+\frac{1}{2}\cos\left(\frac{2\pi}{k}\right)\right)\cdot\phi(*)\right\rangle=
\frac{1}{2}+\frac{1}{2}\cos\left(\frac{2\pi}{k}\right)<1~.
\]

{\bf Case 2 ($x\notin B$):} We must show that $h_w(x)=*$. That is, for every $A\in \cy\setminus\{*\}$,
\[
\inner{w,\Psi(x,*)}>\inner{w,\Psi(x,A)}~.
\]
Note that
\[
\inner{w,\Psi(x,*)}=\left\langle\phi(B)+\phi(*),\left(\frac{1}{2}+\frac{1}{2}\cos\left(\frac{2\pi}{k}\right)\right)\phi(*)\right\rangle=\frac{1}{2}+\frac{1}{2}\cos\left(\frac{2\pi}{k}\right)~.
\]
Therefore, for every $A\in \cy\setminus\{B\}$, we must show that $\frac{1}{2}+\frac{1}{2}\cos\left(\frac{2\pi}{k}\right)>\inner{w,\Psi(x,A)}$. Indeed, if $x\in A$ then $A\ne B$ (since $x\notin B$). Therefore, using equation (\ref{eq:2}),
\[
\inner{w,\Psi(x,A)}=\left\langle\phi(B)+\phi(*),\phi(A)\right\rangle=\left\langle\phi(B),\phi(A)\right\rangle<\frac{1}{2}+\frac{1}{2}\cos\left(\frac{2\pi}{k}\right)~.
\]
If $x\notin A$ then 
\[
\inner{w,\Psi(x,A)}=\left\langle\phi(B)+\phi(*),0\right\rangle=
0<\frac{1}{2}+\frac{1}{2}\cos\left(\frac{2\pi}{k}\right)~.
\]

\subsection{Proof of \thmref{thm:gen_margin}}
To prove the first part of \thmref{thm:gen_margin}, we will rely again on \thmref{thm:compression}. We will show a compression scheme of size $O(R)$, which is based on the multiclass perceptron. This compression scheme is not new. However, for completeness, we briefly survey it next. Recall that the multiclass perceptron is an online classification algorithm. At each step it receives an instance and tries to predict its label, based on the observed past. The two crucial properties of the preceptron that we will rely on are the following:
\begin{itemize}
\item If the perceptron runs on a sequence of examples that is realizable by $\ch_{\Psi,R}$, then it makes at most $O(R)$ mistakes.
\item The predictions made by the perceptron algorithm, are affected only by previous {\em erroneous} predictions.
\end{itemize}
Based on these two properties,  the compression scheme proceeds as follows: Given a realizable sample $S=\{(x_1,y_1),\ldots,(x_m,y_m)\}$, it runs the preceptron algorithm $\Omega(R)$ times on the sequence $(x_1,y_1),\ldots,(x_m,y_m)$ (without a reset between consecutive runs). By the first property, in at least one of these runs, the preceprton will make no mistakes on the sequence $(x_1,y_1),\ldots,(x_m,y_m)$ (otherwise, there would be $\Omega(R)$ mistakes in total). The output of the compression step would be the erroneous examples previous to this sequence. By the first property, the number of such examples is $O(R)$. 
The decompression will run the preceptron on these examples, and output the hypothesis $h:\cx\to\cy$, such that $h(x)$ is the prediction of the perceptron on $x$, after operating on these examples. By the second property, $h$ is correct on every $x_i$.

We proceed to the second part. By \lemref{lem:disjoint_Gdim} and
\lemref{lem:first_Cantor}, it is enough to show that a disjoint
union of $\Omega(R)$ copies of $\ch_{\cx,\mathrm{Cantor}}$, with
$|\cx|=\Omega(\log(|\cy|))$, can be realized by $\ch_{\Psi,R}$ for an
appropriate mapping $\Psi:\cx\times \cy\to B^d$ for some $d>0$. By
\lemref{lem:disjoint_embed}, it is enough to show that, for some
universal constant $C>0$, $\ch_{\cx,\mathrm{Cantor}}$, with
$|\cx|=\Omega(\log(|\cy|))$, can be realized by $\ch_{\Psi,C}$ for an
appropriate mapping $\Psi:\cx\times \cy\to B^d$ for some $d>0$.

Without loss of generality, we assume that $|\cy|-1$ is a power of $2$ (otherwise, use only $k$ labels, where $k$ is the largest integer such that $k-1$ is a poser of $2$ and $k\le |\cy|$). Denote $k=|\cy|-1$. Fix some finite set $\cx$ of cardinality $\log(|\cy|-1)$. By renaming the labels, we can assume that $\cy=2^{\cx}\cup\{*\}$.

Let $\{e_{y}\}_{y\in \cy}$ be a collection of unit vectors in $\reals^d$ with the property that for $y_1\ne y_2$,
\begin{equation}\label{eq:3}
|\inner{e_{y_1}, e_{y_2}}|<\frac{1}{100}~.
\end{equation}
\begin{remark}
Clearly, it is possible to find such a collection when $d=k+1$ (simply take $\{e_{y}\}_{y\in \cy}$ to be an orthogonal basis of $\reals^{k+1}$). However, equation (\ref{eq:3}) requires the collection to be just ``almost orthogonal". Such a collection can be found in $\reals^d$ for $d=O(\log(k))$ (see, e.g. \cite{Matousek02}, chapter 13).
\end{remark}
Define $\Psi:\cx\times\cy\to  B^d$ by
\[
\forall A\subset\cx,\;\;\Psi(x,A)=\begin{cases}
e_A & x\in A
\\
0 & x\not\in A
\end{cases}
\]
\[
\Psi(x,*)=e_*
\]
The following claim establishes the proof of \thmref{thm:gen_margin}.
\begin{claim}
$\ch_{\cx,\mathrm{Cantor}}$ is realized by $\ch_{\Psi,8}$.
\end{claim}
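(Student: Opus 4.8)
The plan is to establish the stronger containment $\ch_{\cx,\mathrm{Cantor}}\subseteq\ch_{\Psi,8}$, i.e.\ to produce, for each $B\subseteq\cx$, a weight vector $w=w_B\in\reals^d$ with $\|w\|^2\le 8$ whose margin classifier $h_w$ agrees with $h_B$ on all of $\cx$. Imitating the dimension-based construction used in the proof of \thmref{thm:gen_non_margin}, the obvious first guess is $w=e_B+e_*$; but this fails, because for $x\in B$ the margin of the correct label $B$ against the label $*$ is $\inner{e_B+e_*,\,e_B-e_*}=\|e_B\|^2-\|e_*\|^2=0<1$. The remedy is to break the symmetry: I would take $w=a\,e_B+b\,e_*$ with constants $a>b>0$ to be chosen.

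Then I would verify the margin conditions by splitting on whether $x\in B$. If $x\in B$, then $\Psi(x,B)=e_B$, and $h_w(x)=B$ demands $\inner{w,\,e_B-\Psi(x,y)}\ge 1$ for every $y\ne B$. Expanding $w=a\,e_B+b\,e_*$ and bounding every inner product between distinct $e$'s by $\tfrac1{100}$ in absolute value, the three cases $y=A$ with $x\in A$ (so $\Psi(x,A)=e_A$), $y=A$ with $x\notin A$ (so $\Psi(x,A)=0$), and $y=*$ yield the lower bounds $a\tfrac{99}{100}-b\tfrac{2}{100}$, $\;a-b\tfrac{1}{100}$, and $(a-b)\bigl(1-\inner{e_B,e_*}\bigr)\ge(a-b)\tfrac{99}{100}$, respectively. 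If $x\notin B$, then $\Psi(x,*)=e_*$, and $h_w(x)=*$ demands $\inner{w,\,e_*-\Psi(x,y)}\ge 1$ for every $y\ne *$; note any $A$ with $x\in A$ automatically satisfies $A\ne B$, and the analogous lower bounds are $b\tfrac{99}{100}-a\tfrac{2}{100}$ and $b-a\tfrac{1}{100}$. Collecting the binding inequalities, it suffices to pick $a>b>0$ with $a-b\ge\tfrac{100}{99}$, with $a\tfrac{99}{100}-b\tfrac{2}{100}\ge 1$ and $b\tfrac{99}{100}-a\tfrac{2}{100}\ge 1$, and with $\|w\|^2=a^2+b^2+2ab\inner{e_B,e_*}\le a^2+b^2+\tfrac{2ab}{100}\le 8$.

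I would then exhibit an explicit choice meeting all of these, for instance $a=\tfrac{23}{10}$, $b=\tfrac{6}{5}$ (that is, $w=\tfrac{23}{10}e_B+\tfrac{6}{5}e_*$): here $a-b=\tfrac{11}{10}>\tfrac{100}{99}$, the two mixed inequalities reduce to $2.253\ge1$ and $1.142\ge1$, and $\|w\|^2\le \tfrac{529}{100}+\tfrac{36}{25}+\tfrac{276}{5000}<7<8$. Since every required margin is at least $1>0$, $h_w$ returns a genuine label (never $\circleddash$) on each $x\in\cx$, and that label is exactly $h_B(x)$; hence $h_B\in\ch_{\Psi,8}$ for every $B\subseteq\cx$, which proves the claim (and, via the reductions preceding it, the second part of \thmref{thm:gen_margin}).

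The routine part is the handful of inner-product estimates above, each of the shape ``leading term $\pm$ small multiples of $\tfrac1{100}$''. The one genuine point — and the reason the norm budget is $8$ rather than something close to $2$ — is the issue flagged in the first paragraph: comparing $B$ against $*$ forces the coefficients of $e_B$ and $e_*$ apart, while the case $x\notin B$ forces the coefficient of $e_*$ to be at least roughly $1$; together these push both coefficients above constants $>1$, so $\|w\|^2$ is already on the order of $5$ to $7$, and one must confirm that a feasible choice still fits under $8$.
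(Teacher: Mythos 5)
Your proposal is correct and takes essentially the same approach as the paper: the paper also establishes $\ch_{\cx,\mathrm{Cantor}}\subseteq\ch_{\Psi,8}$ by taking $w$ to be a linear combination of $e_B$ and $e_*$ with the $e_B$ coefficient strictly larger (it uses $w=\tfrac{100}{45}\bigl(e_B+\tfrac12 e_*\bigr)$, i.e.\ roughly $a\approx 2.22$, $b\approx 1.11$), then verifies the margin conditions by the same case analysis on $x\in B$ versus $x\notin B$. The only differences are your choice of constants $(a,b)=(2.3,1.2)$ and your explicit explanation of why the symmetric choice $a=b$ fails, which the paper leaves implicit.
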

\proof
We will show that
$\ch_{\cx,\mathrm{Cantor}}\subseteq \ch_{\Psi,8}$. Let $B\subseteq \cx$. We must show that $h_B\in\ch_{\Psi,8}$. Let $w=W\cdot(e_{B}+\frac{1}{2}e_{*})$ for $W=\frac{100}{45}$. We claim that the hypothesis in $\ch_{\Psi,8}$ that corresponds to $w$ is $h_B$. Indeed, let $x\in \cx$. We split into the cases $x\in B$ and $x\notin B$. 

{\bf Case 1 ($x\in B$):} We must show that for every $y\in \cy\setminus\{B\}$,
\[
\inner{w,\Psi(x,B)}\ge 1+\inner{w,\Psi(x,y)}~.
\]
Note that
\[
\inner{w,\Psi(x,B)}=\left\langle W\cdot\left(e_{B}+\frac{1}{2}e_{*}\right), e_B \right\rangle=W\left(1+\frac{1}{2}\inner{e_*,e_B}\right)\ge W\left(1-\frac{1}{100}\right)~.
\]
Now, if $y\in \cy\setminus\{B\}$ then either $y\subseteq \cx$ and $x\not\in y$. In this case, $\inner{w,\Psi(x,y)}=\inner{w,0}=0$. In the remaining cases,
\[
\inner{w,\Psi(x,y)}=\left\langle W\cdot\left(e_{B}+\frac{1}{2}e_{*}\right), e_y \right\rangle=W\left(\inner{e_y,e_B}+\frac{1}{2}\inner{e_*,e_B}\right)\le W\frac{1}{50}~.
\]
It follows that
\[
\inner{w,\Psi(x,B)}-\inner{w,\Psi(x,y)}\ge \frac{24}{25}W\ge 1~.
\]

{\bf Case 2 ($x\notin B$):} We must show that for every $y\in \cy\setminus\{*\}$,
\[
\inner{w,\Psi(x,*)}\ge 1+\inner{w,\Psi(x,y)}~.
\]
Note that
\[
\inner{w,\Psi(x,*)}=\left\langle W\cdot\left(e_{B}+\frac{1}{2}e_{*}\right), e_* \right\rangle=W\left(\inner{e_B,e_*}+\frac{1}{2}\right)\ge W\left(\frac{1}{2}-\frac{1}{100}\right)~.
\]
Now, suppose that $A=y\in \cy\setminus\{*\}$. If $x\notin A$ then,
\[
\inner{w,\Psi(x,y)}=\left\langle W\cdot\left(e_{B}+\frac{1}{2}e_{*}\right), 0 \right\rangle=0\le \frac{1}{25}W~.
\]
If $x\in A$ then $A\ne B$. Therefore,
\[
\inner{w,\Psi(x,y)}=\left\langle W\cdot\left(e_{B}+\frac{1}{2}e_{*}\right), e_A \right\rangle=W\left(\inner{e_B,e_A}+\frac{1}{2}\inner{e_*,e_A}\right)\le \frac{1}{25}W~.
\]
It follows that
\[
\inner{w,\Psi(x,*)}-\inner{w,\Psi(x,y)}\ge \frac{45}{100}W\ge 1~.
\]

\subsection{Proof of \thmref{thm:struc_margin}}
The first part of the theorem follows directly from the first part of \thmref{thm:gen_margin}. We proceed to the second part. First, we note that $\ch_{d,t,2,R}$ can be realized by $\ch_{d,t,q,R}$. Therefore, it is enough to restrict to the case $q=2$. To simplify notations, we denote $\ch_{d,t,2,R}$ by $\ch_{d,t,R}$. Also, the label space of $\ch_{d,t,R}$ will be $\{0,1\}^t$ instead of $[2]^q$.

By \lemref{lem:disjoint_Gdim} and \lemref{lem:second_Cantor}, it is
enough to show that a disjoint union of $\Omega(R)$ copies of
$\ch_{\cy,\mathrm{Cantor}}$, with $|\cy|=\Omega(t)$, can be realized
by $\ch_{d,t,R}$ for $d\ge (t+1)R$. By \lemref{lem:disjoint_embed},
it is enough to show that, for some universal constant $C>0$,
$\ch_{\cy,\mathrm{Cantor}}$, with $|\cy|=t+1$, can be realized by
$\ch_{t+1,t,C}$. Indeed:

\begin{claim}
Let $\tilde{\cy}=[t]$. The class $\ch_{\cy,\mathrm{Cantor}}$ is realized by $\ch_{t+1,t,128}$.
\end{claim}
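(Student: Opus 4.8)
The plan is to establish the stated realization by exhibiting explicit maps $\Gamma\colon 2^{[t]}\to (B^{t+1})^t$ and $\Lambda\colon \{0,1\}^t\cup\{\circleddash\}\to\cy\cup\{\circleddash\}$ and, for each $y_0\in\cy=[t]\cup\{*\}$, a weight $w_{y_0}$ with $\|w_{y_0}\|^2\le 128$, such that $h_{y_0}=\Lambda\circ h_{w_{y_0}}\circ\Gamma$; this yields $\ch_{\cy,\mathrm{Cantor}}\subseteq\Lambda\circ\ch_{t+1,t,128}\circ\Gamma$, which is exactly the assertion. Concretely I would take $\Gamma(A)$ to be the matrix in $M_{(t+1)\times t}$ whose $i$-th column is the standard basis vector $e_i\in\reals^{t+1}$ when $i\in A$ and $0$ otherwise (every column lies in $B^{t+1}$, so $\Gamma(A)\in (B^{t+1})^t$); I would let $\Lambda$ send the one-hot label $e_j$ to $j$ for each $j\in[t]$, send $\circleddash$ to $*$, and send every remaining element of $\{0,1\}^t$ to $*$ as well (its value there is immaterial); and for $y_0\in[t]$ I would let $w_{y_0}\in M_{(t+1)\times 2}$ be the matrix whose column indexed by the letter value $1$ equals $4e_{y_0}$ and whose column indexed by the letter value $0$ is $0$, so that $\|w_{y_0}\|^2=16\le 128$, while $w_*:=0$.

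Everything then hinges on a single computation. Since $w_{y_0}$ is supported on the letter-value-$1$ column and on coordinate $y_0$ there, and since the only column of $\Gamma(A)$ with a nonzero $y_0$-th coordinate is the $y_0$-th one --- which is nonzero (equal to $e_{y_0}$) precisely when $y_0\in A$ --- we get, for every $A\subseteq[t]$ and every label $y'\in\{0,1\}^t$,
\[
\inner{w_{y_0},\Psi(\Gamma(A),y')}=\frac{2\cdot 1[\,y_0\in A\text{ and }y'_{y_0}=1\,]}{|\{i:y'_i=1\}|}\,,
\]
where the right-hand side is read as $0$ when $\{i:y'_i=1\}=\emptyset$ (empty averages being $0$), and the denominator is exactly the averaging normalization built into the structured-output embedding $\Psi$.

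From here the verification is a short case analysis. If $y_0\in A$: the displayed value is $2$ at $y'=e_{y_0}$, is $0$ whenever $y'_{y_0}=0$, and is at most $1$ for every other $y'$ (any such $y'$ has at least two coordinates equal to $1$); hence $e_{y_0}$ beats every competing label by a margin at least $1$, so $h_{w_{y_0}}(\Gamma(A))=e_{y_0}$, which $\Lambda$ decodes to $y_0=h_{y_0}(A)$. If $y_0\notin A$ (with $y_0\in[t]$): the displayed value is $0$ for every label, so no label attains margin $1$ over all others, $h_{w_{y_0}}(\Gamma(A))=\circleddash$, and $\Lambda$ decodes this to $*=h_{y_0}(A)$. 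Finally $w_*=0$ gives $h_{w_*}\equiv\circleddash$, decoded to $*=h_*(A)$ for every $A$. Thus $h_{y_0}=\Lambda\circ h_{w_{y_0}}\circ\Gamma$ in all cases.

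The step that genuinely needs care --- rather than the routine algebra above --- is that one fixed $w_{y_0}$ must simultaneously defeat all $2^t-1$ competing labels on all $2^t$ instances $\Gamma(A)$. This is precisely where the normalization by $|\{i:y'_i=1\}|$ in the definition of $\Psi$ works in our favour: it penalizes labels that spread their mass, so that the singleton label $e_{y_0}$ is the unique maximizer by a uniform margin. A minor modelling point is that the margin classifier abstains on instances encoding sets that miss $y_0$, which is handled cleanly by the choice $\Lambda(\circleddash)=*$, legitimate since $\Lambda$ is an arbitrary label-translation map. The passage from this single-copy claim to $\ch_{d,t,q,R}$ for general parameters, together with the disjoint-union bookkeeping, is then supplied by Lemmas~\ref{lem:disjoint_embed} and~\ref{lem:disjoint_Gdim}, already invoked in the surrounding argument.
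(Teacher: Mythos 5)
Your algebra is correct: with your $\Gamma$ (columns $e_i$ or $0$), $w_{y_0}$ (one active column $4e_{y_0}$, norm $16\le 128$), and the normalization $\frac1q$ times an average built into $\Psi$ for $q=2$, the displayed formula for $\inner{w_{y_0},\Psi(\Gamma(A),y')}$ is right, and the margin case analysis shows $\Lambda\circ h_{w_{y_0}}\circ\Gamma = h_{y_0}$ and $\Lambda\circ h_{w_*}\circ\Gamma=h_*$. So the set-theoretic inclusion $\ch_{\cy,\mathrm{Cantor}}\subseteq\Lambda\circ\ch_{t+1,t,128}\circ\Gamma$ holds, and the claim, read literally, is proved.

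However, your construction differs from the paper's in a way that undermines the reason the claim is being proved, namely to push the graph-dimension lower bound (via Lemmas~\ref{lem:disjoint_embed}, \ref{lem:disjoint_Gdim} and \lemref{lem:second_Cantor}) into $\ch_{d,t,q,R}$. You realize the ``$*$'' outputs of $h_{y_0}$ by having $h_{w_{y_0}}$ \emph{abstain} (output $\circleddash$) whenever $y_0\notin A$, and you patch this up with the decoding $\Lambda(\circleddash)=*$. The paper instead puts a baseline $\tfrac14 e_*$ into every column of $\Gamma(A)$ and a penalty $-8e_*$ into $W$, precisely so that $h_W(\Gamma(A))$ is always a \emph{definite} label ($e_i$ when $i\in A$, $0$ when $i\notin A$); this is what costs them the full norm budget $128$ versus your $16$. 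The distinction matters because $G$-shattering requires the shattering witnesses in $\ch_{t+1,t,128}$ to take values in $\cy=\{0,1\}^t$ on the shattered set, not $\circleddash$. With the paper's construction, the shattering of $\{A_1,\dots,A_r\}$ by $\ch_{\cy,\mathrm{Cantor}}$ lifts to a genuine $G$-shattering of $\{\Gamma(A_1),\dots,\Gamma(A_r)\}$ by $\ch_{t+1,t,128}$: take the witness $\tilde g\equiv 0$, and for each $B\subseteq[r]$ the classifier $h_{W_{y_B}}$ outputs $0$ exactly on $\{\Gamma(A_i):i\in B\}$ and a one-hot label elsewhere, never $\circleddash$. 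With your construction, $h_{w_{y_B}}(\Gamma(A_i))=\circleddash$ on exactly the instances where it is supposed to \emph{agree} with the witness, and since a $G$-shattering witness cannot take the value $\circleddash$, the lift breaks down: your realization does not (at least not by this route) yield $\Gdim(\ch_{t+1,t,128})=\Omega(\log t)$. Indeed, the general assertion that realization via $\Lambda$ with $\Lambda(\circleddash)\in\cy$ preserves graph dimension is false, which is why the paper's construction is engineered to never produce $\circleddash$ on $\Gamma(\cx)$. So while your argument establishes the stated claim, it should be amended to produce a non-abstaining realization (the $e_*$-baseline trick, or some equivalent) before it can serve its role in the proof of \thmref{thm:struc_margin}.
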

\begin{proof}
Recall that the instance space of $\ch_{\cy,\mathrm{Cantor}}$ is $\cx=2^{[t]}$. Also, let $e_*:=e_{t+1}\in B^{t+1}$.
Consider the mapping $\Gamma:\cx\to (B^{t+1})^{t}$ defined as follows. For every $A\in\cx$, $\Gamma(A)$ is the matrix whose $i$'th column is $\frac{1}{2}e_{i}+\frac{1}{4}e_{*}$ if $i\in A$ and $\frac{1}{4}e_{*}$ otherwise. Let $\Lambda:\{0,1\}^t\cup\{\circleddash\}\to [t]\cup\{*\}$ be any mapping that maps $e_i\in \{0,1\}^t$ to $i$ and $0\in \{0,1\}^t$ to $*$. To establish the claim we will show that
\[
\ch_{\cy,\mathrm{Cantor}}\subseteq \Lambda\circ\ch_{t+1,t,128}\circ\Gamma~.
\]
We must show that for every $i\in [t]$, $h_i\in \Lambda\circ\ch_{t+1,t,128}\circ\Gamma$ and that $h_*\in \Lambda\circ\ch_{t+1,t,128}\circ\Gamma$. We start with $h_i$. Let $W\in M_{(t+1)\times 2}$ be the matrix whose left column is $0$ and whose right column is $8e_i-8e_{*}$. Let $h_{W}\in \ch_{t+1,t,128}$ be the hypothesis corresponding to $W$. We claim that $h_{i}=\Lambda\circ h_W\circ\Gamma$. Indeed, let $A\in\cx$. We must show that $\Lambda(h_W(\Gamma(A)))=h_i(A)$. By the definition of $\Lambda$ and $h_i$, it is enough to show that $h_W(\Gamma(A))=e_i$ if $i\in A$, while $h_W(\Gamma(A))=0$ otherwise. 
Let $\Psi:(B^{t+1})^t\times\{0,1\}^t\to M_{t+1,2}$ be the mapping for which $\ch_{t+1,t,128}=\ch_{\Psi,128}$.
Since the left column of $W$ is zero, we have that $\inner{W,\Psi(\Gamma(A),0)}=0$, and for $0\ne y\in\{0,1\}^t$,
\begin{eqnarray*}
\inner{W,\Psi(\Gamma(A),y)}&=&\frac{1}{2\cdot |\{j\mid y_j=1\}|}\sum_{j\mid y_j=1}\inner{4e_i-4e_{*},(\Gamma(A))^j}
\\
&=&\frac{1}{|\{j\mid y_j=1\}|}\sum_{j\mid y_j=1}(2\cdot 1[i=j\text{ and }i\in A]-1)
\\
&=&\frac{2\cdot 1[i\in A\text{ and }y_i=1]}{|\{j\mid y_j=1\}|}-1~.
\end{eqnarray*}
It follows that if $i\in A$ then $\inner{W,\Psi(\Gamma(A),e_i)}=1$ while $\inner{W,\Psi(\Gamma(A),y)}\le 0$ for every $y\ne e_i$. Therefore, $h_W(\Gamma(A))=e_i$.
If $i\notin A$ then $\inner{W,\Psi(\Gamma(A),0)}=0$ while $\inner{W,\Psi(\Gamma(A),y)}\le -1$ for every $y\ne 0$. Therefore $h_W(\Gamma(A))=0$.

The fact that $h_*\in \Lambda\circ\ch_{t+1,t,128}\circ\Gamma$ follows from a similar argument, where $W\in M_{(t+1)\times 2}$ is the matrix whose left column is $0$ and whose right column is $-8e_{*}$. It is not hard to see that if $h_{W}\in \ch_{t+1,t,128}$ is the hypothesis corresponding to $W$, we have $h_{*}=\Lambda\circ h_W\circ\Gamma$.
\end{proof}

\subsection{Proof of \thmref{thm:struc_non_margin}}
The first part of the theorem follows directly from the first part of
\thmref{thm:gen_non_margin}. We proceed to the second
part. First, by the following lemma, it is enough to restrict
ourselves to the case $q=2$. Given two hypothesis classes $\ch\subseteq \cy^{\cx}$ and $\ch'\subseteq {\cy'}^{\cx'}$, we say that $\ch'$ {\em finitely realizes} $\ch$ if, for every finite $\cu\subset\cx$, $\ch'$ realizes $\ch|_{\cu}$. It is clear that in this case $\Gdim\left(\ch'\right)\ge \Gdim\left(\ch\right)$.
\begin{lemma}
For every $d,t$ and $q\ge 2$, a disjoint union of $\lfloor\frac{q}{2}\rfloor$ copies of $\ch_{d,t,2}$ is finitely realized by $\ch_{d+2,t,q}$
\end{lemma}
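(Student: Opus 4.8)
The plan is to use two facts: that the classes $\ch_{d,t,q}$ predict \emph{coordinatewise}, and that $\floor{q/2}$ independent binary classifiers can be packed into a single $q$-label classifier by splitting the $q$ labels into $\floor{q/2}$ pairs and using two extra coordinates to ``select'' the active pair. The first step is to record the coordinatewise structure. For $W\in M_{(d+2)\times q}$ and $X\in M_{(d+2)\times t}$ one has $\inner{W,\Psi_1(X,y)}=\sum_{i=1}^{t}\inner{W^{y_i},X^i}$, so a maximizing $y$ is obtained by choosing in each coordinate $i$ a maximizer of $j\mapsto\inner{W^j,X^i}$. Hence $h_W(X)$ is the coordinatewise $\argmax$, and it equals $\circleddash$ precisely when some coordinate has a tie for the maximum. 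The same holds for $\ch_{d,t,2}$, so a hypothesis of $\ch_m:=(\ch_{d,t,2})_m$ (the disjoint union of $m:=\floor{q/2}$ copies) is given by matrices $W_1,\dots,W_m\in M_{d\times 2}$, and on input $(X,\ell)$ its $i$-th coordinate records the sign of $\inner{W_\ell^1,X^i}-\inner{W_\ell^2,X^i}$.

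Next I would fix the reduction maps, independently of the finite set to be handled. Choose pairwise distinct unit vectors $u_1,\dots,u_m\in\reals^2$ and set $\rho:=\max_{\ell\ne\ell'}\inner{u_\ell,u_{\ell'}}<1$. Define $\Gamma$ on the instance space $(\reals^d)^t\times[m]$ of $\ch_m$ by letting $\Gamma(X,\ell)\in M_{(d+2)\times t}$ be the matrix whose $i$-th column has first $d$ entries $X^i$ and last two entries $u_\ell$. Define $\Lambda\colon[q]^t\cup\{\circleddash\}\to[2]^t\cup\{\circleddash\}$ coordinatewise by $2\ell-1\mapsto 1$ and $2\ell\mapsto 2$ (the leftover label $q$, present when $q$ is odd, mapped arbitrarily), and $\circleddash\mapsto\circleddash$.

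Now, given a finite $\cu\subseteq(\reals^d)^t\times[m]$ and a hypothesis $f\in\ch_m|_{\cu}$ with components $W_1,\dots,W_m$, I would build $W\in M_{(d+2)\times q}$ whose column $2\ell-1$ has first block $W_\ell^1$ and last block $s\,u_\ell$, whose column $2\ell$ has first block $W_\ell^2$ and last block $s\,u_\ell$, and whose leftover column (if any) is $0$, where $s$ is chosen large enough that $s(1-\rho)>2RM$ (and $s>RM$), with $R:=\max\{\|X^i\|:(X,\ell)\in\cu\}$ and $M:=\max_{\ell,j}\|W_\ell^j\|$. For $(X,\ell)\in\cu$ and each coordinate $i$ one checks directly that the two columns of the pair $\ell$ evaluate to $\inner{W_\ell^1,X^i}+s$ and $\inner{W_\ell^2,X^i}+s$; every column of a pair $\ell'\ne\ell$ evaluates to at most $RM+s\rho<s-RM$; and the leftover column evaluates to $0<s-RM$. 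Hence the coordinate-$i$ maximum is attained only inside $\{2\ell-1,2\ell\}$, and it is at $2\ell-1$, at $2\ell$, or tied exactly according to whether $\inner{W_\ell^1,X^i}$ is larger than, smaller than, or equal to $\inner{W_\ell^2,X^i}$ --- matching $h_{W_\ell}(X)_i$. Consequently $h_W(\Gamma(X,\ell))$ lies in $\{2\ell-1,2\ell\}^t$ (or is $\circleddash$ exactly when $h_{W_\ell}(X)=\circleddash$), and $\Lambda(h_W(\Gamma(X,\ell)))=h_{W_\ell}(X)=f(X,\ell)$. This gives $\ch_m|_{\cu}\subseteq\Lambda\circ\ch_{d+2,t,q}\circ\Gamma$, i.e.\ $\ch_{d+2,t,q}$ realizes $\ch_m|_{\cu}$; since $\cu$ was arbitrary finite, $\ch_m$ is finitely realized by $\ch_{d+2,t,q}$, and therefore $\Gdim(\ch_{d+2,t,q})\ge\Gdim(\ch_m)$.

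The step that needs the most care is the $\circleddash$ bookkeeping in the last paragraph: one must ensure that the coordinatewise $\argmax$ of the $q$-label classifier can be tied only \emph{within} a single pair $\{2\ell-1,2\ell\}$, and never across two pairs or with the leftover column, so that the $\circleddash$-outputs of the two classes correspond exactly; this is precisely what the gap inequality $s(1-\rho)>2RM$ guarantees. It is worth noting that $s$ is allowed to depend on both $\cu$ and $f$ (through $R$ and $M$), which is exactly why one obtains only finite realization rather than full realization --- over an infinite instance space $R$ need not be finite.
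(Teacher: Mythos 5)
Your proposal is correct and matches the paper's proof in all essentials. Both arguments append a per-copy two-dimensional ``selector'' vector to every column of $X$ (you use arbitrary distinct unit vectors $u_\ell$; the paper uses $u_j=\bigl(\cos(2\pi j/r),\sin(2\pi j/r)\bigr)$, a special case), pair up labels $\{2\ell-1,2\ell\}$ with the same coordinatewise $\Lambda$, and embed $W_1,\dots,W_m$ as column-blocks of a big $W\in M_{(d+2)\times q}$ with the selector scaled by a large constant so the correct pair dominates all other columns. The paper leaves the scale choice implicit (``for large enough $M$''), whereas you make it explicit with the bounds $R$, $M$, $\rho$ and the inequality $s(1-\rho)>2RM$, and you spell out the $\circleddash$ bookkeeping and why the scale's dependence on $\cu$ and $f$ is what makes this a \emph{finite} realization. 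This is a slightly more careful writeup of the same construction, not a different route.
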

\begin{proof}
For simplicity, assume that $q$ is even and let $r=\frac{q}{2}$. Let $X_1,\ldots,X_r$ be finite subsets of $M_{d,t}$. We should show that there is a mapping $\Gamma:X_1\dot\cup\ldots\dot\cup X_r\to M_{d+2,t}$ and a mapping $\Lambda:[q]^t\to [2]^t$ such that
\begin{equation}\label{eq:5}
\left(\ch_{d,t,2}\right)_m|_{X_1\dot\cup\ldots\dot\cup X_r}
\subset
\left(\Lambda\circ \ch_{d+2,t,q}\circ \Gamma\right)|_{X_1\dot\cup\ldots\dot\cup X_r}
\end{equation}
For $x\in X_j$ we define
\[
\Gamma(x)=\left(x^T,\cos\left(j\frac{ 2\pi}{r}\right),\sin\left(j\frac{ 2\pi}{r}\right)\right)^T
\]
Also, let $\lambda:[q]\to [2]$ be the function that maps odd numbers to $1$ and even numbers to $2$. Finally, define $\Lambda: [q]^t\to [2]^t$ by $\Lambda(y_1,\ldots,y_t)=(\lambda(y_1),\ldots,\lambda(y_t))$. We claim that (\ref{eq:5}) holds with these $\Lambda$ and $\Gamma$.

Indeed, let $W_1,\ldots, W_r\in M_{d\times 2}$. We should show that the function $g\in \left(\ch_{d,t,2}\right)_m|_{X_1\dot\cup\ldots\dot\cup X_r}$ defined by these function is of the form $\left(\Lambda\circ h\circ \Gamma\right)|_{X_1\dot\cup\ldots\dot\cup X_r}$ for some $h\in \ch_{d+2,t,q}$. Fix $M>0$ and let $h$ be the hypothesis defined by the matrix $W\in M_{d+2,q}$ defined as follows
\[
W=\begin{bmatrix}
W_1^1 & W_1^2 & W^1_2 & W^2_2 & \cdots & W^1_r & W_r^2
\\
M\cos\left(\frac{ 2\pi}{r}\right) & M\cos\left(\frac{ 2\pi}{r}\right)& 
M\cos\left(2\frac{ 2\pi}{r}\right) & M\cos\left(2\frac{ 2\pi}{r}\right)& \cdots & 
M\cos\left(r\frac{ 2\pi}{r}\right) & M\cos\left(r\frac{ 2\pi}{r}\right)
\\
M\sin\left(\frac{ 2\pi}{r}\right) & M\sin\left(\frac{ 2\pi}{r}\right)& 
M\sin\left(2\frac{ 2\pi}{r}\right) & M\sin\left(2\frac{ 2\pi}{r}\right)& \cdots &
M\sin\left(r\frac{ 2\pi}{r}\right) & M\sin\left(r\frac{ 2\pi}{r}\right)
\end{bmatrix}
\]
It is not hard to check that for large enough $M$, $g=\left(\Lambda\circ h\circ \Gamma\right)|_{X_1\dot\cup\ldots\dot\cup X_r}$
\end{proof}

Next we prove \thmref{thm:struc_non_margin} for $q=2$. To
simplify notation, we let $\ch_{d,t}:=\ch_{d,t,2}$.  We make one
further reduction, showing that it is enough to prove the theorem for
the case $d=3$.  Indeed, by \lemref{lem:disjoint_Gdim} and
\lemref{lem:second_Cantor}, it is enough to show that a disjoint union
of $\Omega(d)$ copies of $\ch_{\cy,\mathrm{Cantor}}$, with
$|\cy|=\Omega(k)$, can be realized by $\ch_{d,t}$. By
\lemref{lem:disjoint_embed}, it is enough to show that, for some
universal constant $C>0$ (we will take $C=3$),
$\ch_{\cy,\mathrm{Cantor}}$, with $|\cy|=t+1$, can be realized by
$\ch_{C,t}$. Indeed:
\begin{claim}
Let $\tilde{\cy}=[t]$. The class $\ch_{\cy,\mathrm{Cantor}}$ is realized by $\ch_{3,t}$.
\end{claim}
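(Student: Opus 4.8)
The plan is to adapt the three-dimensional ``circle'' construction from the proof of the second part of \thmref{thm:gen_non_margin} to the structured-output embedding. Realizing $\ch_{\cy,\mathrm{Cantor}}$ by $\ch_{3,t}$ means producing a map $\Gamma\colon 2^{[t]}\to M_{3\times t}$ (the instance space of $\ch_{3,t}$) and a map $\Lambda\colon\{0,1\}^t\cup\{\circleddash\}\to[t]\cup\{*\}\cup\{\circleddash\}$ with $\ch_{\cy,\mathrm{Cantor}}\subseteq\Lambda\circ\ch_{3,t}\circ\Gamma$. For $i\in[t]$ put $\theta_i=\frac{2\pi i}{t}$ and $u_i=(\cos\theta_i,\sin\theta_i,1)\in\reals^3$, and put $u'=(0,0,1)$; thus the $u_i$ are distinct points on the unit circle of the affine plane $\{z=1\}$ and $u'$ is the center of that circle. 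I would let $\Gamma(A)$ be the matrix whose $i$-th column equals $u_i$ if $i\in A$ and equals $u'$ if $i\notin A$, and I would let $\Lambda$ send the ``indicator'' label $e_i\in\{0,1\}^t$ (the string with a single $1$, in coordinate $i$) to $i$, and send every other label --- in particular $\zero$ --- to $*$ (the value of $\Lambda$ on $\circleddash$ is irrelevant).

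The mechanism that makes this work is that a weight matrix of the form $W=[\,0\mid w\,]\in M_{3\times 2}$ with zero left column interacts very simply with the images of $\Gamma$: since $\inner{W,\Psi_1(\Gamma(A),z)}=\sum_{i:z_i=1}\inner{w,(\Gamma(A))^i}$ and the bits of $z$ are unconstrained, the maximizing label is obtained coordinatewise, namely $z^*_i=1$ precisely for those $i$ with $\inner{w,(\Gamma(A))^i}>0$, and it is the unique maximizer whenever no inner product vanishes. So I only need, for each target hypothesis, a linear functional $w$ that selects the correct columns. For $h_y$ with $y\in[t]$ I would take $w_y=(\cos\theta_y,\sin\theta_y,-c)$, where $c$ is any number with $\max_{i\ne j}\cos(\theta_i-\theta_j)<c<1$ and $c>0$ (such $c$ exists since $\cos(\theta_i-\theta_j)<1$ for distinct angles): then $\inner{w_y,u_i}=\cos(\theta_y-\theta_i)-c$, which is $1-c>0$ for $i=y$ and negative for $i\ne y$, while $\inner{w_y,u'}=-c<0$; hence for $W_y=[\,0\mid w_y\,]$ the unique maximizing label on $\Gamma(A)$ is $e_y$ if $y\in A$ and $\zero$ if $y\notin A$, so $(\Lambda\circ h_{W_y}\circ\Gamma)(A)$ equals $y$ when $y\in A$ and $*$ when $y\notin A$, i.e.\ it equals $h_y(A)$. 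For the constant hypothesis $h_*$ I would take $W_*=[\,0\mid(0,0,-1)\,]$; since every column of $\Gamma(A)$ has third coordinate $1$, $\inner{W_*,\Psi_1(\Gamma(A),z)}=-|\{i:z_i=1\}|$, uniquely maximized at $z=\zero$, so $(\Lambda\circ h_{W_*}\circ\Gamma)(A)=*=h_*(A)$ for every $A$. In each case the argmax is attained uniquely, so $h_{W}(\Gamma(A))$ is a genuine label (never $\circleddash$) and the composition with $\Lambda$ is well defined; collecting the cases yields $\ch_{\cy,\mathrm{Cantor}}\subseteq\Lambda\circ\ch_{3,t}\circ\Gamma$.

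The only real difficulty is the design itself --- fitting $t$ distinct ``coordinate detectors'' together with one common ``reject'' column into three dimensions. It is resolved by the two observations that points in convex position (here on a circle) are each linearly separable from all the rest, and that placing the reject vector $u'$ at the center of the circle lets it be lumped together with the non-selected points for all $t$ detectors at once; lifting the circle to the affine plane $\{z=1\}$ is what converts these affine separations into the homogeneous ones that $h_W$ provides. Everything else --- uniqueness of the various argmaxes and the inequality $\cos(\theta_i-\theta_j)<1$ --- is routine.
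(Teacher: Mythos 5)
Your proof is correct and follows essentially the same route as the paper's: both embed $[t]$ as equally-spaced points on a circle in $\reals^3$, use a weight matrix $W$ whose left column is zero so that $\inner{W,\Psi_1(X,z)}$ becomes a coordinatewise sum and the argmax is determined coordinatewise, and pick the right column to be a ``pole vector'' that isolates one circle point (or none, for $h_*$). Your presentation is a bit cleaner than the paper's (you place all columns on the affine plane $\{z=1\}$ and leave the threshold $c$ as a free parameter, whereas the paper lifts the circle to $\{z=0\}$ and fixes a normalization inherited from the margin-based case), but the underlying construction and argument are the same.
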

\begin{proof}[(sketch)]
The proof is similar to the proof of the second part of \thmref{thm:struc_margin}. Recall that the instance space of $\ch_{\cy,\mathrm{Cantor}}$ is $\cx=2^{[t]}$. For $i\in [t]$ define $\phi(i)=\left(\cos\left(\frac{i2\pi}{t}\right),\sin\left(\frac{i2\pi}{t}\right),0\right)$. Also, let
\[
\phi(*)=\left(0,0,\frac{1}{2}+\frac{1}{2}\cos\left(\frac{2\pi}{t}\right)\right)~.
\]
Consider the mapping $\Gamma:\cx\to (B^{3})^{t}$ defined as follows. For every $A\in\cx$, $\Gamma(A)$ is the matrix whose $i$'th column is $\frac{1}{2}\phi(i)+\frac{1}{2}\phi(*)$ if $i\in A$ and $\frac{1}{2}\phi(*)$ otherwise. Let $\Lambda:\{0,1\}^t\cup\{\circleddash\}\to [k]\cup\{*\}$ be any mapping that maps $e_i\in \{0,1\}^t$ to $i$ and $0\in \{0,1\}^t$ to $*$. To establish the claim we will show that
\[
\ch_{\cy,\mathrm{Cantor}}\subseteq \Lambda\circ\ch_{3,t}\circ\Gamma~.
\]
We must show that for every $i\in [t]$, $h_i\in \Lambda\circ\ch_{3,t}\circ\Gamma$ and that $h_*\in \Lambda\circ\ch_{3,t}\circ\Gamma$. We start with $h_i$. Let $W\in M_{3\times 2}$ be the matrix whose left column is $0$ and whose right column is $\phi(i)-e_3$. It is not hard to see that if $h_{W}\in \ch_{3,t}$ is the hypothesis corresponding to $W$, we have $h_{i}=\Lambda\circ h_W\circ\Gamma$.

For $h_*$, let $W\in M_{3\times 2}$ be the matrix whose left column is $0$ and whose right column is $-e_{3}$. It is not hard to see that for $h_{W}\in \ch_{3,t}$, we have $h_{*}=\Lambda\circ h_W\circ\Gamma$.
\end{proof}

\subsection{Proof of \thmref{thm:main_nat_dim}}
\begin{theorem}\label{thm:main_nat_dim}
For every $\Psi:\cx\times\cy\to\mathbb R^d$, $\Ndim (\ch_\Psi) \le d$.
\end{theorem}
\begin{proof}
Let $C\subseteq \cx$ be an $N$-shattered set, and let $f_0,f_1:C\to \cy$ be two functions that witness the shattering. We must show that $|C|\le d$. For every $x\in C$ let $\rho(x)=\Psi(x,f_0(x))-\Psi(x,f_1(x))$. We claim that $\rho(C)=\{\rho(x)\mid x\in C\}$ consists of $|C|$ elements (i.e. $\rho$ is one to one) and is shattered by the binary hypothesis class of homogeneous linear separators on $\mathbb{R}^d$,
\[
\ch=\{x\mapsto \sign(\langle w,x\rangle)\mid w\in \mathbb R^d\}~.
\]
Since $\VCdim(\ch)=d$, it will follow that $|C|=|\rho(C)|\le d$, as required. 

To establish our claim it is enough to show that $|\ch|_{\rho(C)}|=2^{|C|}$. Indeed, given a subset $B\subseteq C$, by the definition of $N$-shattering, there exists $h_B\in\ch_{\Psi}$ for which
\[
\forall x\in B,h_B(x)=f_0(x)\text{ and }\forall x\in C\setminus B,h_B(x)=f_1(x)~.
\]
It follows that there exists a vector $w_B\in \mathbb R^d$ such that, for every $x\in B$,
\[
\langle w,\Psi(x,f_0(x))\rangle >\langle w,\Psi(x,f_1(x))\rangle\Rightarrow \langle w,\rho(x)\rangle >0~.
\]
Similarly, for every $x\in C\setminus B$,
\[
\langle w,\rho(x)\rangle <0~.
\]
It follows that the hypothesis $g_B\in \ch$ defined by $w\in \mathbb R^d$ label the points in $\rho(B)$ by $1$ and the points in $\rho(C\setminus B)$ by $0$. It follows that if $B_1,B_2\subseteq C$ are two different sets then $(h_{B_1})|_{\rho(C)}\ne (h_{B_2})|_{\rho(C)}$. Therefore $|\ch|_C|=2^{|C|}$ as required.
\end{proof}

\begin{remark}[Tightness of \thmref{thm:main_nat_dim}] \thmref{thm:main_nat_dim} is tight for some functions $\Psi:\cx\times\cy\to \mathbb R^d$. For example, consider the case that $\cx=[d]$, $\cy=\{\pm 1\}$ and $\Psi(x,y)=y\cdot e_x$. It is not hard to see that $\ch_{\Psi}=\cy^{\cx}$. Therefore, $\Ndim(\ch_{\Psi})=\VCdim(\ch_{\Psi})=d$. On the other hand, the theorem is not tight for every $\Psi$. For example, if $|\cx|<d$, then for every $\Psi$, $\Ndim(\ch_{\Psi})\le |\cx|<d$.
\end{remark}

\end{document}